\newcommand{\bigCI}{\perp\!\!\!\perp}
\newcommand{\varA}[1]{{\operatorname{#1}}}
\algnewcommand{\LeftComment}[1]{\Statex \hspace{0.4cm} \(\triangleright\) #1}
\algnewcommand\algorithmicforeach{\textbf{for each}}
\providecommand{\doi}[1]{%
  \begingroup
    \let\bibinfo\@secondoftwo
    \urlstyle{rm}%
    \href{http://dx.doi.org/#1}{%
      doi:\discretionary{}{}{}%
      \nolinkurl{#1}%
    }%
  \endgroup
}
\newcommand{\BlackBox}{\rule{1.5ex}{1.5ex}}  
\newenvironment{proof}{\par\noindent{\bf Proof\ }}{\hfill\BlackBox\\[2mm]}
\newtheorem{theorem}{Theorem}
\newenvironment{breakablealgorithm}
  {
   \begin{center}
     \refstepcounter{algorithm}
     \hrule height.8pt depth0pt \kern2pt
     \renewcommand{\caption}[2][\relax]{
       {\raggedright\textbf{\ALG@name~\thealgorithm} ##2\par}%
       \ifx\relax##1\relax 
         \addcontentsline{loa}{algorithm}{\protect\numberline{\thealgorithm}##2}%
       \else 
         \addcontentsline{loa}{algorithm}{\protect\numberline{\thealgorithm}##1}%
       \fi
       \kern2pt\hrule\kern2pt
     }
  }{
     \kern2pt\hrule\relax
   \end{center}
  }
\journal{Information Sciences}
\begin{document}

\begin{frontmatter}

\title{Copula Index for Detecting Dependence and Monotonicity between Stochastic Signals}

\author[label1]{Kiran Karra\corref{cor1}}
\address[label1]{Virginia Tech \\ Department of Electrical and Computer Engineering \\ 900 N. Glebe Rd., Arlington, VA, 22031}

\cortext[cor1]{Corresponding Author}
\ead{kiran.karra@vt.edu}
\ead[url]{kirankarra.wordpress.com}

\author[label1]{Lamine Mili}
\ead{lmili@vt.edu}

\begin{abstract}
This paper introduces a nonparametric copula-based index for detecting the strength and monotonicity structure of linear and nonlinear statistical dependence between pairs of random variables or stochastic signals.  Our index, termed Copula Index for Detecting Dependence and Monotonicity (\textit{CIM}), satisfies several desirable properties of measures of association, including most of R\'enyi's properties, the data processing inequality (DPI), and consequently self-equitability.  Synthetic data simulations reveal that the statistical power of \textit{CIM} compares favorably to other state-of-the-art measures of association that are proven to satisfy the DPI.  Simulation results with real-world data reveal \textit{CIM}'s unique ability to detect the monotonicity structure among stochastic signals to find interesting dependencies in large datasets.  Additionally, simulations show that \textit{CIM} shows favorable performance to estimators of mutual information when discovering Markov network structure.
\end{abstract}

\begin{keyword}
copula \sep statistical dependency \sep monotonic \sep equitability \sep discrete
\end{keyword}

\end{frontmatter}

\section{Introduction}
A fundamental problem in exploratory data analysis involves understanding the organization and structure of large datasets.  An unsupervised approach to this problem entails modeling the features within these datasets as random variables and discovering the dependencies between them using measures of association.  Many measures of association have been introduced in the literature, including the correlation coefficient \cite{corrcoef}, \textit{MIC} \cite{mic}, the \textit{RDC} \cite{rdc}, the \textit{dCor} \cite{dcorr}, the \textit{Ccor} \cite{ccorr_cite}, and \textit{CoS} \cite{cos_cite}.  In addition, many estimators of mutual information such as the \textit{kNN} \cite{knn_mi}, the \textit{vME} \cite{vonMisesMI}, and the \textit{AP} \cite{shannonapMI} are used as measures of association.

However, properties of the dataset such as whether the data are discrete or continuous, linear or nonlinear, monotonic or nonmonotonic, noisy or not, and independent and identically distributed (\textit{i.i.d.}) or not, to name a few, are important factors to consider when deciding which measure(s) of association one may use when performing exploratory data analysis.  Because these properties are typically not known a priori, the task of selecting a single measure of association is difficult.  Additionally, a measure of association should also satisfy certain desirable properties, namely R\'enyi's properties \cite{Renyi1959}, the data processing inequality (DPI) \cite{mic_not_equitable}, and equitability \cite{mic}.  However, no measure satisfies all these properties while simultaneously being able to handle the different types and properties of data described.  For example, the most commonly used measure of statistical dependence, the correlation coefficient, only measures linear dependence.  Others such as the \textit{RDC} exhibit high bias and relatively weak statistical power for the basic (and arguably the most important \cite{mic_not_equitable,simonandtibs}) linear dependency structure, due to overfitting.  Finally, estimators of mutual information do not have a theoretical upper bound, meaning that the values can only be used in a relative sense.  Even though each of the aforementioned measures of association perform well in the conditions for which they were designed, they cannot be used as an omnibus solution to an exploratory data analysis problem.

To help address these shortcomings, we introduce a new index of nonlinear dependence, \textit{CIM}. This index is based on copulas and the rank statistic Kendall's $\tau$ \cite{kendalltau}, that naturally handles linear and nonlinear associations between continuous, discrete, and hybrid random variables (pairs of random variables where one is continuous and the other is discrete) or stochastic signals.  Additionally, \textit{CIM} provides good statistical power over a wide range of dependence structures and satisfies several desirable properties of measures of association including most of R\'enyi's properties and the data processing inequality.  Furthermore, it uniquely identifies regions of monotonicity in the dependence structure which provide insight into how the data should be modeled stochastically.  Due to these properties, \textit{CIM} is a powerful tool for exploratory data analysis.

This paper is organized as follows.  Section \ref{sec:ktauhat} introduces copulas, rank statistics, and modifying Kendall's $\tau$ to account for discrete and hybrid random variables.  Section \ref{sec:cim} introduces \textit{CIM} index, which builds upon the extension of Kendall's $\tau$ in Section \ref{sec:ktauhat} to handle both monotonic and non-monotonic dependencies and proposes an algorithm to estimate it.  Here, important properties which theoretically ground \textit{CIM} as a desirable measure of association are proved.  Additionally, an estimation algorithm and it's properties are discussed and it is shown that the algorithm is robust to hyperparameter selection.  Next, Section \ref{sec:exp} provides simulations to exercise the developed metric against other state-of-the-art dependence metrics, including \textit{MIC}, the \textit{RDC}, the \textit{dCor}, the \textit{Ccor}, and \textit{CoS} and measures of information including \textit{kNN}, \textit{vME}, and \textit{AP} using synthetic data.  These simulations reveal that \textit{CIM} compares favorably to other measures of association that satisfy the Data Processing Inequality (DPI).  Simulations with real-world data show how \textit{CIM} can be used for many exploratory data analysis and machine learning applications, including probabilistic modeling, discovering interesting dependencies within large datasets, and Markov network discovery.  These simulations show the importance of considering the monotonicity structure of data when performing probabilistic modeling, a property that only \textit{CIM} can measure.  The favorability of using \textit{CIM} when performing Markov network discovery is shown through the \textbf{netbenchmark} simulation framework.  Concluding remarks are then provided in Section \ref{sec:conclusion}.

\section{Copulas, Concordance, and Rank Statistics} \label{sec:ktauhat}
In this section, we provide a brief overview of copulas, concordance, and rank statistics.  We focus on Kendall's $\tau$  as it provides the basis for \textit{CIM}, and propose a new extension of Kendall's $\tau$ to account for hybrid random variables.  The motivation for this extension comes from the need to assess the strength of association between hybrid random variables, both when exploring real-world datasets where data can consist of both continuous and discrete data simultaneously, and in feature selection as in \textit{mRMR} \cite{mrmr}, where a necessary step is to assess the strength of association between continuous features and discrete classes.  Properties of this extension, denoted $\tau_{KL}$, are then highlighted and discussed.

\subsection{Introduction to Copulas and Concordance for Continuous Random Variables}
Copulas are multivariate joint probability distribution functions for which the marginal distributions are uniform \cite{nelsen}.  In the bivariate case, the existence of a copula $C$ associated with the random variables, $X$ and $Y$, following a joint cumulative distribution function, $H$, and marginals cumulative distributions $F_X$ and $G_Y$, respectively, is ensured by Sklar's theorem, which states that

\begin{equation} \label{eq:sklar1}
H(x,y) = C(F_X(x), G_Y(y)).
\end{equation}
This theorem guarantees the unicity of the copula $C$ for continuous random variables and it unveils its major property, which is its ability to capture the unique dependency structure between any random variables $X$ and $Y$.  Thus, the copula $C$ can be used to define a measure of dependence between continuous random variables.  

A popular measure of dependence that is based on the copula is concordance, which measures the degree to which two random variables are monotonically associated with each other.  More precisely, points in $\mathbb{R}^2$, $(x_i, y_i)$ and $(x_j, y_j)$, are concordant if $(x_i - x_j) (y_i - y_j) > 0$ and discordant if $(x_i - x_j) (y_i - y_j) < 0$ \cite{nelsen}.  This can be probabilistically represented by the concordance function, $Q$, defined as

\begin{align} 
Q &= P[(X_1-X_2)(Y_1-Y_2)>0] - P[(X_1-X_2)(Y_1-Y_2)<0] \label{eq:qfunc} \\
  & = 4 \int \int_{\mathbf{I}^2} C_2(u,v) dC_1(u,v) - 1 \label{eq:qfunc_copula}
\end{align}
where $(X_1,Y_1)$ and $(X_2,Y_2)$ are independent vectors of continuous random variables with distribution functions $H_1$ and $H_2$ having common margins of $F$ (of $X_1$ and $X_2$) and $G$ (of $Y_1$ and $Y_2$), and $C_1$ and $C_2$ are copulas of $(X_1,Y_1)$ and $(X_2,Y_2)$, respectively.

Many metrics of association are based on the concept of concordance, with the two most popular being Kendall's $\tau$ \cite{kendalltau} and Spearman's $\rho$ \cite{spearmansrho}.  Kendall's $\tau$ is defined in terms of the concordance function as 
\begin{equation}\label{eq:kendalls_tau_population}
\tau = Q(C,C), 
\end{equation}
where $C$ is the copula of the joint distribution $(X,Y)$, and interpreted as the scaled difference in the probability between concordance and discordance.  It can be estimated by
\begin{equation}\label{eq:kendalls_tau_estimator}
\hat{\tau} = \frac{\text{\# concordant pairs - \# discordant pairs}}{\binom{n}{2}},
\end{equation}
where $n$ is the number of samples.  Concordance-based measures of association such as Kendall's $\tau$ are ideal for detecting linear and nonlinear monotonic dependencies because they are rank statistics.  These measures have the desirable properties of being margin independent and invariant to strictly monotonic transforms of the data \cite{nelsen, scarsini}.  

\subsection{Extension of Kendall's $\tau$ for Hybrid Random Variables}
Although rank statistics work well for measuring monotonic association between continuous random variables, adjustments need to be made to account for discrete and hybrid random variables.  In that case, Sklar's theorem  does not guarantee the unicity of the copula $C$ and many copulas satisfy (\ref{eq:sklar1}) due to ties in the data.  Consequently, the measure of concordance becomes margin-dependent (i.e, cannot be expressed solely in terms the joint distribution's copula as in (\ref{eq:qfunc_copula})) and in many cases cannot reach $+1$ or $-1$ in scenarios of perfect comonotonicity and countermonotonicity, respectively \cite{genest}.

Several proposals for adjusting Kendall's $\tau$ for ties have been made, including $\tau_b$ \cite{taub}, $\tau_{VL}$ \cite{tauvl}, and $\tau_{N}$ \cite{neslehova}.  The common theme among these proposals is that they use different scaling factors to account for ties in the data.  However, even with scaling, perfect monotone dependence does not always imply $|\tau_b|=1$, and $\tau_{VL}$ is not interpretable as a scaled difference between the probabilities of concordance and discordance \cite{genest}.  \citet{neslehova} overcomes both of these limitations and defines the non-continuous version of Kendall's $\tau$, denoted by $\tau_{N}$ \cite[see][Definition 9]{neslehova}.

\begin{equation}\label{eq:taun_theoretical}
    \tau(X_1,X_2) = \frac{4 \int C_\mathbf{X}^S \partial C_\mathbf{X}^S - 1}{\sqrt{(1-E(\Delta F_{X_1}(X_1)))(1-E(\Delta F_{X_2}(X_2)))}},
\end{equation}
where $\mathbf{X} = (X_1,X_2)$ is a bivariate random vector with arbitrary marginals.



\citet{neslehova} then defines an estimator of the non-continuous version of $\tau_N$ as
\begin{equation} \label{eq:taucj}
\hat{\tau}_{N} = \frac{\text{\# concordant pairs - \# discordant pairs}}{\sqrt{\binom{n}{2} - u}\sqrt{\binom{n}{2} - v}} 
\end{equation}
where $u = \sum_{k=1}^r \binom{u_k}{2}$, $v = \sum_{l=1}^s \binom{v_l}{2}$, $r$ is the number of distinct values observed in $x$ and $s$ is the number of distinct values observed in $y$, $u_k$ is the number of times the $k^{th}$ distinct element occurred in the $u$ dimension, $v_l$ is the number of times the $l^{th}$ distinct element occurred in the $v$ dimension.  $\hat{\tau}_{N}$ achieves $+1$ or $-1$ in the comonotonic and countermonotic cases, respectively, for discrete random variables by subtracting the number of ties for each variable $u$ and $v$ independently from the denominator.  The accounting of ties is required due to the strict inequalities used for concordance and discordance in (\ref{eq:qfunc}).  In the continuous case, there are no ties and $\tau_N$ reduces to the original Kendall's $\tau$ defined in (\ref{eq:kendalls_tau_population}). 

\begin{figure}
	\centering
	\begin{minipage}[c]{0.3\textwidth}
        \includegraphics[width=\textwidth]{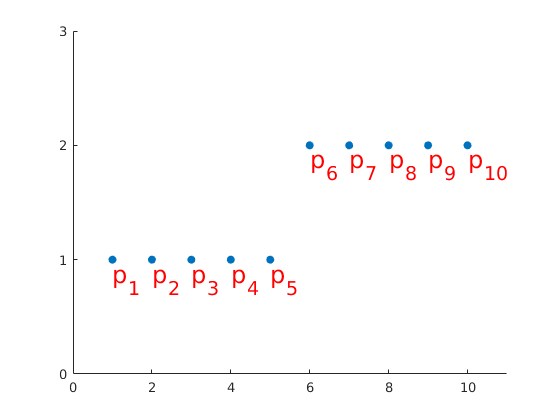}
    \end{minipage}\hfill
   \begin{minipage}[c]{0.6\textwidth}
        \caption{
           The hybrid random variable pair $(X,Y)$ is comonotonic, with $X$ the continuous random variable and $Y$ the discrete random variable.  In the computation of $\hat{\tau}$, the pairs of points $[p_i,p_j]$ for $i=1:5,j=1:5$ and $i=6:10,j=6:10$ are not counted as concordant.  Only the pairs of points $[p_i,p_j]$ for $i=1:5,j=6:10$ are, leading to $\hat{\tau}$ not reaching $+1$ in the perfectly comonotonic case for hybrid random variables.
        } 
        \label{fig:taukl_explanation}
      \end{minipage}
\end{figure}

Although the measure defined by $\tau_N$ is valid for continuous, discrete, or hybrid random variables, the estimator $\hat{\tau}_{N}$ in (\ref{eq:taucj}) does not achieve a value of $+1$ or $-1$ in the perfectly comonotonic and countermonotonic cases, respectively, for hybrid random variables.  In order to make $\hat{\tau}_N$ equal to $+1$ and $-1$ in these cases respectively, we propose to use the maximum number of ties as a correction factor.  This is because in the hybrid case, the numerator of $\hat{\tau}_{N}$ does not count the increasing continuous variables as concordant (or decreasing as discordant).  Fig.~\ref{fig:taukl_explanation} illustrates this counting in an example, and shows why $\hat{\tau}_{N}$ fails to achieve $+1$ or $-1$ in the hybrid random variable case for perfectly comonotonic/countermonotonic random variables respectively.  In it, the pairs of samples along the continuous dimension $x$ within a discrete value ($[p_i,p_j]$ for $i=1:5,j=1:5$ and $i=6:10,j=6:10$) are not counted as comonotonic.  To overcome this drawback, our proposed extension to $\hat{\tau}_{N}$ is defined as

\begin{equation} \label{eq:taukl}
\hat{\tau}_{KL} = 
\begin{cases*}
\frac{\text{\# concordant pairs - \# discordant pairs}}{\binom{n}{2}} & \text{for continuous random variables} \\
\frac{\text{\# concordant pairs - \# discordant pairs}}{\sqrt{\binom{n}{2} - u}\sqrt{\binom{n}{2} -v}} & \text{for discrete random variables} \\
\frac{\text{\# concordant pairs - \# discordant pairs}}{\sqrt{\binom{n}{2} - t}\sqrt{\binom{n}{2} -t}} & \text{for hybrid random variables}
\end{cases*}
\end{equation}
where $t = \mathbf{max}(u,v)-K$, and where $u$ and $v$ are the same as in $\hat{\tau}_{N}$, and $K = \binom{u'}{2} \times v'$, $u'$ denotes the number of overlapping points in the continuous dimension and between different discrete values in the discrete dimension, and $v'$ denotes the number of unique elements in the discrete dimension.  $K$ is zero for perfectly monotonic hybrid random variables, but takes nonzero values for copula-based dependencies; it helps to reduce the bias of $\hat{\tau}_{KL}$ when hybrid random variable samples are drawn from a copula dependency.

The performance of $\hat{\tau}_{KL}$, compared to $\hat{\tau}_b$ and $\hat{\tau}_N$ for perfectly comonotonic random variables is shown in Table~\ref{tab:step}.  It is seen that the proposed modifications to the $\tau_N$ estimate in (\ref{eq:taukl}) do indeed reduce the bias for the hybrid random variables case.  It is also observed that the bias of the $\hat{\tau}_b$ and $\hat{\tau}_N$ is reduced as the number of discrete levels is increased.  However, in all these cases, $\hat{\tau}_{KL}$ still maintains a null bias.  The results in Table~\ref{tab:step} apply for all sample sizes; stated alternatively, the bias is constant across all sample sizes tested. This makes $\hat{\tau}_{KL}$ a compelling alternative to $\hat{\tau}_N$ and $\hat{\tau}_b$.

\begin{table}[ht]
\centering
\begin{tabular}{||c||c|c|c||} 
 \hline
 Discrete Levels & $\hat{\boldsymbol{\tau}}_b$ & $\hat{\boldsymbol{\tau}}_N$ & $\hat{\boldsymbol{\tau}}_{KL}$ \\ [0.5ex] 
 \hline\hline
 2 & 0.58 & 0.71 & 1.00 \\ 
 4 & 0.84 & 0.87 & 1.00 \\
 8 & 0.93 & 0.93 & 1.00 \\
 \hline
\end{tabular}
\caption{Step function dependency with various levels of discretization; it is seen that $\tau$ approaches $1$ as the number of discretization levels increases, but without the bias correction described in (\ref{eq:taukl}), dependence between continuous and discrete random variables is not measured accurately by $\tau_b$ and $\tau_N$.  The results shown here apply for all $M$; stated alternatively, the bias is constant across all sample sizes tested.}
\label{tab:step}
\end{table}

Figs.~\ref{fig:ktauhat_bias_properties} (a),(b),(c), and (d) show the bias and variance between the estimated value of $\hat{\tau}_{KL}$ and the value of $\tau$ that generates the corresponding copula, as a function of $\tau$ which is used here as a proxy to the strength of positive monotonic association, for a sample size of $M=1000$. Here, samples of $X = F^{-1}_X(U)$ and $Y=F^{-1}_Y(V)$ are drawn from a Gaussian distribution and from a uniform discrete distribution, respectively, and joined together with four different dependency structures captured by the Gaussian, Frank, Gumbel, and Clayton copulas.  This follows the methodology described by \citet{madsenbirkes} for simulating dependent discrete data.  

Next, we characterize the empirical null distribution of the $\hat{\tau}_{KL}$ estimator.  The linear relationship between the quantiles of the empirical null distribution and a Normal distribution in Fig.~\ref{fig:ktauhat_null}(a) indicate asymptotic normality of the null distribution of $\hat{\tau}_{KL}$, denoted by $X \bigCI Y$.  Fig.~\ref{fig:ktauhat_null}(b) and (c) show that $\hat{\tau}_{KL}$ is Gaussian with a sample mean of approximately zero and a decreasing sample standard deviation as $M$ increases for continuous, discrete, and hybrid random variables.    

We conclude from Table~\ref{tab:step} and Fig.~\ref{fig:ktauhat_bias_properties} that $\hat{\tau}_{KL}$ achieves similar or slightly better bias and variance performance compared to $\hat{\tau}_N$ for hybrid random variables with copula-based dependencies, and far better performance than $\hat{\tau}_b$, while providing the best performance for perfect comonotonic and countermonotonic association patterns.  In the asymptotic case, across the four copulas and marginal distribution types tested, we observe that $\hat{\tau}_{KL}$ and $\hat{\tau}$ both have low bias when the strength of dependency, as referenced by $\tau$, is low ($0 - 0.5$).  When the strength of dependence is medium, $(0.5-0.7)$, $\hat{\tau}_N$ has slightly lower bias.  In the high dependence case, we see that $\hat{\tau}_{KL}$ has lower bias.  These results, combined with the asymptotic normality of the null distribution makes $\hat{\tau}_{KL}$ a viable estimator of the strength of nonlinear monotonic dependence structures, regardless of the type of marginal distribution (discrete, continuous, or hybrid).  



\begin{figure}
\centering
	\begin{subfigure}[b]{0.40\textwidth}
		\includegraphics[width=\linewidth]{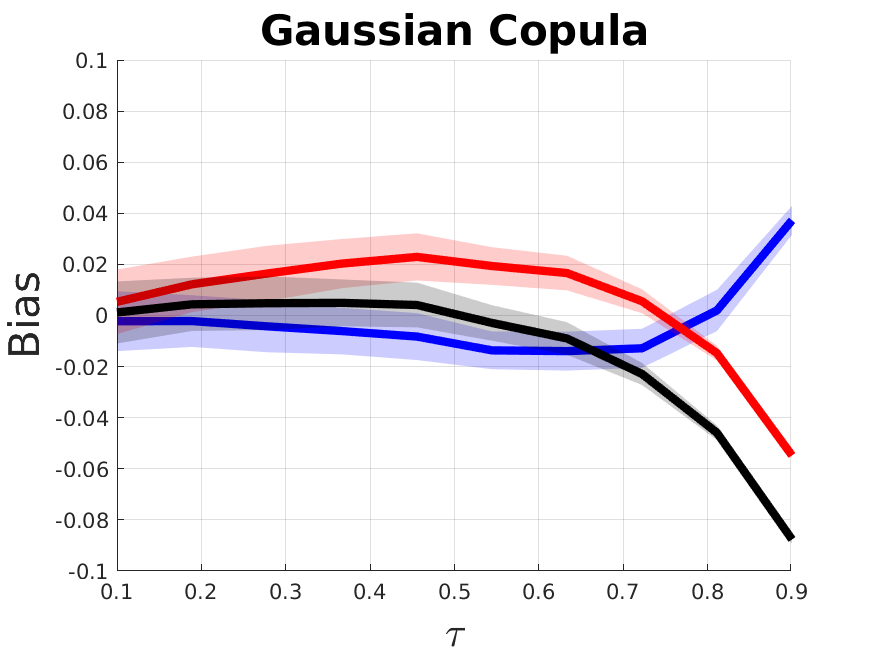}
		\caption{}
	\end{subfigure}%
	\begin{subfigure}[b]{0.40\textwidth}
		\includegraphics[width=\linewidth]{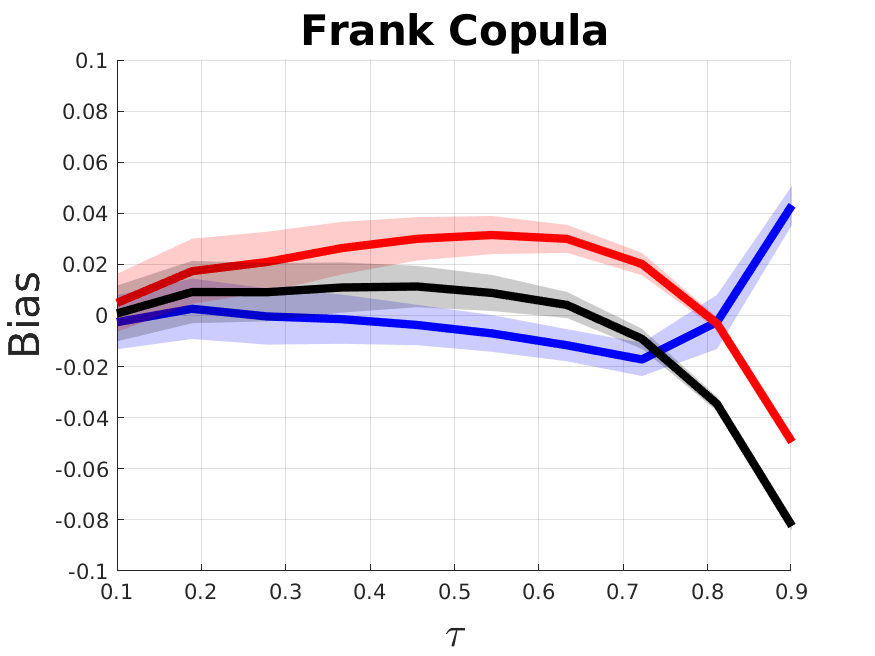}
		\caption{}
	\end{subfigure}\
	\begin{subfigure}[b]{0.40\textwidth}
		\includegraphics[width=\linewidth]{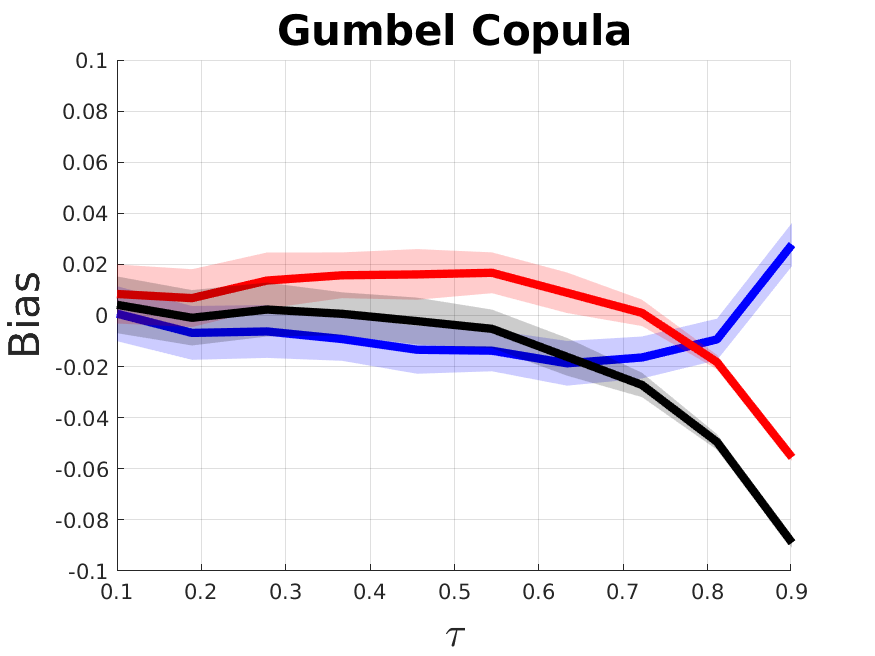}
		\caption{}
	\end{subfigure}%
	\begin{subfigure}[b]{0.40\textwidth}
		\includegraphics[width=\linewidth]{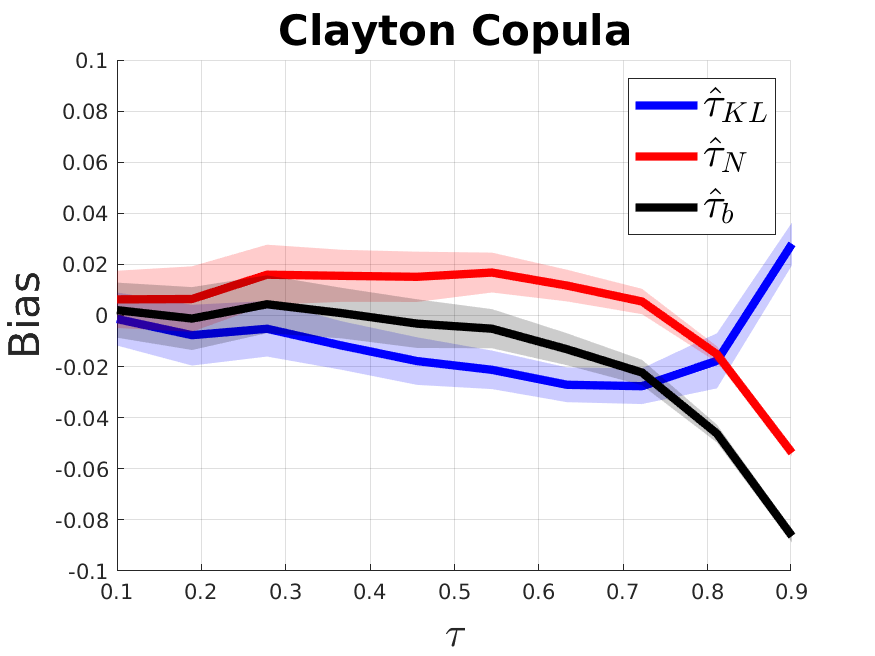}
		\caption{}
	\end{subfigure}\
	\caption{The bias and standard deviation of $\hat{\tau}_b$, $\hat{\tau}_{N}$, and $\hat{\tau}_{KL}$ for varying strengths of dependency for hybrid random variables. The bias and variance for each dependence strength was computed for $M=1000$ for 100 Monte-Carlo simulations.  
	}
	\label{fig:ktauhat_bias_properties}
\end{figure}

\begin{figure}
\centering
    \includegraphics[width=.85\textwidth]{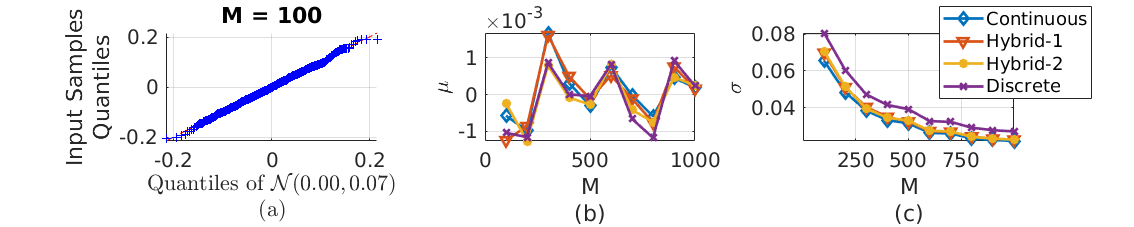}
	\caption{(a) QQ-Plot of $\hat{\tau}_{KL}$ for continuous random variables with $X \bigCI Y$ and $M=100$, (b) The sample mean of the distribution of $\hat{\tau}_{KL}$ for $X \bigCI Y$ as a function of $M$ (sample size), (c) The sample standard deviation of the distribution of $\hat{\tau}_{KL}$ for $X \bigCI Y$ as a function of $M$ (sample size).  Note: Hybrid-1 refers to a discrete X and continuous Y, Hybrid-2 refers to a continuous X and discrete Y.}
	\label{fig:ktauhat_null}
\end{figure}

\section{Copula Index for Detecting Dependence and Monotonicity between Stochastic Signals} \label{sec:cim}
In the previous section, we described an extension to the estimator of $\tau_N$ to account for hybrid random variables.  However, $\tau_N$ is still a rank statistic and thus cannot measure nonmonotonic dependencies.  Here, we describe \textit{CIM}, which is an extension of $\tau_{N}$ to detect nonlinear, nonmonotonic statistical dependencies that satisfies most of R\'enyi's properties and the data processing inequality (DPI).  The motivation for this development comes from the need to assess the strength of association for any general dependence structures that may not be monotonic, when exploring real-world datasets for both analysis and stochastic modeling perspectives, and constructing Markov networks from data, to name a few.  The theoretical foundations of this methodology are first developed.  We then describe the properties of \textit{CIM} and propose an algorithm to estimate it.  

\subsection{Theoretical Foundations of \textit{CIM}}
\textit{CIM} detects statistical dependencies by leveraging concepts from concordance, defined above in (\ref{eq:qfunc}).  However, measures of concordance do not perform well for measuring nonmonotonic dependencies.  This is because two random variables can be perfectly associated, while having the probability of concordance, $P[(X_1-X_2)(Y_1-Y_2)>0]$, equal to the probability of discordance, $P[(X_1-X_2)(Y_1-Y_2)<0]$, yielding a null concordance function $Q$.  An example of such an association is $Y = X^2$, with $X \sim U[-1,1]$.  Thus, in order to use concordance as a measure of nonmonotonic dependence, one must consider regions of concordance and discordance separately; this provides the basis of \textit{CIM}, which computes a weighted average of $|\tau_{N}|$ for each of these regions.

To develop \textit{CIM}, we begin by proving that a set of observations drawn from any mapping can be grouped into concordant and discordant subsets of pseudo-observations that are piecewise linear functions of each other.  Let $F_{X_d}(x_d(m))$ be the $m^{th}$ pseudo-observation for the $d^{th}$ dimensional data point and denote the range-space of $(X,Y)$, where $X$ and $Y$ are random variables, to be the subset of $\mathcal{R}^2$ which encompasses every pair of values that the bivariate random variable $(X,Y)$ can take on.  We can then state the following theorem:

\begin{theorem} \label{thm1}
    Suppose $X$ and $Y$ are random variable associated through a union of functions $g = \bigcup_{i=1}^k h_i$, where $h_i \ \forall i=1 \dots k$ is monotone over the intervals $I_i, i=1 \dots n$, of the real line.  Define the random variables $U = F_X(X)$ and $V=F_Y(Y)$.  Then, $V$ is a piecewise linear function of $U$.  
\end{theorem}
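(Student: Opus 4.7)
The plan is to reduce the claim to the behavior of $(U,V)$ on each monotone branch of $g$, and then assemble the pieces. Starting from the partition of the real line into intervals $I_1,\dots,I_n$ on which $g$ coincides with the monotone pieces $h_1,\dots,h_n$, I would analyze the image of the graph of $h_i$ restricted to $I_i$ under the coordinate map $(x,y)\mapsto(F_X(x),F_Y(y))$. Because $F_X$, $h_i$, and $F_Y$ are each monotone on their respective domains (with the sense of $h_i$ determined by the branch), the composition $V = F_Y(h_i(F_X^{-1}(U)))$ is a monotone function of $U$ on the image $F_X(I_i)$. This places the support of $(U,V)$ on the $i$-th piece inside a monotone arc, and the key step is to upgrade ``monotone'' to ``linear.''

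The upgrade uses the probability integral transform branch-by-branch. Introducing the conditional pseudo-observations $U_i = F_{X\mid X\in I_i}(X)$ and $V_i = F_{Y\mid X\in I_i}(Y)$, the monotonicity of $h_i$ on $I_i$ yields $V_i = U_i$ when $h_i$ is increasing and $V_i = 1 - U_i$ when $h_i$ is decreasing, an exact linear identity with slope $\pm 1$. I would then translate this into a statement about the unconditional $U$ and $V$ by noting that on $I_i$ the two CDFs are affinely related to their conditional counterparts: $U = F_X(a_i) + p_i\,U_i$ with $p_i = P(X\in I_i)$ and $a_i=\inf I_i$, and analogously for $V$. Composing the three affine maps, namely the one linking $U$ to $U_i$, the linear identity $V_i = \pm U_i$, and the one linking $V$ to $V_i$, would give the claimed piecewise-linear relation between $V$ and $U$ on each branch.

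The hard part will be justifying the affine link between $V$ and $V_i$, because $F_Y(y)$ aggregates contributions from every branch whose image $h_j(I_j)$ contains $y$, not just from branch $i$. I plan to resolve this by refining the partition: further subdivide each $I_i$ at the $h_i$-preimages of the endpoints of $h_j(I_j)$ for all $j\neq i$, so that on every refined sub-interval the set of competing active branches is constant. On such a refined piece the contributions from the competing branches collapse to a constant offset in $F_Y$, restoring an affine relation between $V$ and $V_i$ there; stitching the refined pieces together then gives the global piecewise-linear representation of $V$ as a function of $U$, which is exactly the structural fact needed to split the $(U,V)$ support into the concordant and discordant sub-regions used to build \textit{CIM}.
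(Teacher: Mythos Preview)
Your branch-by-branch strategy matches the paper's: introduce conditional pseudo-observations $U_i,V_i$, obtain $V_i=\pm U_i$ from monotonicity of $h_i$, then lift to the global pair $(U,V)$. The paper simply asserts this last lift; you correctly recognise it as the hard part, because $F_Y(y)$ aggregates contributions from every branch whose image meets $(-\infty,y]$. Unfortunately your proposed fix does not close the gap. Refining the partition so that the \emph{set} of active branches is constant on each sub-piece does not make their contributions constant: on a refined sub-piece of branch $i$ where branch $j\neq i$ is also active, branch $j$ contributes (up to sign and an additive constant) $F_X\bigl(h_j^{-1}(y)\bigr)=F_X\bigl(h_j^{-1}(h_i(x))\bigr)$ to $F_Y(y)$, and nothing in the hypotheses forces $F_X\circ h_j^{-1}\circ h_i$ to be affine in $F_X$. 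A concrete obstruction: take $X\sim U[0,3]$ and $Y=g(X)$ with $g(x)=x$ on $[0,1]$, $g(x)=2-x$ on $[1,2]$, $g(x)=(x-2)^2$ on $[2,3]$. Then $F_Y(y)=(2y+\sqrt{y})/3$, and on branch~$1$ one has $U=X/3$ while $V=2U+\sqrt{3U}/3$, which is not linear in $U$ on any sub-interval.

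So the piecewise-\emph{linear} conclusion cannot be established in this generality; neither your refinement nor the paper's bare assertion that $\mathbf{U}=\bigcup_iU_i$, $\mathbf{V}=\bigcup_iV_i$ delivers it. What your first paragraph \emph{does} prove is that $V=F_Y\bigl(h_i(F_X^{-1}(U))\bigr)$ is a composition of monotone maps on each $F_X(I_i)$, hence $V$ is piecewise \emph{monotone} in $U$. That weaker statement is exactly what is needed downstream to carve the unit square into concordant and discordant regions for \textit{CIM}, so the operative content survives; only the word ``linear'' overreaches.
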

\textbf{Remark:} $g$ is not necessarily a function.  As an example, take $h_1(x) = \sqrt{1-x^2}$ and $h_2(x) = -\sqrt{1-x^2}$.  Here, $h_1$ and $h_2$ are functions, but the union is a circular association pattern which is not a function.
\begin{proof}
    Define $P_i \coloneqq P(X \in I_i)$, $\Omega_i \coloneqq (I_i \times J_i)$ where $J_i$ is the interval of the $y$-axis of the range of $h_i$, with the constraints $\bigcup \Omega_i = \Omega$ and $\Omega_i \cap \Omega_j = \emptyset \ \forall i \neq j$.  Additionally, define $X_i = X \ \textbf{if} \ X \in I_i$ and  $Y_i = h_i(X_i) = Y \ \textbf{if} \ Y \in J_i$.  Finally, define $U_i = F_{X_i}(X_i)$ and $V_i = G_{Y_i}(Y_i)$.  Since we have $\bigcup \Omega_i = \Omega$ and $\Omega_i \cap \Omega_j = \emptyset \ \forall i \neq j$, then $\mathbf{U} = \bigcup_i U_i$ and $\mathbf{V} = \bigcup_i V_i$.

	We can then write
	\begin{align*}
	V_i &= G_{Y_i}(Y_i) \\ 
	    &= P(h_i(X_i) \leq y_i) = \begin{cases}
	F_{X_i}(h_i^{-1}(y_i))    & \text{if} \ \ h_i^{-1}(y) \ \  \text{is increasing} \\
	1-F_{X_i}(h_i^{-1}(y_i))  & \text{if} \ \ h_i^{-1}(y) \ \  \text{is decreasing} \\
	K_i  & \text{if} \ \ h_i^{-1}(y) \ \  \text{is constant}
	\end{cases},
	\end{align*}
	where $K_i = F_{X_{i-1}}(h_{i-1}^{-1}(y_{i-1}))$ if $h_{i-1}^{-1}(y)$ is increasing or $K_i = 1-F_{X_{i-1}}(h_{i-1}^{-1}(y_{i-1}))$ if $h_{i-1}^{-1}(y)$ is decreasing.  Because $F_{X_i}(h_i^{-1}(y_i)) = U_i$, $V_i$ is a peicewise linear function of $U_i$, and thus $V$ is a peicewise linear function of $U$.
\end{proof}

Theorem \ref{thm1} shows that if two random variables are associated in a deterministic sense, their Cumulative Distribution Functions (CDFs) are piecewise linear functions of each other.  This implies that the pseudo-observations of realizations of these dependent random variables can be grouped into regions of concordance and discordance.  Furthermore, in each region, the dependent variable's pseudo-observations are linear functions of the independent ones, contained in the unit square $\mathbf{I}^2$.  Using this as a basis, \textit{CIM} detects dependencies by identifying regions of concordance and discordance after transforming the original data, $x$ and $y$, into the pseudo-observations, $F_X(x)$ and $F_Y(y)$, respectively.

As displayed in Fig.~\ref{fig:cim_indep}, by definition of concordance, in the independence scenario, no regions of concordance or discordance exist.  Similarly, as depicted in Fig.~\ref{fig:cim_mono}, for monotonic dependencies only one region, $\mathbf{I}^2$, exists.  Finally, for nonmonotonic dependencies, many regions may exist.  As an example, Fig.~\ref{fig:cim_sinu} displays the pseudo-observations of sinusoidal functional dependence.  Here, it is easy to see that $R_1$ and $R_3$ are regions of concordance, and $R_2$ is a region of discordance.  

\begin{figure}
	\centering
	\begin{subfigure}[t]{0.30\textwidth}
		\includegraphics[width=1\textwidth]{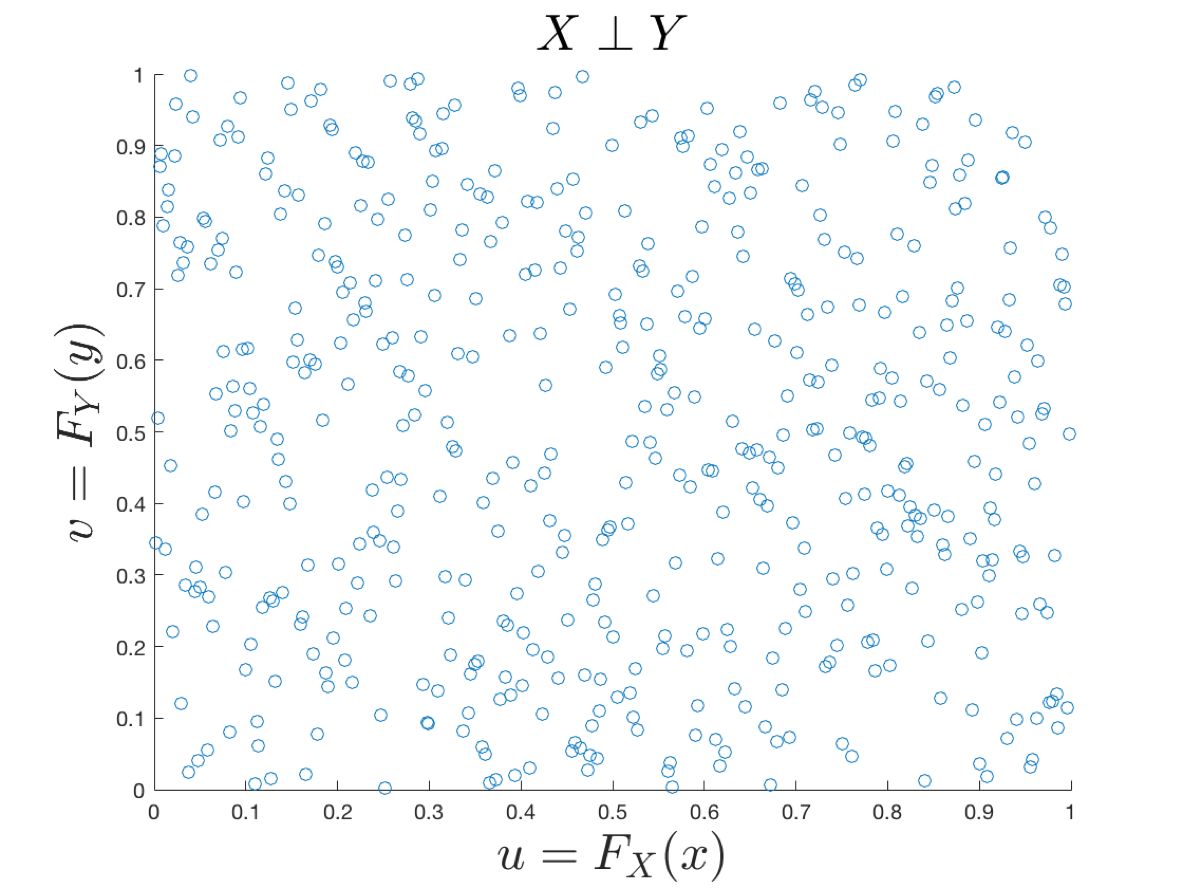}
		\caption{}
		\label{fig:cim_indep}
	\end{subfigure}
	\begin{subfigure}[t]{0.30\textwidth}
		\includegraphics[width=1\textwidth]{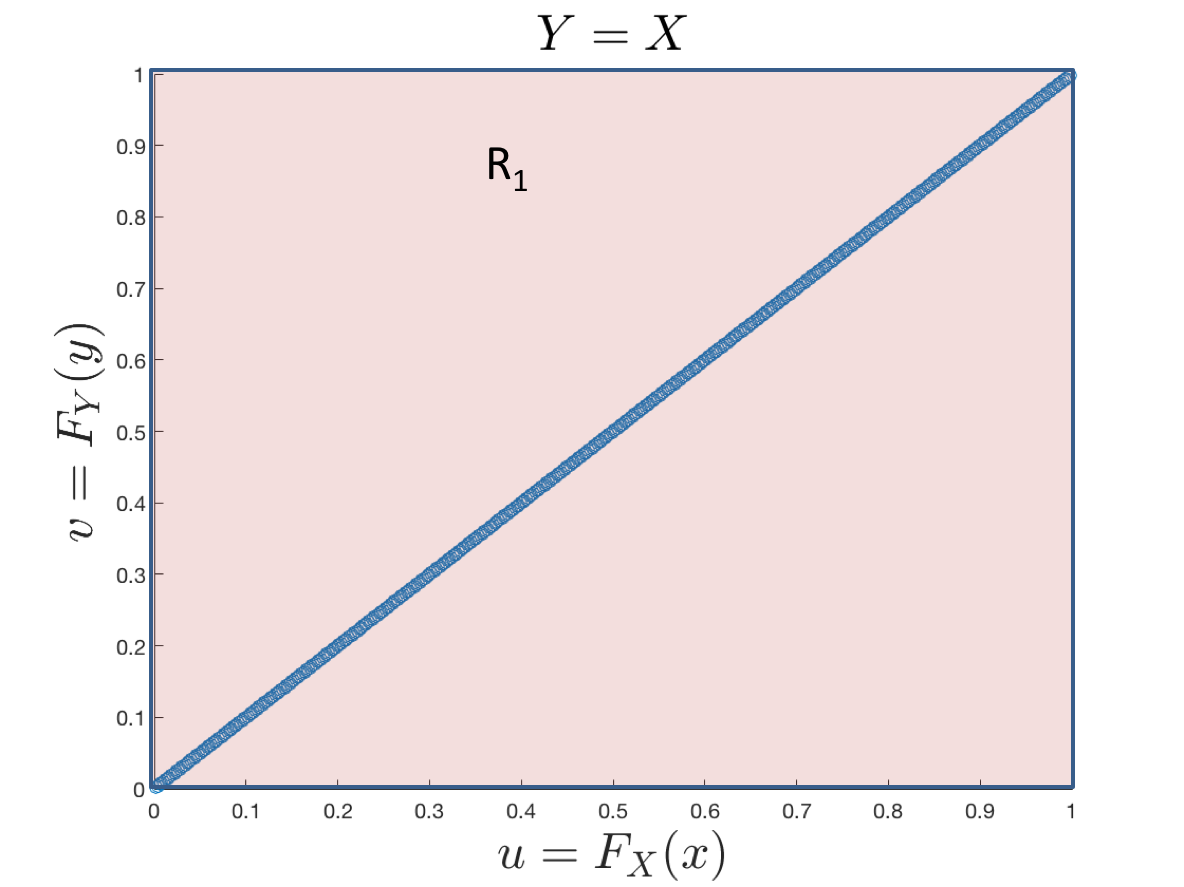}
		\caption{}
		\label{fig:cim_mono}
	\end{subfigure}
	\begin{subfigure}[t]{0.30\textwidth}
		\includegraphics[width=1\textwidth]{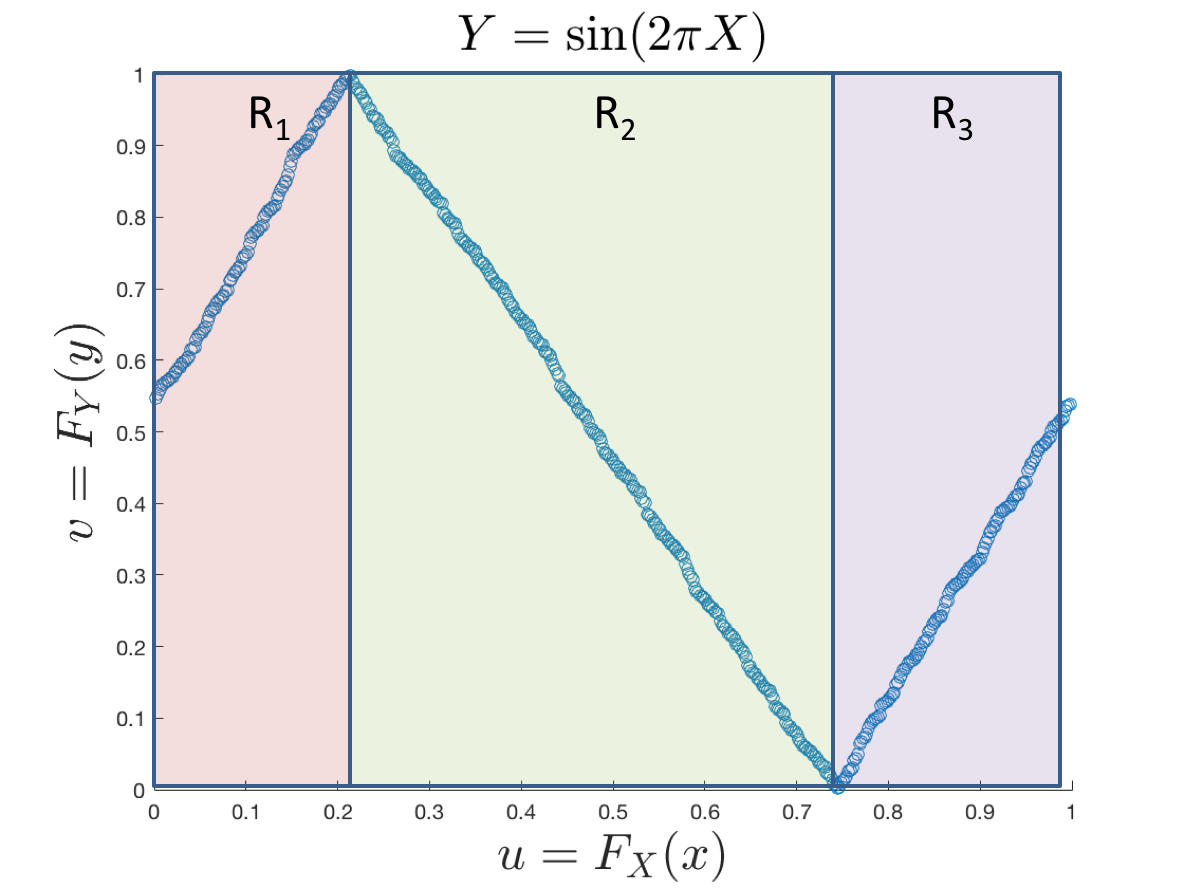}
		\caption{}
		\label{fig:cim_sinu}
	\end{subfigure}
	\caption{Regions of concordance and discordance for three different scenarios: (a) shows two independent random variables, in which case by definition there are no regions of concordance or discordance; (b) shows comonotonic random variables, in which there is one region of concordance, $R_1$; (c) shows a sinusoidal dependence between two random variables, in which there are two regions of concordance, $R_1$ and $R_3$, and one region of discordance, $R_2$.}
	\label{fig:regions}
\end{figure} 
The foregoing examples motivate the following definition of \textit{CIM}: 

\begin{equation}\label{cimeq}
CIM = \sum_i \left( w_i |\tau_{N}^i| \right),
\end{equation}
where $|\tau_{N}^i|$ is the absolute value of $\tau_N$ for the $i^{th}$ region and $w_i$ is the ratio of the area of region $R_i$ to $\mathbf{I}^2$.  From (\ref{cimeq}) and the properties of $\tau_{N}$, \textit{CIM} reduces to $\tau$ for monotonic continuous random variables, and zero for independent random variables.  It should be noted that (\ref{cimeq}) defines \textit{CIM} metric, but an algorithm is required in order to identify each region for which $\tau_{N}$ is computed.  In Section \ref{sec:cim_algo}, we propose an algorithm to identify these regions.  We briefly note that the idea of copula based dependence measures for piecewise monotonic dependence structures has been independently introduced elsewhere in literature~\cite{liebscher2017copula}.

\subsection{Properties of \textit{CIM}}
In this section, we describe the properties of \textit{CIM} defined in (\ref{cimeq}).  We begin by discussing R\'enyi's seven properties of dependence measures, and show that \textit{CIM} satisfies most of them.  We then prove that \textit{CIM} satisfies the Data Processing Inequality (DPI), which implies that it satisfies self-equitability.  Finally, we briefly discuss Reshef's definition of equitability and it's application to \textit{CIM}.

\subsubsection{Dependence Metric Properties}\label{sec:renyi}
\citet{Renyi1959} defined seven desirable properties of a measure of dependence, $\rho^*(X,Y)$, between two random variables $X$ and $Y$:

\begin{enumerate} \label{tbl:renyiproperties}
	\item $\rho^*(X,Y)$ is defined for any pair of non-constant random variables $X$ and $Y$.
	\item $\rho^*(X,Y)$ = $\rho^*(Y,X)$.
	\item $0 \leq \rho^*(X,Y) \leq 1$.
	\item $\rho^*(X,Y) = 0$ iff $X \bigCI Y$.
	\item For bijective Borel-measurable functions, $f$, $g$: $\mathbb{R} \rightarrow \mathbb{R}, \rho^*(X,Y) = \rho^*(f(X),g(Y))$.
	\item $\rho^*(X,Y) = 1$ if for Borel-measurable functions $f$ or $g$, $Y = f(X)$ or $X = g(Y)$.
	\item If $(X,Y) \sim \mathcal{N}(\bm{\mu}, \bm{\Sigma})$, then, $\rho^*(X,Y) = |\rho(X,Y)|$, where $\rho$ is the correlation coefficient.
\end{enumerate}

\begin{theorem}
\textit{CIM} satisfies properties $1-3$, $5$, and $6$ strictly, and can be transformed using a known relation to satisfy property $7$.
\end{theorem}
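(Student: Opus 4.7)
The plan is to verify each claimed Rényi property in turn, drawing on the definition (\ref{cimeq}) together with the properties of $\tau_N$ reviewed in Section~\ref{sec:ktauhat} and the piecewise-linear decomposition of Theorem~\ref{thm1}.

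Properties 1--3 I would dispatch by inspection of (\ref{cimeq}). Property~1 follows because $\tau_N$ is well-defined whenever $X$ and $Y$ are non-constant (the denominator of (\ref{eq:taun_theoretical}) is strictly positive) and the region decomposition is well-defined for any joint law. For Property~2 I would use that $\tau_N$ is symmetric in its two arguments region-by-region, and that the ratio of the area of $R_i$ to $\mathbf{I}^2$ is unchanged when the two pseudo-observation axes are swapped. Property~3 is a convex-combination argument: each $|\tau_N^i| \in [0,1]$, and since the $R_i$ partition $\mathbf{I}^2$ the weights satisfy $\sum_i w_i = 1$, so the weighted average lies in $[0,1]$.

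For Property~5 I would exploit the rank-statistic character of $\tau_N$. Under a strictly increasing bijection $f$ of $X$, the pseudo-observation is unchanged because $F_{f(X)}(f(x)) = F_X(x)$, so both the region decomposition and each $|\tau_N^i|$ are preserved. Under a strictly decreasing bijection, the pseudo-observations along that axis are reflected via $u \mapsto 1-u$, which swaps concordance and discordance labels but leaves $|\tau_N^i|$ and the region areas intact. The same reasoning applies to $g(Y)$. Since every continuous Borel bijection of $\mathbb{R}$ is strictly monotonic, this covers the case of interest and yields $\mathit{CIM}(f(X),g(Y)) = \mathit{CIM}(X,Y)$.

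The principal difficulty lies in Property~6. Here I would invoke Theorem~\ref{thm1}: if $Y = f(X)$ with $f$ a union of pieces monotone on intervals $I_i$, the pseudo-observations decompose into subregions $\Omega_i$ on each of which $V$ is a piecewise linear function of $U$, so the restricted dependence is perfectly comonotonic or countermonotonic and $|\tau_N^i|=1$. Because $\sum_i w_i = 1$, equation (\ref{cimeq}) then yields $\mathit{CIM} = 1$. The main obstacle is that the property as stated allows general Borel $f$; I would address this by either restricting attention to $f$ expressible as a countable union of monotone pieces (the setting of Theorem~\ref{thm1}, which covers all practically relevant cases) or approximating a general Borel $f$ by such piecewise-monotone functions and appealing to continuity of $\tau_N$ in the joint distribution. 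The question of whether the estimation procedure actually recovers the correct region decomposition I would defer to Section~\ref{sec:cim_algo}, treating (\ref{cimeq}) here as a population-level definition.

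Finally, for Property~7 I would observe that for a bivariate normal with correlation $\rho$ the Gaussian copula induces a globally monotone association, so the region decomposition collapses to a single region with $w_1 = 1$, giving $\mathit{CIM} = |\tau_N|$. The classical identity $\tau = \tfrac{2}{\pi}\arcsin(\rho)$ for Gaussian copulas then provides the promised transformation $|\rho| = \sin\!\bigl(\tfrac{\pi}{2}\,\mathit{CIM}\bigr)$, which is the ``known relation'' referenced in the theorem statement.
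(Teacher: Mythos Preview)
Your proposal is correct and follows essentially the same approach as the paper: Properties~1 and~3 by inspection of the convex-combination form of (\ref{cimeq}), Property~5 via the rank-invariance of Kendall's $\tau$, Property~6 via Theorem~\ref{thm1}, and Property~7 via the single-region reduction for the Gaussian copula together with the classical $\tau$--$\rho$ relation. The only notable deviation is Property~2, where the paper simply cites Theorem~\ref{thm1} alongside Property~6, whereas you give a direct symmetry argument (symmetry of $\tau_N$ plus invariance of the area weights under axis swap); your route is arguably cleaner, and your explicit caveat that Property~6 really only goes through for piecewise-monotone $f$ rather than arbitrary Borel $f$ is a point the paper glosses over.
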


\begin{proof}
\textit{CIM} satisfies the first property since it operates on copula transformed data (pseudo-observations, which exist for any random variable) rather than the raw data.  Because of the following two identities: $\Sigma_i w_i = 1$, $\mathbf{min}(|\tau_{N}^i|) = 0$ and $\mathbf{max}(|\tau_{N}^i|) = 1$, the value of \textit{CIM} given by (\ref{cimeq}) takes values between 0 and 1, and thus the third property is satisfied.  In the independence case, because there are no regions of concordance or discordance, (\ref{cimeq}) reduces to $|\tau_{N}| = 0$.  From \cite{scarsini}, any measure of concordance is equal to zero when $X$ and $Y$ are independent; because \textit{CIM} reduces to $|\tau_{N}|$, which is an absolute value of the concordance measure $\tau$ for independent random variables, we can state that if $X \bigCI Y$, then $CIM=0$.  The fifth property is also satisfied because Kendall's $\tau$ is invariant to increasing or decreasing transforms \cite[see][Theorem 5.1.8]{nelsen}, so the convex sum of Kendall's $\tau$ must also be invariant to increasing or decreasing transforms.  
The second and sixth properties are satisfied by virtue of Theorem \ref{thm1}.  The seventh property is weakly satisfied because \textit{CIM} metric is the absolute value of Kendall's $\tau$ for a Gaussian copula and can be converted to the correlation coefficient $\theta$, with the relation $\theta = \sin(\frac{CIM \pi}{2})$.  This works because the Gaussian copula captures monotonic linear dependence, and hence there is only one region.

The fourth property is unfortunately not satisfied, due to the copula indifference property.  If $C(u,v) = u - C(u,1-v) = v-C(1-u,v) \ \forall (u,v) \in [0,1]^2$ is satisfied (i.e. the copula is indifferent), $\tau = 0$, and hence $CIM$ cannot be guaranteed to be $0$ only under the independence copula, $\Pi$.

\end{proof}

\subsubsection{Self Equitability and the Data Processing Inequality} \label{sec:dpi}
As noted by \citet{mic_not_equitable}, the DPI and self equitability are important, desirable properties of a dependence metric.  In this section, we prove that $|\tau|$ and \textit{CIM} both satisfy the DPI, and are thus both self-equitable for continuous random variables.  We show that the scaling factors proposed in (\ref{eq:taucj}) and (\ref{eq:taukl}) to account for discrete and continuous random variables, unfortunately, does not satisfy the DPI.  We then propose a solution to allow \textit{CIM} to satisfy the DPI, even in the discrete and hybrid scenarios.

The DPI is a concept that stems from information theory.  It states that if random variables $X$, $Y$, and $Z$ form a Markov chain, denoted by $X \rightarrow Y \rightarrow Z$, then $I(X;Y) > I(X;Z)$, where $I(X;Y)$ is the mutual information between $X$ and $Y$ defined as $$I(X;Y) = \int_Y \int_X f_{XY}(x,y) \mathbf{log} \frac{f_{XY}(x,y)}{f_X(x)f_Y(y)} dx dy,$$ where $f_{XY}(x,y)$ is the joint distribution of $X$ and $Y$, and $f_X(x)$ and $f_Y(y)$ are the marginal densities of $X$ and $Y$, respectively \cite{CoverInfoTheo}.  Intuitively, it asserts that information is never gained when being transmitted through a noisy channel \cite{mic_not_equitable}.  As an analog to the information theoretic definition of the DPI, \citet{mic_not_equitable} define a dependence metric $D$ to satisfy the DPI if and only if $D(X;Y) \geq D(X;Z)$, whenever the random variables $X$, $Y$, and $Z$ form the Markov chain, $X \rightarrow Y \rightarrow Z$.  Here, we prove that \textit{CIM}, as defined by (\ref{cimeq}), satisfies the DPI. 

\begin{theorem} \label{thm:cim_dpi}
	If the continuous random variables $X$, $Y$, and $Z$ form a Markov chain $X \rightarrow Y \rightarrow Z$, then $CIM(X,Y) \geq CIM(X,Z)$. 
\end{theorem}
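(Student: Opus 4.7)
The plan is to reduce the statement first to the single-region case, where $CIM$ coincides with $|\tau|$ for continuous random variables, and then to lift the inequality to the full region-wise weighted sum defining $CIM$. The reason this reduction should work is that, by Theorem~\ref{thm1}, within each region the pseudo-observations are piecewise linear (hence monotonic) functions of one another, so the behavior of $CIM$ inside a region is captured by $|\tau|$ restricted to that region.

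The first and most substantive step is to prove a DPI for $|\tau|$ alone: $|\tau(X,Y)| \ge |\tau(X,Z)|$ for continuous $X,Y,Z$ with $X \to Y \to Z$. I would use the concordance representation $\tau(X,Y) = 2\,P[(X_1-X_2)(Y_1-Y_2) > 0] - 1$, with $(X_1,Y_1,Z_1)$ and $(X_2,Y_2,Z_2)$ independent copies of $(X,Y,Z)$. Because $(X_i,Z_i)$ are conditionally independent given $Y_i$ and the two copies are independent, conditioning on $(Y_1,Y_2)$ lets me factor
\begin{equation*}
P\bigl[(X_1-X_2)(Z_1-Z_2) > 0 \mid Y_1,Y_2\bigr]
\end{equation*}
into terms involving only $X_i \mid Y_i$ and $Z_i \mid Y_i$. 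Taking the outer expectation over $(Y_1,Y_2)$ and applying a suitable correlation inequality (Chebyshev's sum inequality or an FKG-type argument on the sign of $Y_1-Y_2$), I expect the resulting concordance probability for $(X,Z)$ to be a contraction toward $1/2$ of the concordance probability for $(X,Y)$. This contraction translates directly into $|\tau(X,Z)| \le |\tau(X,Y)|$.

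The second step is to lift to $CIM$. Write $CIM(X,Y) = \sum_i w_i^{XY} |\tau^i(X,Y)|$ and $CIM(X,Z) = \sum_j w_j^{XZ} |\tau^j(X,Z)|$, where the region decompositions are obtained from the pseudo-observations as in Theorem~\ref{thm1}. The idea is that the Markov chain $X \to Y \to Z$ induces a Markov structure on the region-restricted variables as well: restricting both $(X,Y)$ and $(Y,Z)$ to compatible regions preserves conditional independence. Any region of $(X,Z)$ can be obtained by composing a region of $(X,Y)$ with one of $(Y,Z)$, so each region-level $|\tau^j(X,Z)|$ is bounded above by the corresponding $|\tau^i(X,Y)|$ by the single-region DPI from Step~1. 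Since the weights $w^{XZ}_j$ and $w^{XY}_i$ both partition $\mathbf{I}^2$ and sum to $1$, a convexity/averaging argument should let me conclude $CIM(X,Z) \le CIM(X,Y)$.

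The main obstacle I expect is Step~2, specifically matching the two region decompositions. The regions for $(X,Z)$ need not align with those for $(X,Y)$, and the weights $w_i$ depend on the joint distribution, which changes when $Y$ is replaced by $Z$. My fallback here would be to show that any refinement or coarsening of the $(X,Z)$ decomposition relative to the $(X,Y)$ decomposition cannot increase the weighted sum, via a Jensen-type inequality applied to $|\tau|$ on nested regions, combined with the Step~1 DPI acting within each matched region. The continuous-variable assumption is essential in both steps because it guarantees uniqueness of the copula, absence of ties, and the exact identity $\tau_N = \tau$ needed for the concordance-probability manipulation.
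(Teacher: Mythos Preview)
Your two-step outline matches the paper's structure, but both steps diverge from the paper's actual arguments, and Step~1 in particular has a real gap.

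\textbf{Step 1.} Your conditioning factorization is correct: with $p(y_1,y_2)=P[X_1>X_2\mid Y_1=y_1,Y_2=y_2]$ and $q(y_1,y_2)=P[Z_1>Z_2\mid Y_1=y_1,Y_2=y_2]$ one gets the $(X,Z)$ concordance probability as $E[pq+(1-p)(1-q)]$, while the $(X,Y)$ concordance probability is $E[p\,\mathbf{1}\{Y_1>Y_2\}+(1-p)\,\mathbf{1}\{Y_1<Y_2\}]$. The problem is that nothing you have named (Chebyshev's sum inequality, a generic FKG argument) compares these two expectations in absolute distance from $1/2$; you would need monotonicity of $p$ and $q$ in the sign of $Y_1-Y_2$, which is exactly what is not assumed. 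The paper does not attempt this route at all. It works entirely in copula space: it invokes Darsow's Markov-product identity $C_{XZ}=C_{XY}*C_{YZ}$, approximates $C_{YZ}$ on a grid of patches by convex combinations of $M$, $\Pi$, and $W$, and then uses the algebraic identities $C*M=C$, $C*\Pi=\Pi$, $C*W(u,v)=u-C(u,1-v)$ to show $C_{XY}\succeq C_{XZ}$ patchwise in the concordance order, which yields $|\tau(X,Y)|\ge|\tau(X,Z)|$. That machinery---the $*$-product and the $M/\Pi/W$ decomposition---is the missing idea in your Step~1.

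\textbf{Step 2.} You correctly flag the mismatch between the $(X,Y)$ and $(X,Z)$ region decompositions as the main obstacle, and your proposed Jensen-type repair is vague. The paper sidesteps the mismatch rather than resolving it: it fixes a \emph{single} partition $\{\mathcal D(i)\}$ of the domain of $X$, defines $X_i=X\cdot\mathbf{1}\{X\in\mathcal D(i)\}$ with the induced $Y_i,Z_i$, and then argues via the mutual-information DPI (dividing the joint densities by the constant $f_{X_i\mid X}$) that each restricted triple still satisfies $X_i\to Y_i\to Z_i$. Step~1 is then applied region by region with the \emph{same} weights $w_i$ on both sides, so no matching of two different weight systems is needed. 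Your ``composition of regions'' picture and Jensen fallback are not used.
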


\begin{proof}
    Our approach is to show that $|\tau|$ satisfies DPI, and then utilize that result to show that $CIM$ satisfies DPI.
    We can rewrite the copula of $C_{YZ}$ as the sum of the convex combinations of all patches over the unit-square as
	\begin{align*}
	C_{YZ}(u,v) &= \sum_{i=0}^{m-1} \sum_{j=0}^{m-1} p_{ij} \left[ \alpha_{ij} M^{ij}(u,v) + \beta_{ij} \Pi^{ij}(u,v) + \gamma_{ij} W^{ij}(u,v) \right] \\
	\implies \frac{\partial C_{YZ}(t,v)}{\partial t} &= \sum_{i=0}^{m-1} \sum_{j=0}^{m-1} p_{ij} \left[ \alpha_{ij} \frac{\partial M^{ij}(t,v) }{\partial t} + \beta_{ij} \frac{\partial \Pi^{ij}(t,v) }{\partial t} + \gamma_{ij} \frac{\partial W^{ij}(t,v) }{\partial t} \right]
	\end{align*}
	where $\alpha_{ij} + \beta_{ij} + \gamma_{ij} = 1$ and $\sum_{i=0}^{m-1} \sum_{j=0}^{m-1} p_{ij} = 1$.  Substituting and utilizing the relation $C_{XZ}(u,v) = C_{XY}*C_{YZ}(u,v) = \int_0^1 \frac{\partial C_{XY}(u,t)}{\partial t} \frac{\partial C_{YZ}(t,v)}{\partial t} dt$~\cite{darsow1992}, we get
	
	\begin{align*}
	C_{XZ}(u,v) &= \sum_{i=0}^{m-1} \sum_{j=0}^{m-1} \bigg[ \alpha_{ij} \int_0^1 \frac{\partial C_{XY}^{ij}(u,t)}{\partial t} \frac{\partial M^{ij}(t,v)}{\partial t} dt + \beta_{ij} \int_0^1 \frac{\partial C_{XY}^{ij}(u,t)}{\partial t} \frac{\partial \Pi^{ij}(t,v)}{\partial t} dt \\ 
	& + \gamma_{ij} \int_0^1 \frac{\partial C_{XY}^{ij}(u,t)}{\partial t} \frac{\partial W(t,v)}{\partial t} dt \bigg] \\ 
	& = \sum_{i=0}^{m-1} \sum_{j=0}^{m-1} \left[ \alpha_{ij} [C_{XY}^{ij}*M^{ij}] + \beta_{ij} [C_{XY}^{ij}*\Pi^{ij}] + \gamma_{ij} [C_{XY}^{ij}*W^{ij}] \right]
	\end{align*}
	For each patch, omitting the indices $ij$ and substituting the relations $C*M=C, C*\Pi=\Pi$, and $C*W(u,v)=u-C(u,1-v)$~\cite{darsow1992}, we get
	
	\begin{equation*}
	C_{XZ} = \alpha C_{XY} + \beta \Pi + \gamma [u-C_{XY}(u,1-v)]
	\end{equation*}
	Due to the 2-increasing property of copulas and $\alpha + \beta + \gamma = 1$, we can say that for each patch, $C_{XY} \succeq C_{XY} \implies C_{XY} \succeq \alpha C_{XY}$, and $C_{XY} \succeq \Pi \implies C_{XY} \succeq \beta \Pi$.  Additionally, by assumption, we have 
	\begin{align*}
	C_{XY} \succeq \Pi & \implies C_{XY}(u,v) \geq \Pi(u,v) \ \forall u,v \in \mathbf{I} \\
	& \implies C_{XY}(u,1-v) \geq \Pi(u,1-v) \\
	& \implies C_{XY}(u,1-v) \geq u-\Pi(u,v) \\
	& \implies C_{XY}(u,1-v) \geq u + (-C_{XY}(u,v)) \tag{\textit{because} $C_{XY}(u,v) \geq \Pi(u,v) \  implies -C_{XY}(u,v) \leq -\Pi(u,v)$} \\
	& \implies C_{XY}(u,v) \geq u - C_{XY}(u,1-v) \\
	& \implies C_{XY} \succeq C_{XY}*W \\
	& \implies C_{XY} \succeq \gamma(C_{XY}*W)
	\end{align*}
	Thus, $C_{XY} \succeq C_{XZ}$ with the constraint that $C_{XZ} \succeq \Pi$ for every patch, and because we have a convex combination of patches and increasing the number of patches, $m$, decreases the approximation error to an arbitrarily small amount, it follows that $|\tau|$ satisfies DPI.
	
	
	Now, define $X_i = \begin{cases} X \ \textbf{if} \ X \in \mathcal{D}(i) \\ 0 \ \textbf{else} \end{cases}$, where $\mathcal{D}(i)$ denotes the domain of the $i^{th}$ patch, such that $P(X_i \in \mathcal{D}(i), X_j \in \mathcal{D}(j)) = 0 \ \forall \ i \neq j$ and $\sum_i P(X_i \in \mathcal{D}(i)) = 1$, and $Y_i$ to be the corresponding range of patch $i$, for the joint distribution $f_{XY}$.  Similarly define $Z_i$ to be the range of the $i^{th}$ patch for the joint density $f_{XZ}$.  Then,
	
	\begin{align*}
	f_{X_i,Y_i}(x_i,y_i) & = \frac{f_{XY}(x,y)}{f_{X_i|X}(x_i|x)} \\
	f_{X_i,Z_i}(x_i,z_i) & = \frac{f_{XZ}(x,z)}{f_{X_i|X}(x_i|x)}
	\end{align*}
	
	Recall that because $X$, $Y$, and $Z$ satisfy DPI, the relation
	
	\begin{equation*}
	\int_Y \int_X f_{XY}(x,y) \textbf{log} \left( \frac{f_{XY}(x,y)}{f_X(x) f_Y(y)} \right) dx dy \geq \int_Z \int_X f_{XZ}(x,z) \textbf{log} \left( \frac{f_{XZ}(x,z)}{f_X(x) f_Z(z)} \right) dx dy
	\end{equation*}
	holds.  Additionally, $f_{X_i|X}(x_i|x)$ is a constant.  Hence, the following must hold:
	
	\begin{align*}
	\int_Y \int_X \frac{f_{XY}(x,y)}{f_{X_i|X}(x_i|x)} & \textbf{log} \left( \frac{f_{XY}(x,y)}{f_{X_i|X}(x_i|x) f_X(x) f_Y(y)} \right) dx dy \geq \\
	&  \int_Z \int_X \frac{f_{XZ}(x,z)}{f_{X_i|X}(x_i|x)} \textbf{log} \left( \frac{f_{XZ}(x,z)}{f_{X_i|X}(x_i|x) f_X(x) f_Z(z)} \right) dx dz \\
	\implies \int_{Y_i} \int_{X_i} f_{X_i Y_i}(x_i,y_i) \textbf{log} & \left( \frac{f_{X_i Y_i}(x_i,y_i)}{f_{X_i}(x_i) f_{Y_i}(y_i)} \right) dx_i dy_i \geq \\
	& \int_{Z_i} \int_{X_i} f_{X_i Z_i}(x_i,z_i) \textbf{log} \left( \frac{f_{X_i Z_i}(x_i,z_i)}{f_{X_i}(x_i) f_{Z_i}(z_i)} \right) dx_i dz_i \\
	& \implies X_i \rightarrow Y_i \rightarrow Z_i \\
	& \implies C_{X_i, Y_i} \succeq C_{X_i, Z_i} \\
	& \implies |\tau(X_i, Y_i)| \geq |\tau(X_i, Z_i)|
	\end{align*}
	
	Because $X_i$, $Y_i$, and $Z_i$ is shown to satisfy the DPI, 
	
	\begin{align*}
	\sum_i w_i |\tau(X_i, Y_i)| \geq \sum_i w_i |\tau(X_i, Z_i)| \\
	\implies CIM(X,Y) \geq CIM(X,Z)
	\end{align*}
	because $\sum_i w_i = 1$.  
\end{proof}  

An immediate implication of \textit{CIM} satisfying DPI is that it is a self-equitable statistic.  A dependence measure $D(X;Y)$ is said to be self-equitable if and only if it is symmetric, that is,  ($D(X;Y)=D(Y;X)$), and satisfies $D(X;Y)=D(f(X);Y)$, whenever $f$ is a deterministic function, $X$ and $Y$ are variables of any type, and $X \rightarrow f(X) \rightarrow Y$, implying that they form a Markov chain \cite{mic_not_equitable}.  Self equitability implies that $CIM(X,Y)$ is invariant under arbitrary invertible transformations of $X$ or $Y$ \cite{mic_not_equitable}, which is in-fact a stronger condition than R\'enyi's $5^{th}$ property given in Section 3.2.1.  Because $\tau_N$ also satisfies the concordance properties, the DPI proof holds for discrete and hybrid random variables~\cite{neslehova}.

\subsubsection{Equitability and Noise Properties}
Equitability is a measure of performance of a statistic under noise.  Notionally, an equitable statistic assigns similar scores to equally noisy relationships of different types \cite{mic}.  \citet{mic_not_equitable} formalize this concept as $R^2$-equitability.  Recall that a dependence measure $D[X;Y]$ is $R^2$ equitable if and only if, when evaluated on a joint probability distribution $p(X,Y)$, that corresponds to a noisy functional relationship between two real random variables $X$ and $Y$, the relation given by

\begin{equation} \label{eq:r2equitability}
D[X;Y] = g(R^2([f(X); Y]))
\end{equation}
holds true, where $g$ is a function that does not depend on $p(X,Y)$, $R^2$ denotes the squared Pearson correlation measure, and $f$ is the function defining the noisy functional relationship, namely $Y = f(X) + \eta$, for some random variable $\eta$.  

Through simulations, we observe that $\tau$ is not an equitable metric.  Following \citet{reshef2015equitability}, we compute the equitability curves, which show the relationship between $\tau$ and $R^2$ for different relationships, for the two association patterns $Y=X$ and $Y=e^X$.  These are displayed in Fig.~\ref{fig:equitability_curves}.  The worst interpretable interval, which can be informally defined as the range of $R^2$ values corresponding to any one value of the statistic is represented by the red hashed line.  Fig.~\ref{fig:equitability_curves} depicts a large interval, which is indicative of the lack of $R^2$-equitability of this estimator.  From a theoretical perspective, this can be understood from (\ref{eq:kendalls_tau_estimator}), which shows that the distances between points are not considered, only their relative rankings. Because $\tau$ is not equitable and \textit{CIM} is based on $\tau$, the latter is also not equitable according to (\ref{eq:r2equitability}).  Additionally, the distance argument leads to the conclusion that any concordance-based measures are not $R^2$-equitable.

\begin{figure}

    \centering
	\begin{minipage}[c]{0.3\textwidth}
        \includegraphics[width=\textwidth]{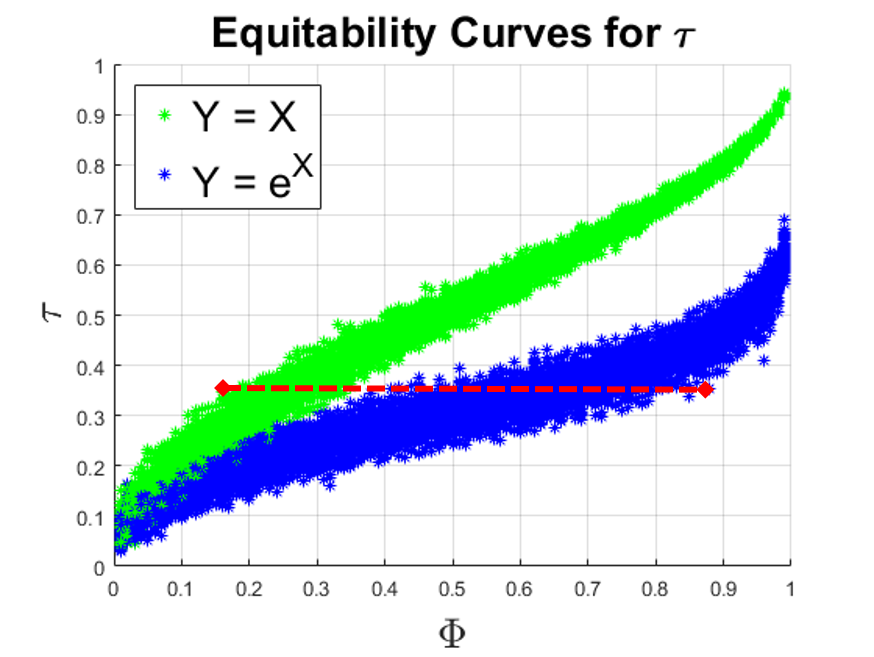}
    \end{minipage}\hfill
   \begin{minipage}[c]{0.6\textwidth}
        \caption{
           Equitability curves for Kendall's $\tau$ for two functional dependencies, where $X \sim U[2,10]$ and $Y=X$ in green and $Y=e^X$ in blue.  Here, we see that the worst interpretable interval, shown by the red hashed line, is large, indicating lack of equitability of $\hat{\tau}$.
        } 
        \label{fig:equitability_curves}
      \end{minipage}
\end{figure}

\subsection{Proposed Algorithms} \label{sec:cim_algo}
In this section, we propose an algorithm to estimate \textit{CIM} metric.  We begin by precisely defining the estimator in a mathematical framework that captures the optimization procedure required to find the regions of monotonicity.  Given an ordered pair of data, $D_i \in D$ such that $D \in \mathcal{R}^2$, let us define $\hat{\tau}_{KL}(D_i)$ to be (\ref{eq:taukl}) for data subset $D_i$, $I_i$ to be the interval over the real line corresponding to the domain of $D_i$, and $J_i$ to be the corresponding range of $D_i$.  For convenience, label this ordered pair to be a region $R_i \coloneqq (D_i \times J_i)$  The domain and range of $D$ can be divided into regions such that


\begin{equation}
\begin{aligned}\label{eq:cim_optim_criterion}
& \underset{(I_i \times J_i)}{\text{max}} |\hat{\tau}_{KL}(D_i)| \ && \forall i =1\dots R, \\
& \textbf{subject to } && (I_i \times J_i) \cap (I_j \times J_j) = \emptyset \\
&&& \bigcup (I_i \times J_i) = D \times R,
\end{aligned}
\end{equation}
where $w_i$ is the ratio of the number of samples in $D_i$ to the total number of samples being considered, and $R$ is the total number of regions.  Then, we can define

\begin{equation}\label{eq:cim_estimator}
    \widehat{CIM}  = \sum_i w_i |\hat{\tau}_{KL}(D_i)|.
\end{equation}

The steps in Listing 1 below present a high level outline of estimating \textit{CIM} index.

\begin{figure}
	\centering
	\begin{subfigure}[t]{0.45\textwidth}
		\includegraphics[width=1\textwidth]{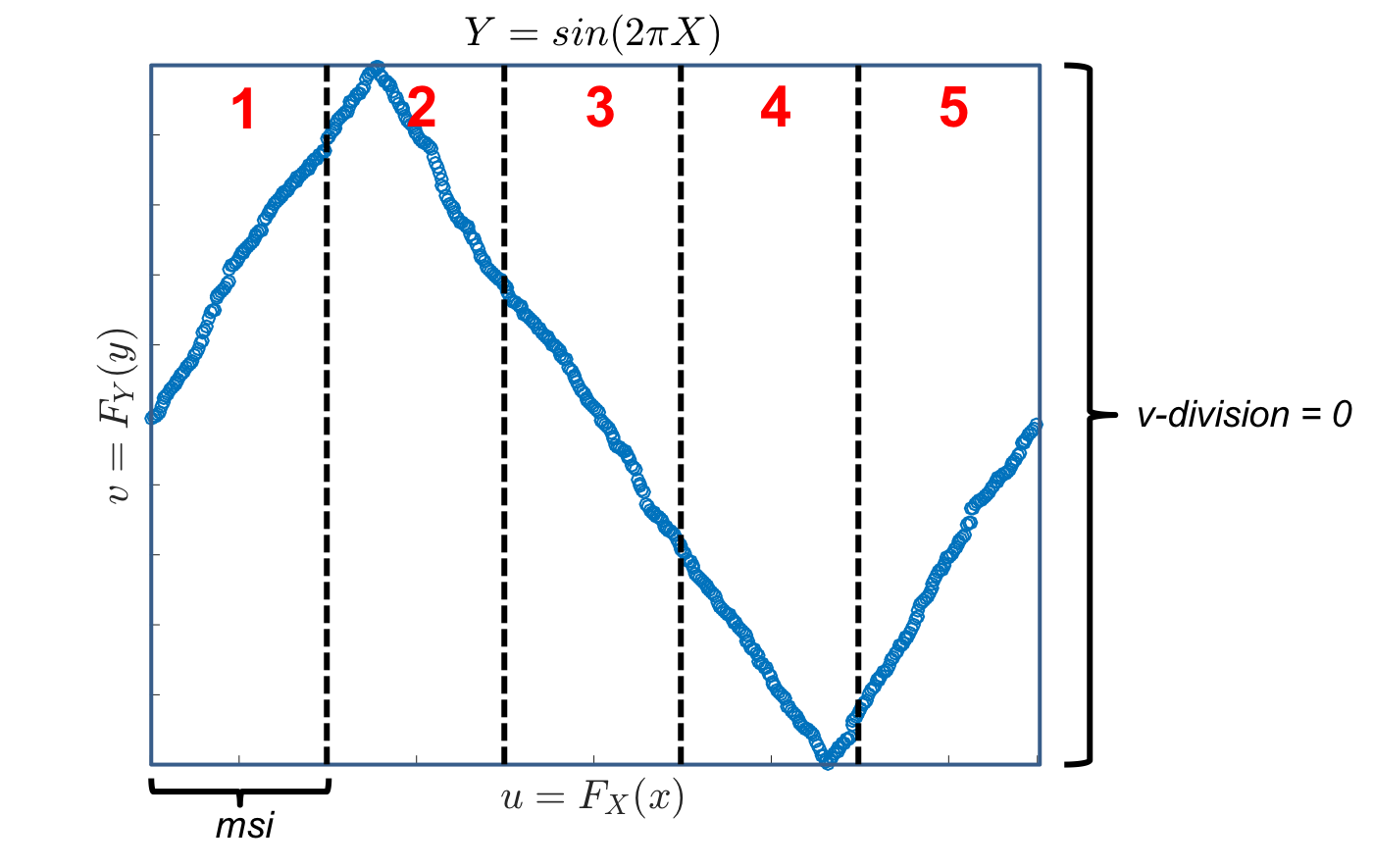}
		\caption{}
		\label{fig:cim_algo_explain_v2_1}
	\end{subfigure}
	\begin{subfigure}[t]{0.45\textwidth}
		\includegraphics[width=1\textwidth]{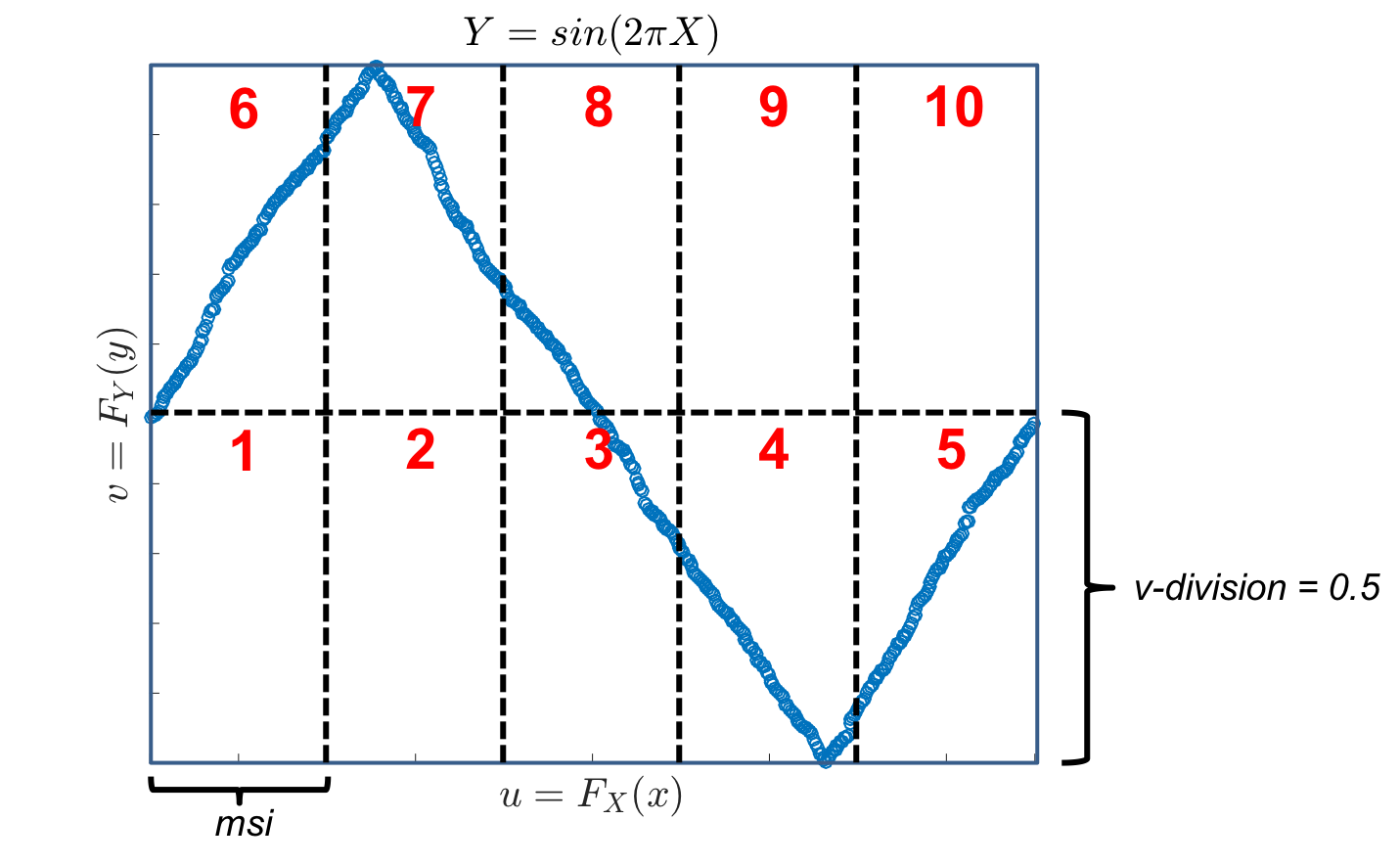}
		\caption{}
		\label{fig:cim_algo_explain_v2_2}
	\end{subfigure}
	\caption{(a) Partition of the unit-square into grids of size $msi$, with $v\text{-division}=0$.  (b) Partition of the unit-square into grids of size $msi$, with $v\text{-division}=0.5$.}
	\label{fig:cim_algo_explain_v2}
\end{figure} 

\begin{mdframed}
\textbf{Listing 1: \textit{CIM} Estimation Overview}
\begin{enumerate}
    \item Transform data into pseudo-observations.
    \item Given an $msi$ and $v$-division, divide the unit-square into grids and number in the pattern shown in Fig.~\ref{fig:cim_algo_explain_v2}.
    
    \item 
    \begin{algorithmic}[1]
        \ForEach {grid $i = 1 \dots n$}
        \State $^i\hat{\tau}_{KL} \gets \tau_{KL}(D_i)$
        \State $^i\hat{\tau}_{KL}' \gets \tau_{KL}(D_i \bigcup D_{i+1})$
        \If{$|^i\hat{\tau}_{KL}| < |^i\hat{\tau}'_{KL}| - \frac{\sigma_{\hat{\tau}_{KL}}}{\sqrt{M_i}} u_{1-\frac{\alpha}{2}}$}
            \State Declare boundary between $R_i$ and $R_{i+1}$ to be a region boundary\footnote{Recommendations on values for $msi$ and $v$-division are provided later in the manuscript}.
        \Else
            \State Merge $R_i$ and $R_{i+1}$ into $R_i$.
        \EndIf
        \EndFor
	\end{algorithmic}
	
    \item Repeat Steps 2 and 3 for different values of $msi$ and $v$-divisions.
    \item Take $\widehat{CIM}$ to be the maximum over all computed combinations in Step 5, as computed by (\ref{eq:cim_estimator}).
\end{enumerate}
\end{mdframed}

More specifically, the first step in approximating \textit{CIM} statistic is to transform the data by applying the probability integral transform, via the empirical cumulative distribution function, to both dimensions of the data independently, generating the pseudo-observations.  Next, the unit square is divided and scanned to identify regions of concordance and discordance, as stated in Listing 2.  The output of this step for independent, linear, and sinusoidal association patterns is shown in Figs.~\ref{fig:regions} (a), (b), and (c), respectively.  
The decision criteria for deciding region boundaries is given by 
\begin{equation}\label{eq:decision_criterion}
|\hat{\tau}_{KL}| < |\hat{\tau}'_{KL}| - \frac{\sigma_{\hat{\tau}_{KL}}}{\sqrt{M}} u_{1-\frac{\alpha}{2}},
\end{equation}
where $\sigma_{\hat{\tau}_{KL}}$ is the standard deviation of the $\hat{\tau}_{KL}$, $M$ is the number of samples, and $u_{1-\frac{\alpha}{2}}$ is the quantile of the standard normal distribution at a significance level of $\alpha$, then a new region boundary is declared.  Stated differently, if the current value $|\hat{\tau}_{KL}|$ has decreased by more than a defined amount of confidence, $\alpha$, from its previously estimated value $|\hat{\tau}'_{KL}|$, then the algorithm declares this a new boundary between monotonic regions.  
Fig.~\ref{fig:cim_algo_explain} pictorially depicts these steps.  In Fig.~\ref{fig:cim_algo_explain} (a), $R_1$ that has been identified by the algorithm as the one that contains points of concordance, noted by $\hat{\tau}'_{KL}$, after several iterations of the for loop in Listing 2.  Additionally, the green region in Fig.~\ref{fig:cim_algo_explain} (a) shows the region under consideration by the algorithm, which is an increment of the one identified by $si$.  $\hat{\tau}'_{KL}$ and $\hat{\tau}_{KL}$ are compared according to the criterion given above.  In Fig.~\ref{fig:cim_algo_explain} (a), the criterion in (\ref{eq:decision_criterion}) yields the decision that the points in the green region belong to the same region, denoted by $R_1$.  In Fig.~\ref{fig:cim_algo_explain} (b), the same criterion in (\ref{eq:decision_criterion}) yields the decision that the points in green belong to a new region, $R_6$, as depicted in Fig.~\ref{fig:cim_sinu}.

In order to maximize the power of \textit{CIM} estimator against the null hypothesis that $X \bigCI Y$, the scanning process is conducted for multiple values of $si$, both orientations of the unit-square ($\varA{u-v}$, and $\varA{v-u}$), and sub-intervals of $u$ and $v$ separately.  The scanning and orientation of the unit square, which maximizes the dependence metric, is the approximate value of \textit{CIM}.  The minimum scanning increment (width of the green region in Fig.~\ref{fig:cim_algo_explain} (a)), noted as $msi$, and the confidence level, $\alpha$, are the two hyperparameters for the proposed algorithm.  The value of $msi$ used in all the simulations, except the sensitivity study, is $\frac{1}{64}$.  The denominator of this value bounds the size and frequency of changes to the monotonicity that the algorithm can detect.  By choosing $\frac{1}{64}$, it is found that all reasonable dependence structures can be captured and identified. The value of $\alpha$ used in all the simulations is $0.2$, which was found to be a good tradeoff between overfitting and detecting new regions experimentally from a statistical power perspective.  The experiments conducted in Section 4.1 and \ref{sec:cim_real_data} corroborate these choices.  The complete pseudocode for estimating \textit{CIM} index is shown in Algorithm \ref{alg:cim} in \ref{appendix_cim_algo}; additionally, a reference implementation is also provided \footnote{\url{https://github.com/stochasticresearch/depmeas/blob/master/algorithms/cim.m}}.

\begin{figure}
	\centering
	\begin{subfigure}[t]{0.35\textwidth}
		\centering
		\includegraphics[width=1\linewidth]{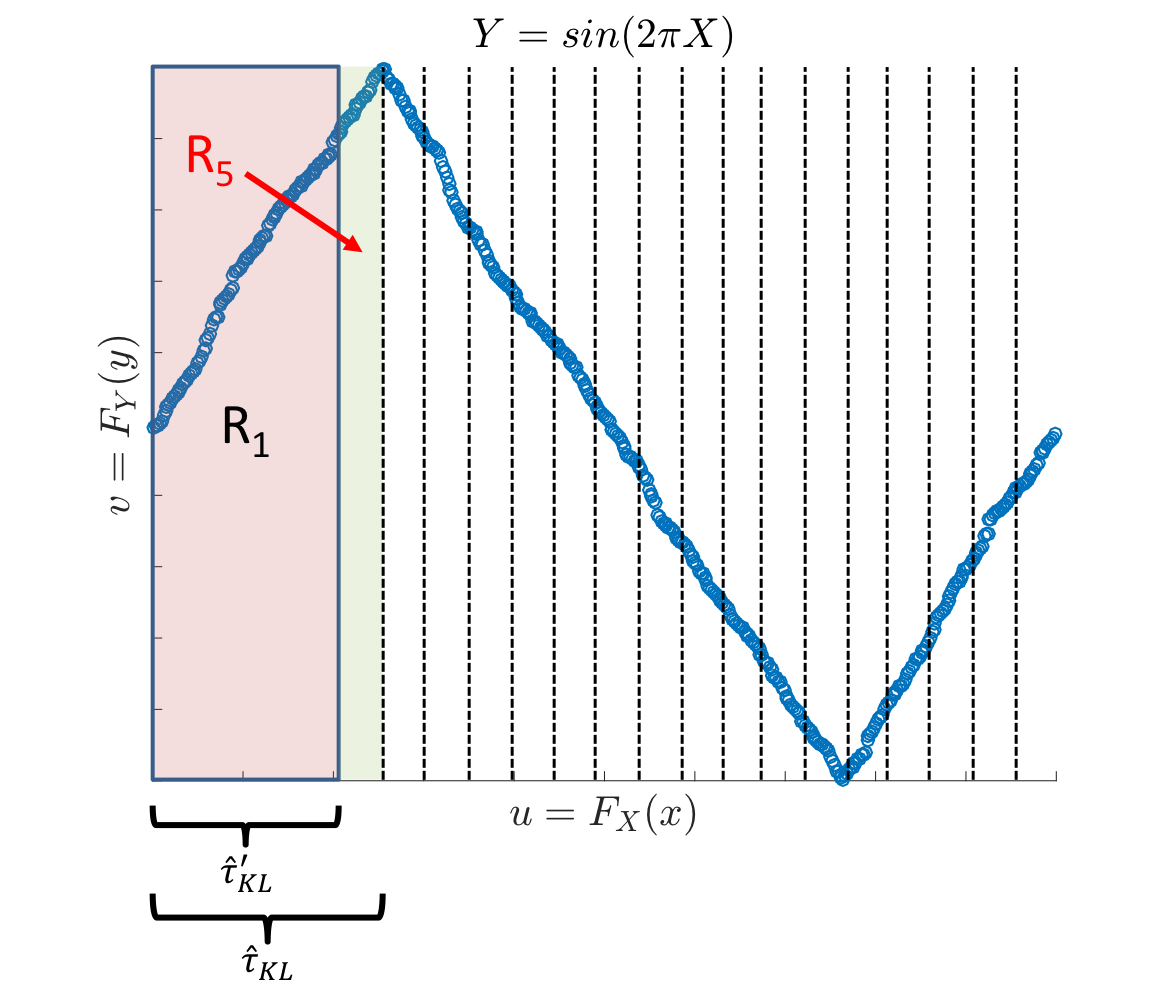}
		\caption{}
		\label{fig:cim_algo_explain1}
	\end{subfigure}%
	\begin{subfigure}[t]{0.35\textwidth}
		\centering
		\includegraphics[width=1\linewidth]{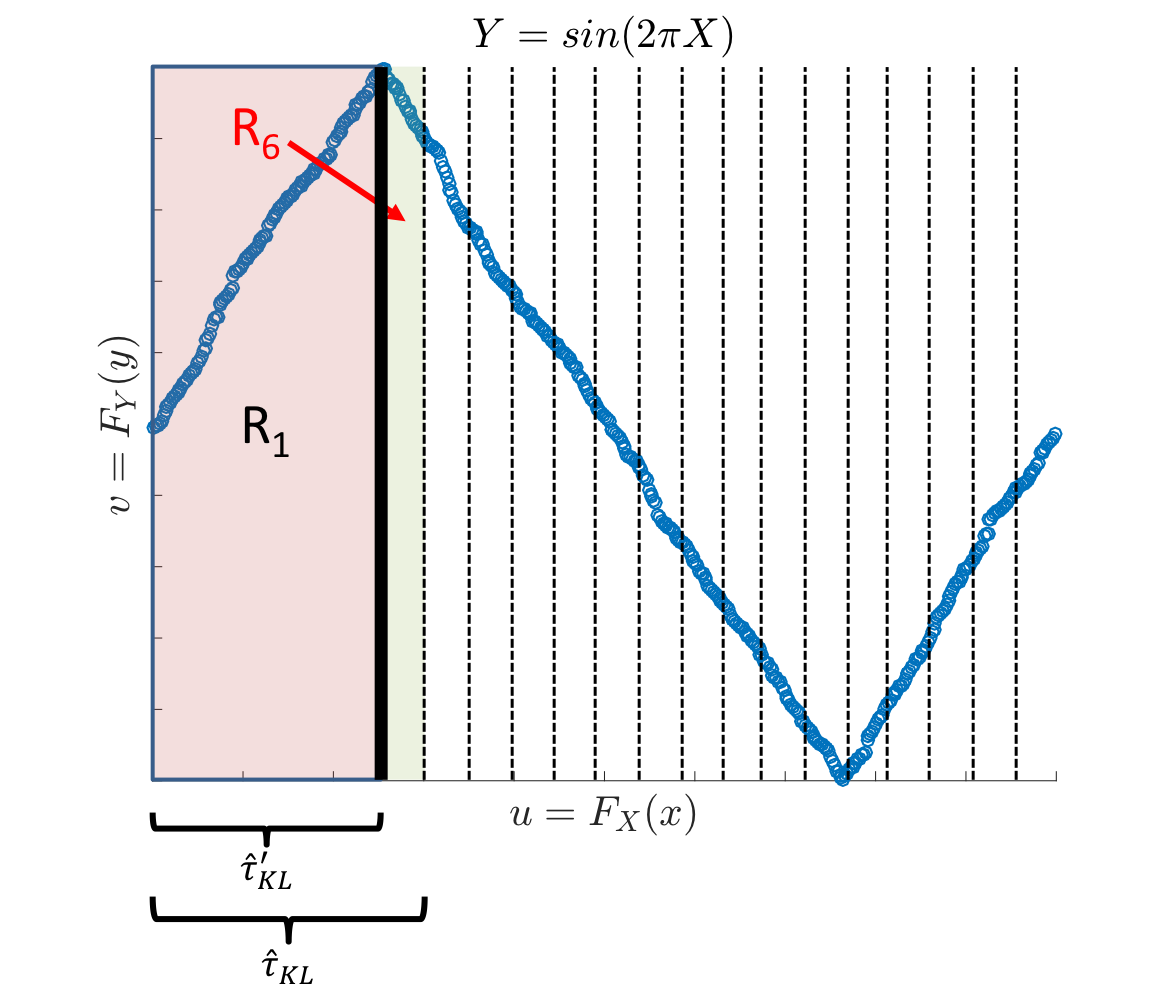}
		\caption{}
		\label{fig:cim_algo_explain2}
	\end{subfigure}
	\caption{Operation of \textit{CIM} algorithm.  In (a), \textit{CIM} algorithm decides that the green region, $D_5$, belongs to the same region as $R_1$, and $R_5$ becomes merged into $R_1$, per the algorithm procedure in Listing 2.  In (b), \textit{CIM} algorithm decides that green region, $R_6$ belongs to a new region, different from $R_1$, due to the decision criterion in (\ref{eq:decision_criterion}) and declares a region boundary, depicted by the solid black line.  In both of these figures, the gridding pattern is shown with the thin dotted black line.}
	\label{fig:cim_algo_explain}
\end{figure}

\subsubsection{Algorithm Validity}\label{sec:alg_validity}
In this section, we discuss the theoretical validity of the convergence of Algorithm \ref{alg:cim}.  From \cite{schmock}, we have

\begin{equation}\label{eq:tau_convergence}
    P\left[ \hat{\tau}_{KL} - \frac{\sigma_{\hat{\tau}_{KL}}}{\sqrt{M}} u_{1-\frac{\alpha}{2}} \leq \tau_{KL} \leq \hat{\tau}_{KL} + \frac{\sigma_{\hat{\tau}_{KL}}}{\sqrt{M}} u_{1-\frac{\alpha}{2}} \right] \xrightarrow{ M \to \infty } 1-\alpha,
\end{equation}
where $M$ is the number of samples available to estimate $\tau_{KL}$ and the other variables were defined above in (\ref{eq:decision_criterion}).  From (\ref{eq:tau_convergence}), we can state the following:

\begin{theorem}\label{thm:tau_convergence}
    The region detection criterion, $|\hat{\tau}_{KL}| < |\hat{\tau}'_{KL}| - \frac{\sigma_{\hat{\tau}_{KL}}}{\sqrt{M}} u_{1-\frac{\alpha}{2}}$, guarantees that as $M \to \infty$, a change in monotonicity in the dependence structure will be detected with a probability of $1-\alpha$, where $\alpha$ is a configurable confidence level, and $M$ is the number of samples available to estimate $\tau_{KL}$.
\end{theorem}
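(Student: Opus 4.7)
The plan is to derive Theorem \ref{thm:tau_convergence} as a direct consequence of the asymptotic confidence statement in (\ref{eq:tau_convergence}), by reinterpreting the region-detection rule as a one-sided confidence test for the population-level Kendall coefficient. First, I would apply (\ref{eq:tau_convergence}) to the estimator $\hat{\tau}_{KL}$ computed on the tentatively extended region, obtaining, for any prescribed confidence level $\alpha$, a two-sided asymptotic $(1-\alpha)$ confidence interval of half-width $\frac{\sigma_{\hat{\tau}_{KL}}}{\sqrt{M}} u_{1-\alpha/2}$ around $\hat{\tau}_{KL}$, which in particular yields the one-sided upper bound $|\tau_{KL}| \leq |\hat{\tau}_{KL}| + \frac{\sigma_{\hat{\tau}_{KL}}}{\sqrt{M}} u_{1-\alpha/2}$ with asymptotic probability $1-\alpha$.

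Second, I would rewrite the detection criterion as
\[
|\hat{\tau}_{KL}| + \frac{\sigma_{\hat{\tau}_{KL}}}{\sqrt{M}} u_{1-\frac{\alpha}{2}} < |\hat{\tau}'_{KL}|,
\]
so that triggering the criterion is equivalent to the observed $|\hat{\tau}'_{KL}|$ lying strictly above the upper $(1-\alpha)$ confidence limit for $|\tau_{KL}|$ derived from (\ref{eq:tau_convergence}). This reframes the algorithmic rule as a one-sided hypothesis test at level $\alpha$: reject the null that the current region still behaves monotonically (i.e., that extending it does not depress $|\tau_{KL}|$) whenever the confidence upper limit for the extended-region population tau lies below the previously observed $|\hat{\tau}'_{KL}|$.

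Third, I would formalize what a true change in monotonicity entails. By Theorem \ref{thm1}, a genuine change in monotonicity at a boundary splits the pseudo-observations into pieces of opposite concordance sign, so by the additive structure of the numerator in (\ref{eq:taukl}), extending a comonotonic (respectively, countermonotonic) region by a sub-region of opposite sign strictly reduces the magnitude of the population-level Kendall coefficient: $|\tau_{KL}| < |\tau'_{KL}|$. Consistency of $\hat{\tau}'_{KL}$ on the already-identified region, together with the shrinking bandwidth $\frac{\sigma_{\hat{\tau}_{KL}}}{\sqrt{M}} u_{1-\alpha/2} \to 0$ and the confidence-interval event of (\ref{eq:tau_convergence}), then implies the criterion fires with asymptotic probability at least $1-\alpha$: the only failure mode is the complementary event in (\ref{eq:tau_convergence}), whose probability tends to $\alpha$.

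The principal obstacle will be in the third step, namely justifying that a population-level monotonicity change at the resolution of the algorithm yields a strictly positive and quantifiable gap $|\tau'_{KL}| - |\tau_{KL}|$ which, for $M$ large enough, dominates the shrinking threshold $\frac{\sigma_{\hat{\tau}_{KL}}}{\sqrt{M}} u_{1-\alpha/2}$. This requires linking the grid hyperparameter $msi$ to a minimum detectable slope change; once that link is established, combining (\ref{eq:tau_convergence}) with the consistency of $\hat{\tau}'_{KL}$ closes the argument, and the $1-\alpha$ guarantee is inherited directly from the coverage probability of the underlying confidence interval.
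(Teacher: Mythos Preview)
Your proposal is correct and follows essentially the same route as the paper: both invoke (\ref{eq:tau_convergence}) so that as $M\to\infty$ the threshold $\frac{\sigma_{\hat{\tau}_{KL}}}{\sqrt{M}} u_{1-\alpha/2}$ vanishes and the criterion collapses to the population comparison $|\tau_{KL}|<|\tau'_{KL}|$, and both then argue this inequality characterizes a genuine monotonicity change. The paper's version is even terser---it simply passes to the limit (and in fact concludes with detection probability $1$ rather than $1-\alpha$), assumes noise stationarity, and does not explicitly invoke Theorem~\ref{thm1} or the hypothesis-test reframing you add; your more careful treatment of the population gap and the shrinking bandwidth is an elaboration rather than a departure.
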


\begin{proof}
With $\alpha \to 0, n \to \infty$, from (\ref{eq:tau_convergence}), \textit{CIM} detection criterion given by (\ref{eq:decision_criterion}) reduces to

\begin{equation*}
    \lvert \tau_{KL} \rvert < \lvert \tau'_{KL} \rvert.
\end{equation*}

Under the assumption that the noise distribution is stationary over the data being analyzed, in the limit as $n \to \infty$, if points belong to the same region, then $\lvert \tau_{KL} \rvert \geq \lvert \tau'_{KL} \rvert$, and $\lvert \tau_{KL} \rvert < \lvert \tau'_{KL} \rvert$ if newly added points belong to a different region.  Thus, as $n \to \infty$, the region detection criterion given by (\ref{eq:decision_criterion}) will detect any region boundary with probability of $1$.

\end{proof}

Theorem \ref{thm:tau_convergence} guarantees that if the unit square is scanned across $v$ for the full-range of $u$, any injective or surjective association pattern's changes in monotonicity will be detected with probability of $1-\alpha$ as $n \to \infty$.  For association patterns which map multiple values of $y$ to one value of $x$ (such as the circular pattern), the range of $u$ is divided and scanned separately.  Because the dependence structure is not known a-priori, various scans of the unit-square are performed at different ranges of $u$ and $v$.  As stated above, the configuration that maximizes the dependence metric is then chosen amongst all the tested configurations.  

\subsubsection{Algorithm Performance}\label{sec:alg_performance}
In this section we investigate the performance of Algorithm \ref{alg:cim} using various synthetic datasets.  We show that the proposed algorithm is robust to both input hyperparameters, $msi$ and $\alpha$.  We also investigate the convergence properties and speed of convergence of $\widehat{CIM}$ as estimated by Algorithm \ref{alg:cim}.  Because the algorithm performance depends heavily on how well it detects the regions of concordance and discordance, we begin by characterizing the region detection performance.

To test the region detection performance, we simulate noisy nonmonotonic relationships of the form
\begin{equation}\label{eq:region_detection_generating}
Y = 4(X-r)^2 + \mathcal{N}(0,\sigma^2),
\end{equation}
where $X \sim U(0,1)$.  By varying $r$ and the number of samples, $M$, that are drawn from $X$, nonmonotonic relationships of this form comprehensively test the algorithm's ability to detect regions for all types of association.  This is because $r$ directly modulates the angle between the two piecewise linear functions at a region boundary, and the number of samples and the noise level test the performance of the decision criterion specified previously in (\ref{eq:decision_criterion}) as a function of the number of samples.  After generating data according to (\ref{eq:region_detection_generating}) for various values of $r$, $M$, and $\sigma$, Algorithm \ref{alg:cim} is run on the data and the boundary of the detected region is recorded, for 500 Monte-Carlo simulations.  A nonparametric distribution of the detected regions by Algorithm \ref{alg:cim} for different values of $r$ and $M$ is displayed in Fig.~\ref{fig:region_detection_performance}.  It is seen that on average, the algorithm correctly identifies the correct region boundary.  In the scenario with no noise, the variance of the algorithm's detected region boundary is small, regardless of the sample size.  For larger levels of noise, the variance decreases with the sample size, as expected.  
\begin{figure}
	\centering
	\includegraphics[width=0.9\textwidth]{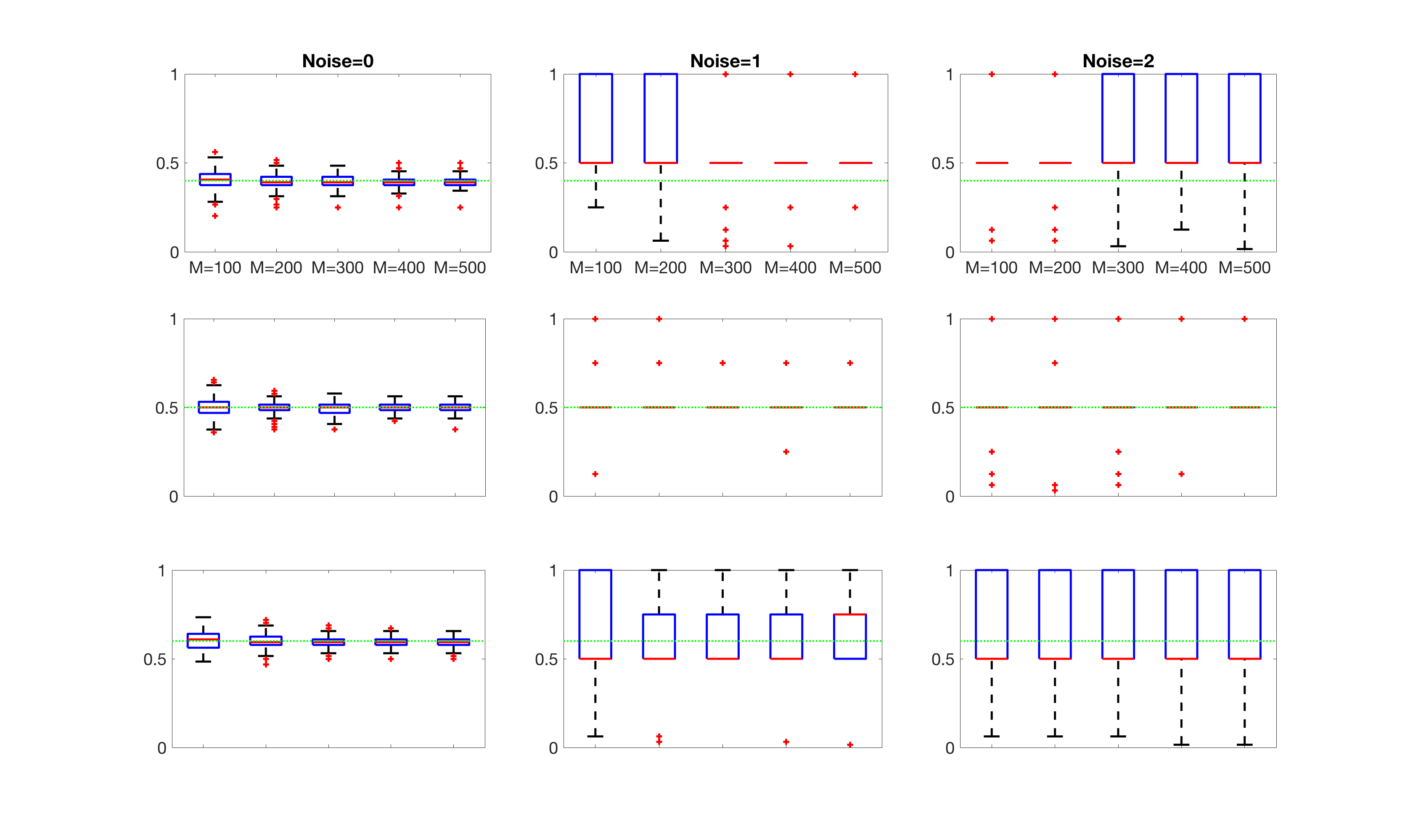}
	\caption{Region boundaries detected by Algorithm \ref{alg:cim} for various noise levels and sample sizes.  The hashed green line represents the actual region boundary, $r$, and the box and whisker plots represent the non-parametric distribution of the detected region boundary by Algorithm \ref{alg:cim}, for an $msi=\frac{1}{64}$ and $\alpha=0.2$.}
	\label{fig:region_detection_performance}
\end{figure}

Next, we investigate the sensitivity of Algorithm \ref{alg:cim} to the hyperparameter $msi$.  For various dependency types, we compute the maximum deviation of the $\textit{CIM}$ value over 500 Monte-Carlo simulations for sample sizes, $M$, ranging from 100 to 1000 for $msi$ taking on one the values of the set $\{\frac{1}{4}, \frac{1}{8}, \frac{1}{16}, \frac{1}{32}, \frac{1}{64}\}$, for $\alpha=0.2$.  Fig.~\ref{fig:algo_sensitivity} shows the maximum deviation of the estimated CIM value for each value of noise over sample sizes ranging from 100 to 1000 for eight different association patterns for these values of $msi$.  The results show that when the dependency is not masked by the $msi$ parameter, the algorithm's maximum deviation over the noise range, sample sizes, and dependencies tested is no greater than \num{4e-3}.  This is shown by the blue lines for the linear, quadratic, fourth-root, circular, and step function dependencies, and by the red lines in the cubic and sinusoidal dependencies.  When the dependency is masked by the $msi$, as expected, the algorithm is sensitive to the chosen value of $msi$.  As seen in Fig~\ref{fig:algo_sensitivity}, the maximum deviation of the algorithm for low-noise levels can reach a value close to $0.5$ for the low-frequency sinusoidal dependency.  From this, we can infer that small values of $msi$ should be chosen for more robust results for estimating $\widehat{CIM}$, as they empirically have minimal effect on measuring association patterns that do not have many regions of monotonicity, but have a positive effect on detecting and measuring dependencies with many regions of monotonicity.  The only drawback of choosing very small values of $msi$ is that they require more computational resources.

Next, we test the sensitivity of the algorithm to various values of $\alpha$.  More specifically, for the various dependence structures that are considered, we compute the maximum deviation of the $\textit{CIM}$ estimation over 500 Monte-Carlo simulations for sample sizes, $M$, ranging from 100 to 1000 for $\alpha$ taking on one of the values of the set $\{0.05, 0.10, 0.15, 0.20, 0.25, 0.30\}$, for $msi=\frac{1}{64}$.  Fig.~\ref{fig:algo_alpha_sensitivity} displays the maximum deviation of the estimated CIM value for each value of noise over sample sizes ranging from 100 to 1000 for eight different association patterns.  The results show that the algorithm is minimally sensitive to the value of $\alpha$ for dependencies that have more than a small number of monotonic regions, such as the sinusoidal dependencies.  This is easily explained by (\ref{eq:tau_convergence}) and (\ref{eq:decision_criterion}) which show that the upper bound of the variance of the $\tau$ estimate is high with small sample sizes and that the small number of samples combined with a large $\alpha$ prevent reliable detection of region boundaries.  

Finally, following Theorem \ref{thm:tau_convergence}, we demonstrate through simulations that Algorithm \ref{alg:cim} converges to the true \textit{CIM} value.  The results of the algorithm's convergence performance are displayed in Fig.~\ref{fig:algo_convergence}.  The subtitles for each subplot indicate the number of samples required such that the error between the estimated value, $\widehat{\textit{CIM}}$, and the true value, $\textit{CIM}$, over all computed noise levels is less than 0.01 over 500 Monte-Carlo simulations.  It can be seen that for the dependencies with small numbers of regions of monotonicity, Algorithm \ref{alg:cim} converges very quickly to the true value over all noise levels.  On the other hand, the dependencies with a large number of regions of monotonicity, such as the high frequency sinusoidal relationship depicted in the fifth subplot, a larger number of samples is required in order to ensure convergence.  This can be explained from the fact that the variance of the $\widehat{\textit{CIM}}$ increases as the number of samples decreases.  Thus, with a smaller number of samples in a dependency structure, the increased variance leads Algorithm \ref{alg:cim} to make incorrect decisions regarding the region boundaries.  As the number of samples increases, the detection performance increases.

\begin{figure}
	\centering
	\includegraphics[width=0.9\textwidth]{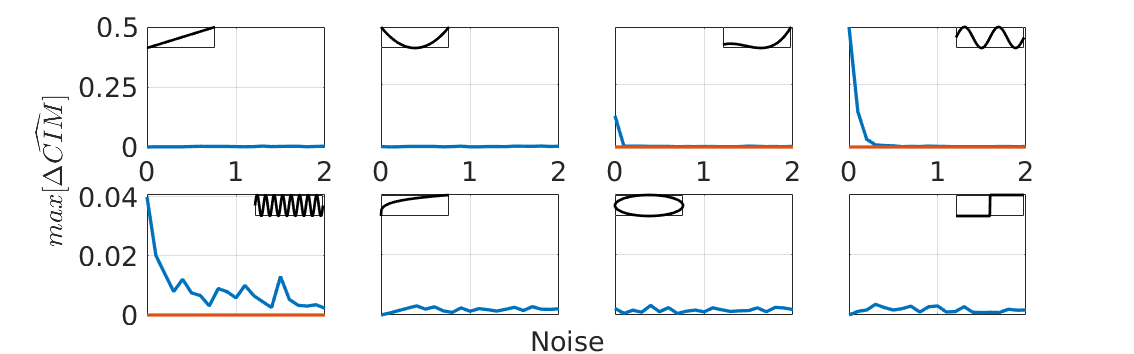}
	\caption{The maximum sensitivity of Algorithm \ref{alg:cim} for various association patterns (shown in the upper left inset) swept over different values of noise for sample sizes ($M$) ranging from 100 to 1000 and $msi$ taking on one of the values in the set $\{\frac{1}{4}, \frac{1}{8}, \frac{1}{16}, \frac{1}{32}, \frac{1}{64}\}$, with $\alpha=0.2$.  The red lines show the maximum sensitivity when the $msi$ value does not mask the dependence structure for the cubic and sinusoidal dependencies. }
	\label{fig:algo_sensitivity}
	
	\vspace*{\floatsep}
	
	\includegraphics[width=0.9\textwidth]{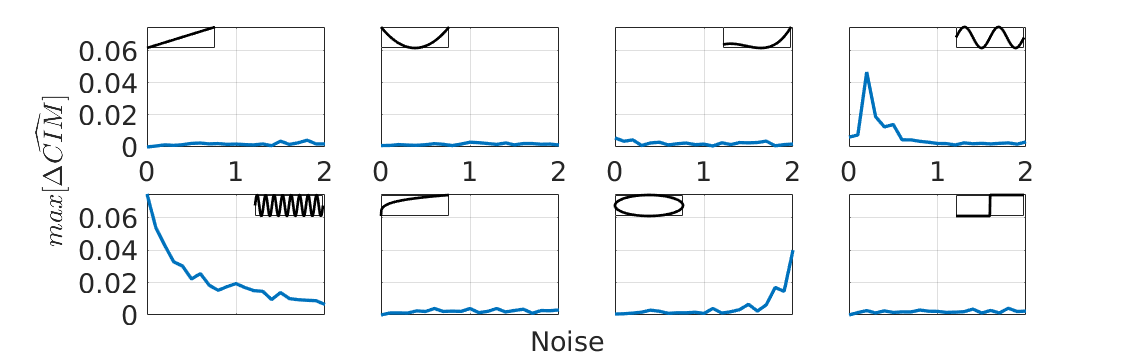}
	\caption{The maximum sensitivity of Algorithm \ref{alg:cim} for various association patterns (shown in the upper left inset) swept over different values of noise for sample sizes,$M$, ranging from 100 to 1000 and $\alpha$ taking on one of the values in the set $\{0.05, 0.10, 0.15, 0.20, 0.25, 0.30\}$, with $msi=\frac{1}{64}$.}
	\label{fig:algo_alpha_sensitivity}
	
	\vspace*{\floatsep}
	
	\includegraphics[width=0.9\textwidth]{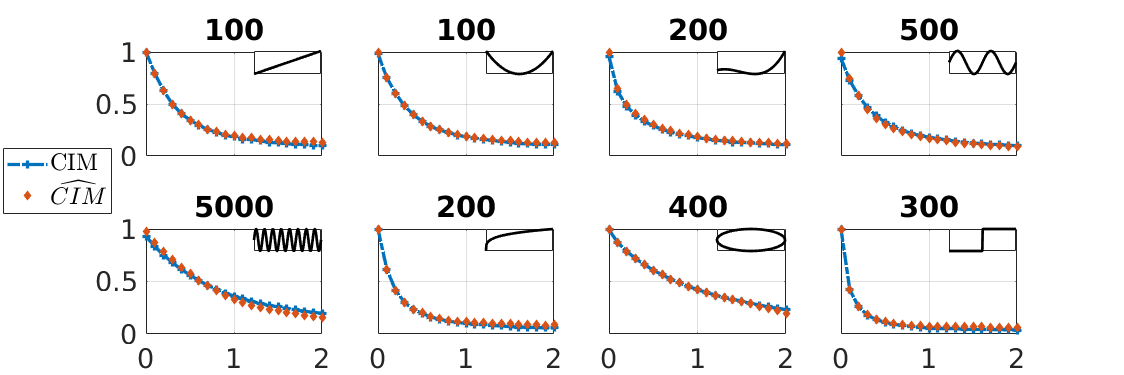}
	\caption{Theoretical and estimated values of \textit{CIM} for various association patterns shown in the upper right inset swept over different noise levels.  The subtitle shows the minimum number of samples for $\widehat{CIM}$ to be within 0.01 of \textit{CIM} over all noise levels tested for 500 Monte-Carlo simulations.  The simulations were conducted with $\alpha=0.2$ and $msi=\frac{1}{64}$.}
	\label{fig:algo_convergence}
	
\end{figure}

\subsubsection{Null Distribution of $\widehat{CIM}$}
The null distribution of $\widehat{CIM}$ can be theoretically modeled by

\begin{equation}\label{eq:cim_null_distrbution_theoretical}
    Z_{\widehat{CIM}} \sim \sum_i^R w_i |\mathcal{N}(0,\sigma_i)|,
\end{equation}
where $\sigma_i$ is the standard deviation of Kendall Tau's estimate for the given number of samples in region $i$, due to the asymptotic normality of the $\tau$ estimator, $R$ is the number of regions that were detected by the algorithm, and $w_i$ is the same as in (\ref{eq:cim_estimator}).  It is found experimentally that $Z_{\widehat{CIM}}$ can be approximated by the Beta distribution, as displayed in Fig.~\ref{fig:cim_hat_qqplot}.  Figs.~\ref{fig:cim_hat_qqplot} (b) and (c) both show that as the sample size increases, the $\alpha$ shape parameter remains relatively constant while the $\beta$ shape parameter increases linearly as a function of $M$.  This roughly corresponds to a distribution converging to a delta function centered at zero.  This is a desirable property because it implies that \textit{CIM} approximation algorithm yields a value close to $0$ for data drawn from independent random variables with a decreasing variance as the number of samples used to compute $\textit{CIM}$ increases.  The error between the Beta approximation and the true distribution in (\ref{eq:cim_null_distrbution_theoretical}) is characterized in Fig.~\ref{fig:nulldist_approx_error}.  Here, the error is captured as the Hellinger distance between the true distribution, under the assumption of two regions being detected (which was empirically tested across the sample sizes), and the Beta approximation with parameters defined in Fig.~\ref{fig:cim_hat_qqplot} (b) and (c).  It is seen that as the sample size increases, the distance between the true and approximate distributions decreases.

\begin{figure}
	\centering
	\includegraphics[width=1\textwidth]{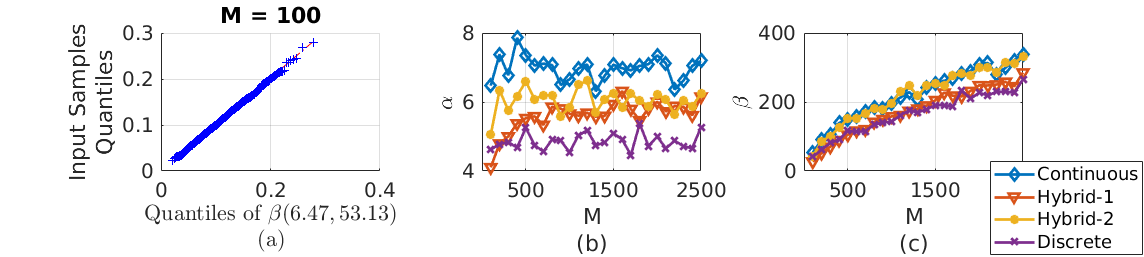}
	\caption{(a) QQ-Plot of $CIM$ for continuous random variables $X$ and $Y$ such that $X \bigCI Y$ and $M=100$, (b) $\alpha$ of the distribution of \textit{CIM} as a function of $M$, (c) $\beta$ of the distribution of \textit{CIM} as a function of $M$ }
	\label{fig:cim_hat_qqplot}
\end{figure}

\begin{figure}
    \centering
    \includegraphics[width=0.5\textwidth]{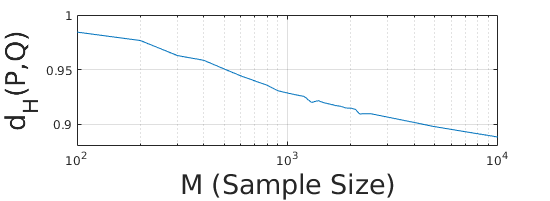}
    \caption{Hellinger Distance between the true null distribution for detection of two regions as given by (\ref{eq:cim_null_distrbution_theoretical}), and the approximated Beta distribution}
    \label{fig:nulldist_approx_error}
\end{figure}

\subsubsection{Computational Complexity}
In this section we describe the computational complexity of computing \textit{CIM} algorithm above.  We propose a new algorithm to compute $\hat{\tau}_{KL}$ to achieve a computational complexity of $\mathcal{O}(n^2)$ when estimating \textit{CIM} for continuous and discrete random variables, and $\mathcal{O}(n^3)$ when estimating \textit{CIM} for hybrid random variables.  

The core of Algorithm \ref{alg:cim}, described earlier, consists of repeated computations of $\hat{\tau}_{KL}$.  If one were to n{\"a}ively compute this, by recomputing the number of concordant and discordant pairs every time a new region was tested, the operations required to compute the number of concordant and discordant samples would exponentially increase.  Instead, we propose another algorithm to compute $\hat{\tau}_{KL}$ efficiently while accumulating new samples into the batch of data for which the value of $\hat{\tau}_{KL}$ is desired (i.e. when expanding the region by $si$).  The essence of this algorithm is that it pre-sorts the data in the direction being scanned, so that the number of concordant and discordant samples do not need to be recomputed in every iteration of the scanning process.  Instead, the sorted data allows us to store in memory the number of concordant and discordant samples, and update this value every time a new sample is added to the batch of samples being processed.  Additionally, during the sorting process, the algorithm converts floating point data to integer data by storing the statistical ranks of the data rather than the data itself, allowing for potentially efficient FPGA or GPU based implementations.  The efficient algorithm to compute $\hat{\tau}_{KL}$ for continuous and discrete data, given a new sample, is described in the \textproc{consume} function of Algorithm \ref{alg:taukl}, which is detailed in \ref{alg2_appendix}.  From Algorithm \ref{alg:taukl}, it is seen that if $n$ samples are to be processed, then the \textproc{consume} function is called $n$ times.  For clarity of exposition, the remaining helper functions are not presented; however, their operation is only to initialize the variables.

The \textproc{consume} function has a computational complexity of $\mathcal{O}(n)$, due to lines 9 and 10 in Algorithm \ref{alg:taukl}, which require computation over a vector of data.  The consume function is called $n$ times by Algorithm \ref{alg:cim} in order to process all the samples, yielding a total complexity of $\mathcal{O}(n^2)$.  It should be noted that lines 9 and 10 in Algorithm \ref{alg:taukl} are vectorizable operations, and the initial presorting is an $\mathcal{O}(n log(n))$ operation.  For hybrid data, additional calculations are required in the \textproc{consume} function in order to count the number of overlapping samples between discrete outcomes in the continuous domain, as described in (\ref{eq:taukl}).  This requires an additional $\mathcal{O}(n)$ operations, bringing the computational complexity of Algorithm \ref{alg:cim} to process hybrid random variables to $\mathcal{O}(n^3)$.  For clarity, the pseudocode to compute the overlapping points is not shown in Algorithm \ref{alg:taukl}, but a reference implementation to compute $\hat{\tau}_{KL}$ is provided\footnote{\url{https://github.com/stochasticresearch/depmeas/blob/master/algorithms/taukl_s.m}}.

\section{Simulations} \label{sec:exp}
In this section, we compare \textit{CIM} to other metrics of dependence and analyze their performance.  We begin by conducting synthetic data experiments to understand the bounds of performance for all state-of-the-art dependence metrics.  We then apply \textit{CIM} to real world datasets from various disciplines of science, including computational biology, climate science, and finance.

\subsection{Synthetic Data Simulations} \label{sec:synth_experiments}

Following \cite{simonandtibs}, we begin by comparing the statistical power of \textit{CIM} against various estimators of mutual information, including k-nearest neighbors estimation \cite{knn_mi}, adaptive partitioning MI estimation \cite{shannonapMI}, and MI estimation based on von Mises expansion \cite{vonMisesMI}.  The motivation for this simulation stems from Section \ref{sec:dpi}, where it was proved that \textit{CIM} satisfied the DPI and thus, could be substituted for measures of mutual information.    Fig.~\ref{fig:cim_vs_ite} compares these metrics and shows that \textit{CIM} outperforms the compared estimators of mutual information for all dependency types considered \footnote{The source code for Shannon Adaptive Partitioning and von Mises based MI estimators is from the ITE Toolbox \cite{itetoolbox}.  K-NN based MI estimation source code is from \url{https://www.mathworks.com/matlabcentral/fileexchange/50818-kraskov-mutual-information-estimator}}.  The results displayed in Fig.~\ref{fig:cim_vs_ite} are from simulations with a sample size of $M=500$.  Although we do not include additional plots here, even for small sample sizes such as $M=100$ (which are typical for biological datasets where estimators of the \textit{MI} are commonly used), \textit{CIM} outperforms the compared estimators of \textit{MI} for all the association patterns tested.  These simulations suggest that \textit{CIM} can indeed replace estimators of the \textit{MI} when used with algorithms which rely on the DPI, such as ARACNe \cite{aracne} or MRNET \cite{mrnet}.  

\begin{figure}
	\centering
	\includegraphics[width=1\textwidth]{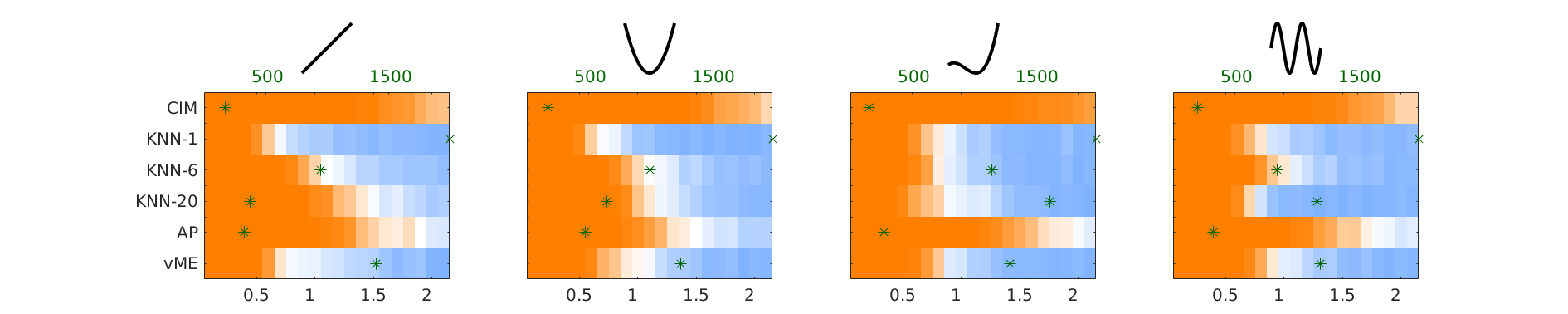}
	\includegraphics[width=1\textwidth]{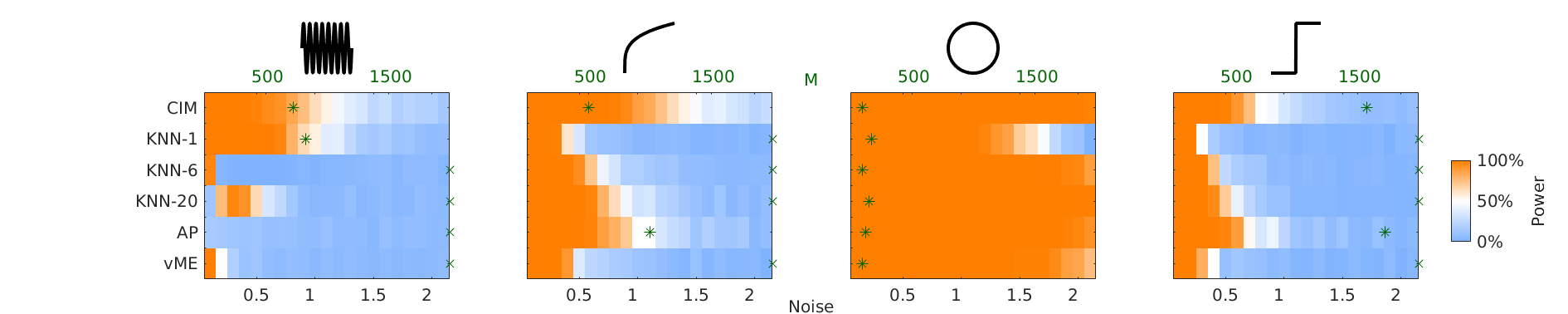}
	\caption{Statistical power of \textit{CIM} and various estimators of mutual information including the KNN-1, the KNN-6, the KNN-20, Adaptive Partitioning, and von Mises Expansion for sample size $M = 500$ and computed over 500 Monte-Carlo simulations. Noise-free form of each association pattern is shown above each corresponding power plot.  The green asterisk displays the minimum number of samples required to achieve a statistical power of $0.8$ for the different dependency metrics considered for a noise level of $1.0$.  A green $\times$ symbol is shown if the number of samples required is beyond the scale of the plot.}
	\label{fig:cim_vs_ite}
\end{figure}

We also investigate the power characteristics of \textit{CIM} and estimators of mutual information as a function of the sample size. The green asterisk in  Fig.~\ref{fig:cim_vs_ite} displays the minimum number of samples required to achieve a statistical power of $0.8$ for the different dependency metrics considered for a noise level of $1.0$.  A green $\times$ symbol is shown if the number of samples required is beyond the scale of the plot.  It is seen that \textit{CIM} outperforms the compared estimators for all dependency types considered.  In general, \textit{CIM} displays good small sample performance because it is based on Kendall's $\tau$, which is shown to have superior small sample performance as compared to other metrics of monotonic dependence \cite{bonett2000, USGSbook}.

Next, we compare \textit{CIM} to other state-of-the-art dependence metrics, which are not proven to satisfy the DPI.  We begin by comparing the estimated indices for various functional and stochastic dependencies for continuous and discrete marginals.  The results, displayed in Fig.~\ref{fig:wikipedia}, show that \textit{CIM} performs equivalently to other leading measures of dependence, including $MIC_e$, the \textit{RDC}, the \textit{dCor}, the \textit{Ccor}, and \textit{CoS} for continuous and discrete random variables in the absence of noise.  \textit{CIM} achieves $+1$ for all functional dependencies with continuous marginals (Fig.~\ref{fig:wikipedia} (a), (c)) and for monotonic functional dependencies with discrete marginals (Fig.~\ref{fig:wikipedia} (a), (c)), and values close to $+1$ for nonmonotonic functional dependencies with discrete marginals (Fig.~\ref{fig:wikipedia} (d), (e), (f)).  Only the \textit{RDC} shows similar performance.  However, as shown in Fig.~\ref{fig:wikipedia} (b) and (e), the \textit{RDC} has the highest bias in the independence case.  Discrete random variables are not tested for the \textit{Ccor} and \textit{CoS} metrics because they were not designed to handle discrete inputs.  

Fig.~\ref{fig:cim_vs_dep} compares the statistical power of \textit{CIM} to other state-of-the-art dependence metrics which are not proven to satisfy the DPI.  The results in Fig.~\ref{fig:cim_vs_dep} show that \textit{CIM} displays the best performance for quadratic, cubic, and sinusoidal dependence.  For linear, fourth-root, and step function dependence, it performs better than the \textit{RDC}, \textit{TIC}, and the \textit{Ccor}, but is beaten by \textit{CoS} and the \textit{dCor}.  In the high frequency sinusoidal case, it is more powerful than the \textit{RDC} but less powerful than the \textit{TIC}.  This can be explained by the fact that the region configuration which maximizes the dependence (\textit{lines 25-26} in Algorithm \ref{alg:cim}) becomes more ambiguous as the noise level increases when multiple partitions of the range space of $X-Y$ are needed.  Our general observations are that \textit{CoS} and the \textit{dCor} are the best for monotonic dependencies, \textit{CIM} is the best for small numbers of monotonic regions, and \textit{TIC} performs extremely well for high frequency sinusoidal dependencies.  The sample size requirements, again shown with the green asterisk and plus symbols, reflect these same observations.

\footnotetext{The code for these dependency metrics and simulations is provided here: \url{https://github.com/stochasticresearch/depmeas}}

\begin{figure}
    \centering
	\begin{subfigure}[b]{0.20\textwidth}
		\includegraphics[width=\linewidth]{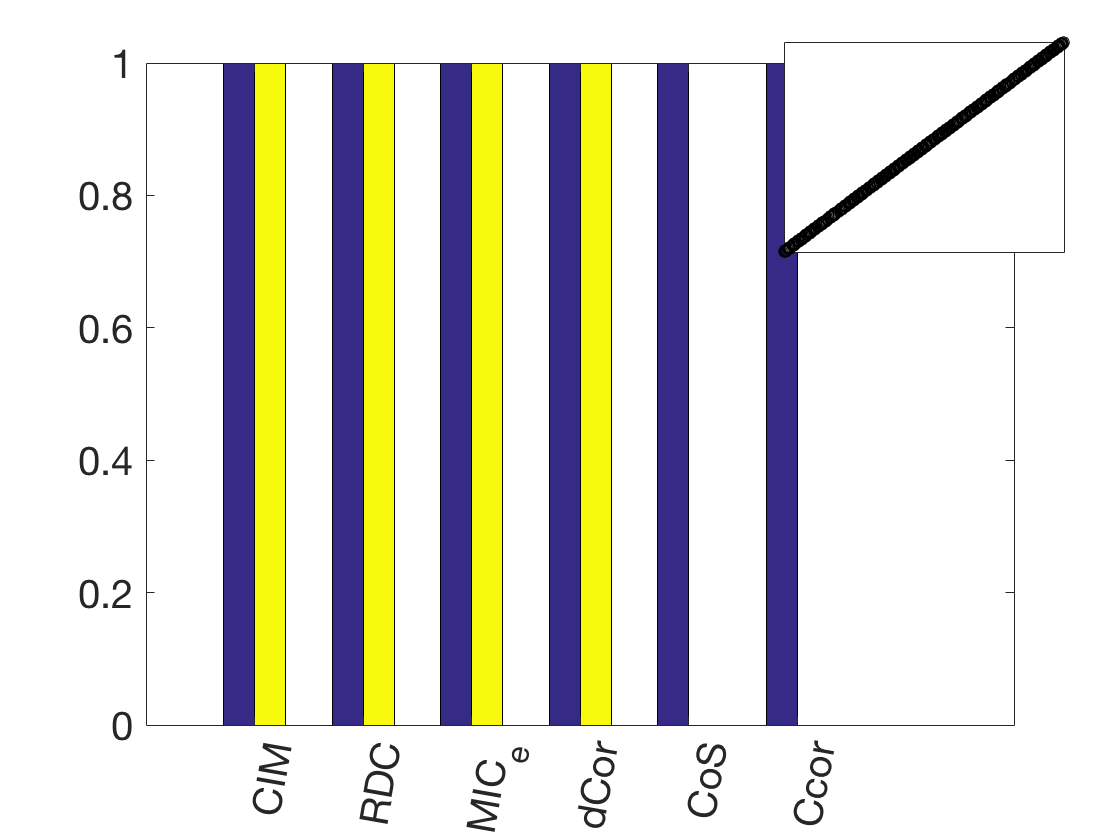}
		\caption{}
	\end{subfigure}%
	\begin{subfigure}[b]{0.20\textwidth}
		\includegraphics[width=\linewidth]{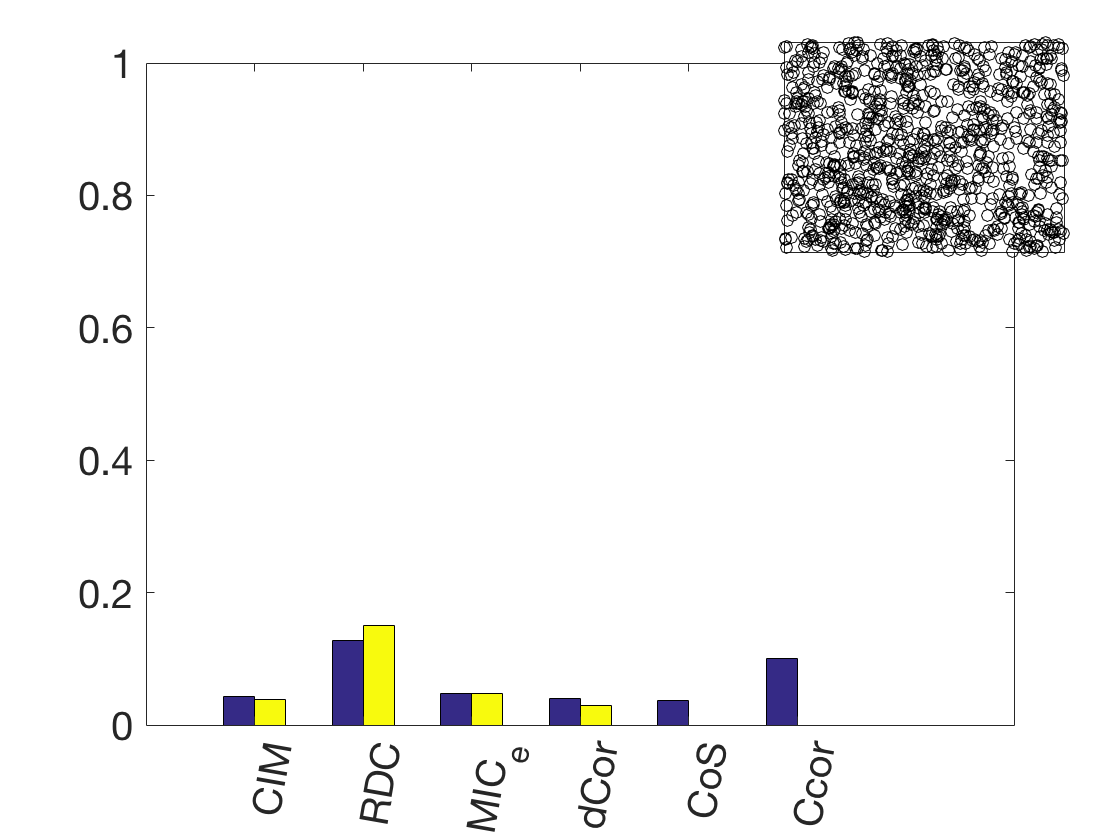}
		\caption{}
	\end{subfigure}%
	\begin{subfigure}[b]{0.20\textwidth}
		\includegraphics[width=\linewidth]{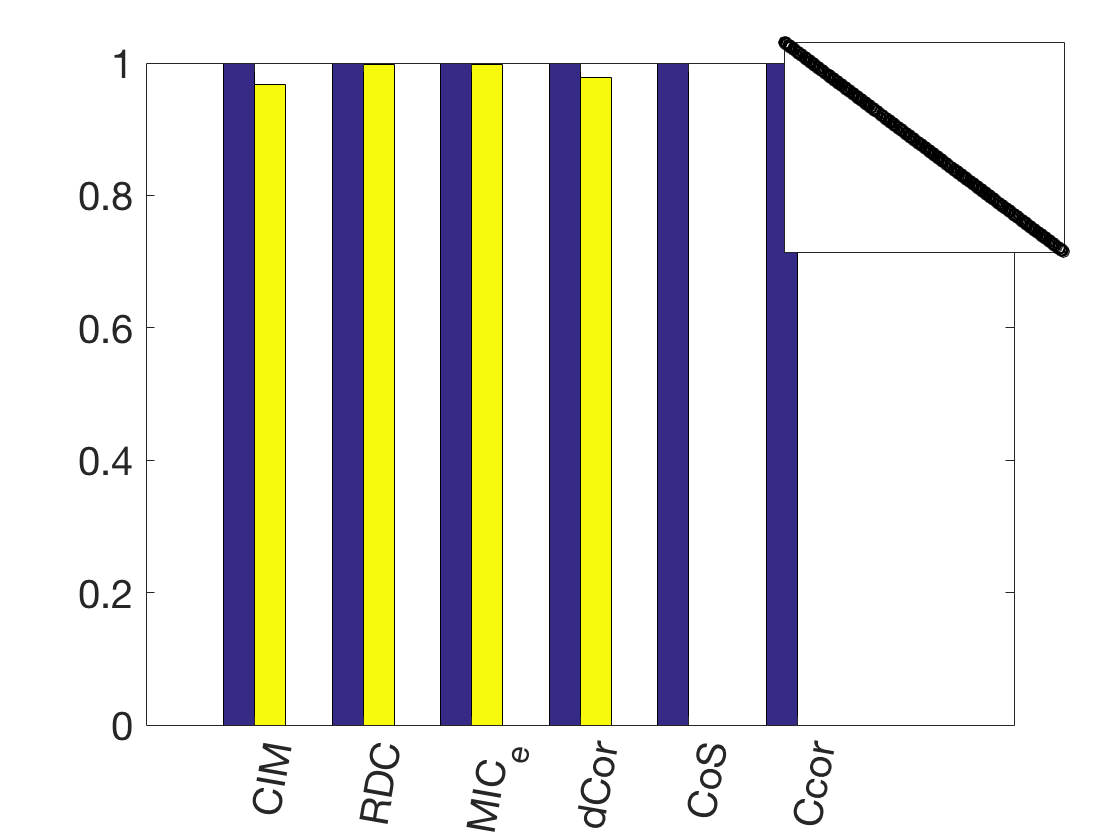}
		\caption{}
	\end{subfigure}%
	\begin{subfigure}[b]{0.20\textwidth}
		\includegraphics[width=\linewidth]{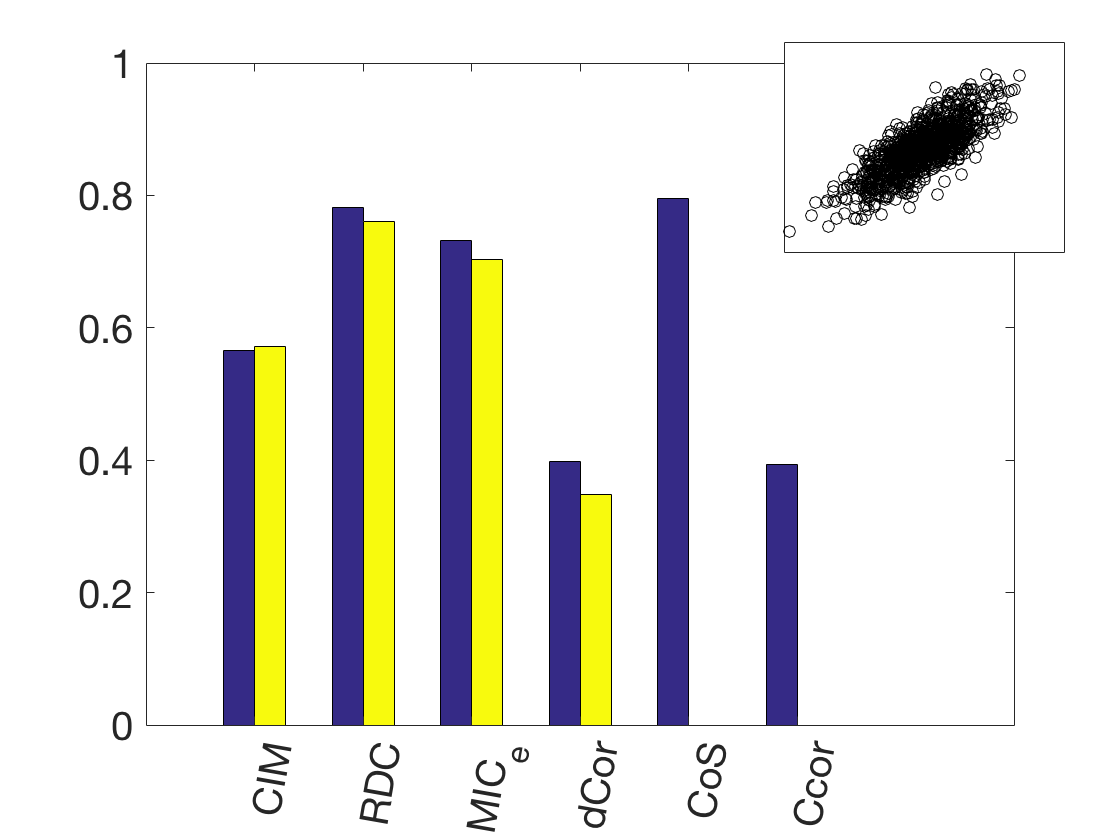}
		\caption{}
	\end{subfigure}\
	\begin{subfigure}[b]{0.20\textwidth}
		\includegraphics[width=\linewidth]{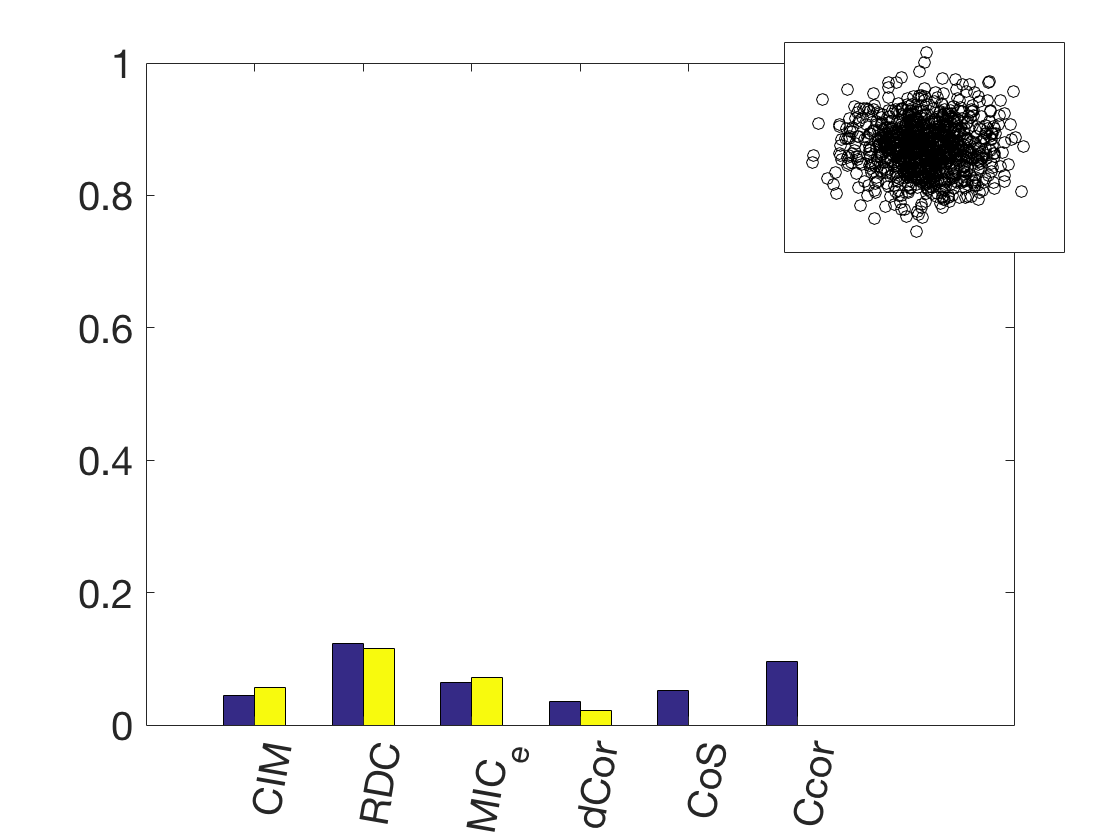}
		\caption{}
	\end{subfigure}%
	\begin{subfigure}[b]{0.20\textwidth}
		\includegraphics[width=\linewidth]{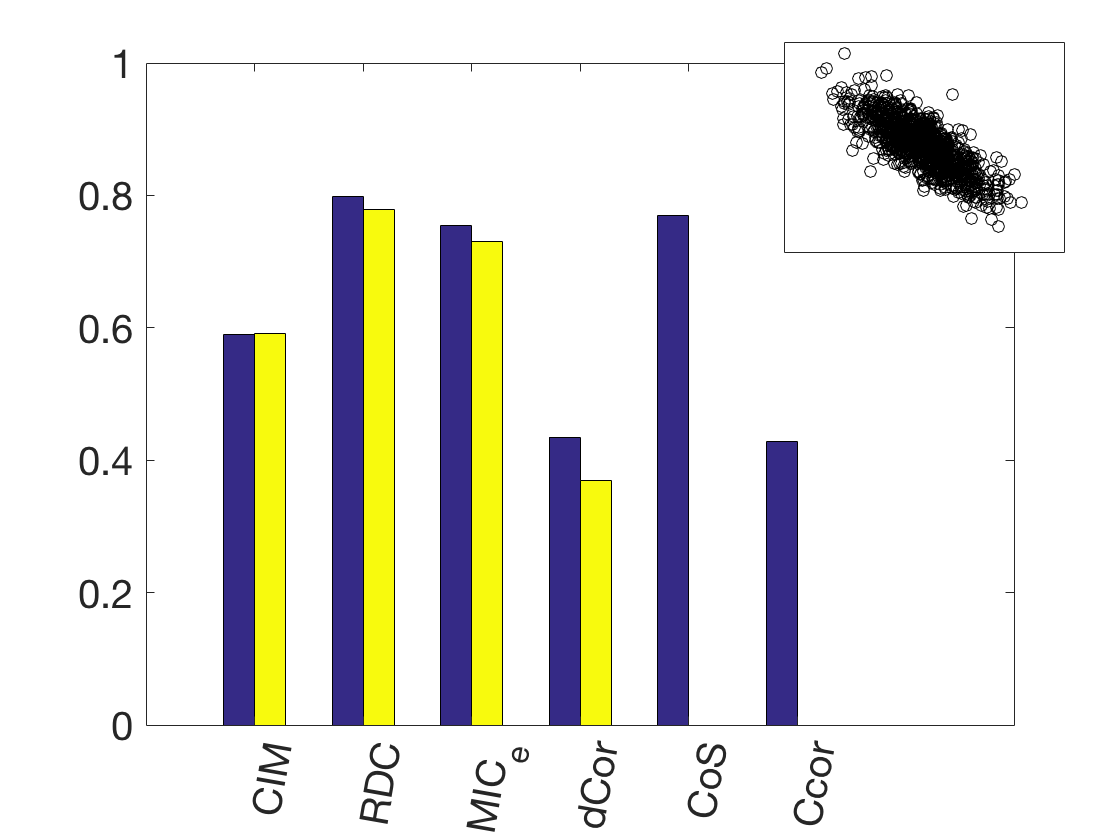}
		\caption{}
	\end{subfigure}%
	\begin{subfigure}[b]{0.20\textwidth}
		\includegraphics[width=\linewidth]{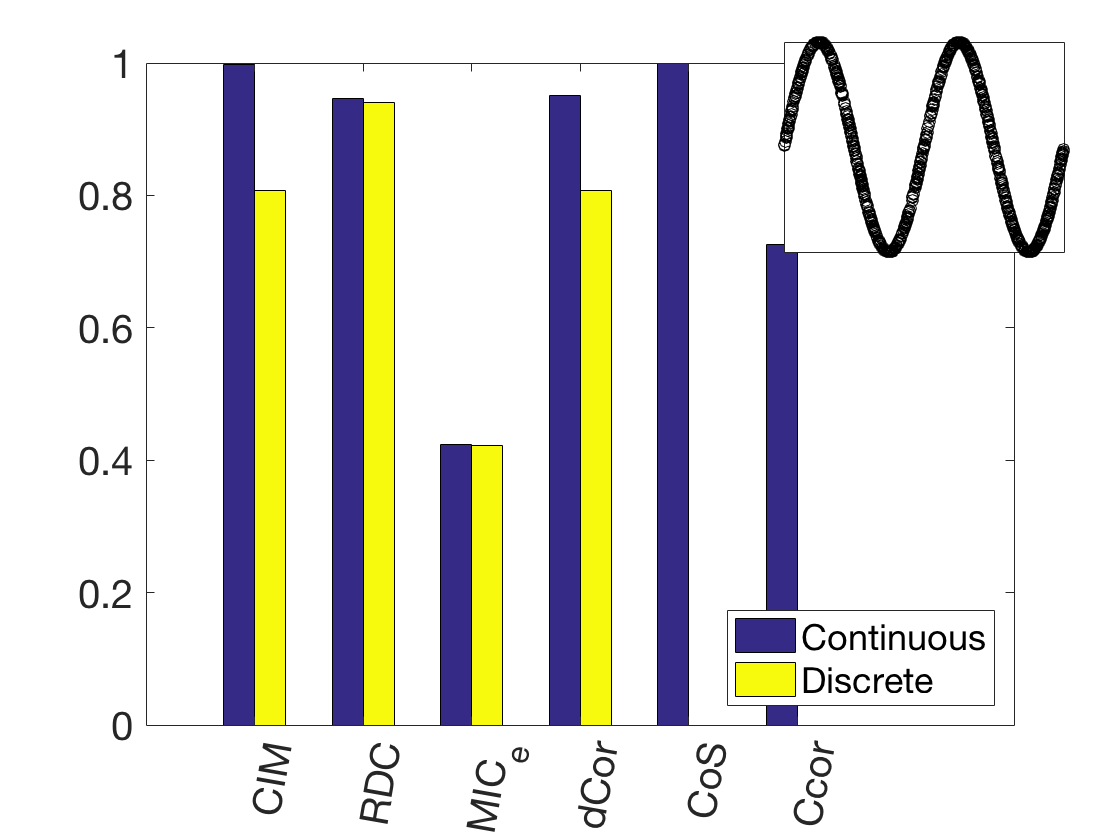}
		\caption{}
	\end{subfigure}%
	\begin{subfigure}[b]{0.20\textwidth}
		\includegraphics[width=\linewidth]{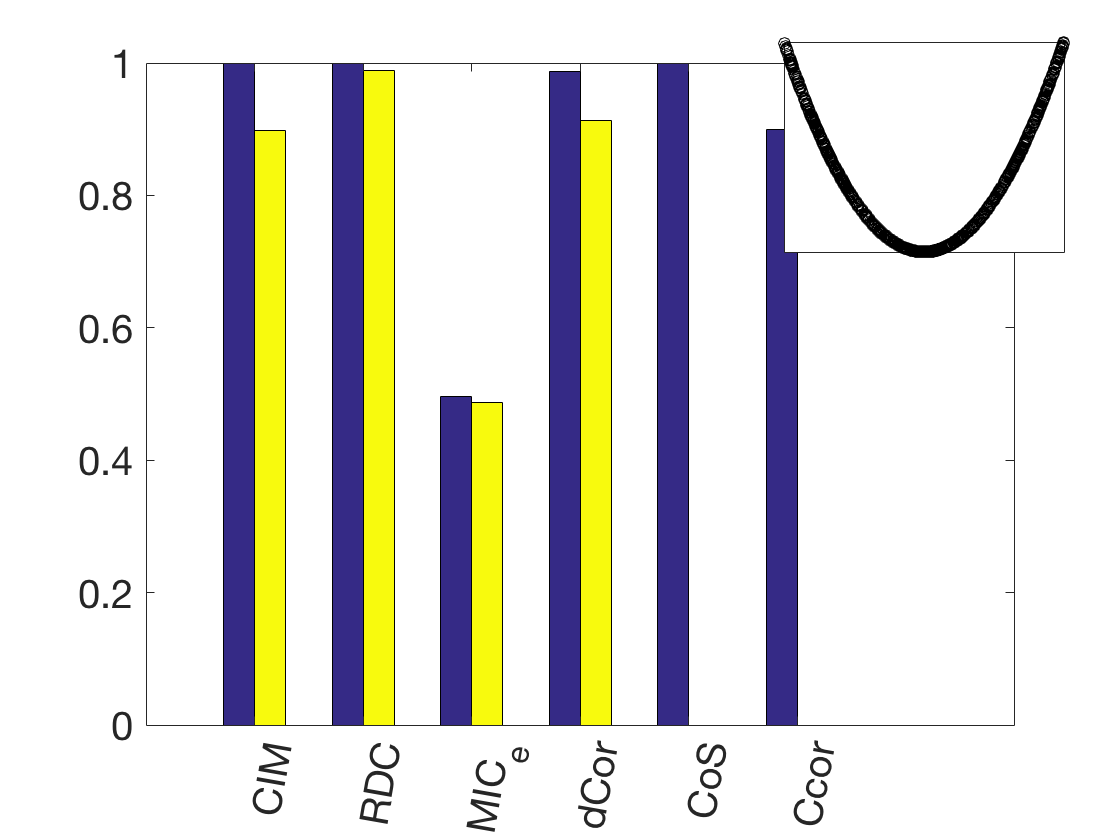}
		\caption{}
	\end{subfigure}\
	\caption{Values attained by various dependence metrics for various noiseless functional associations (a),(c),(g),(h), and (i) and Gaussian copula associations (d), (e), and (f).  (b) is the independence case, and (e) is the Gaussian copula with $\rho=0$.}
	\label{fig:wikipedia}
\end{figure}

\begin{figure}
	\centering
	\includegraphics[width=1\textwidth]{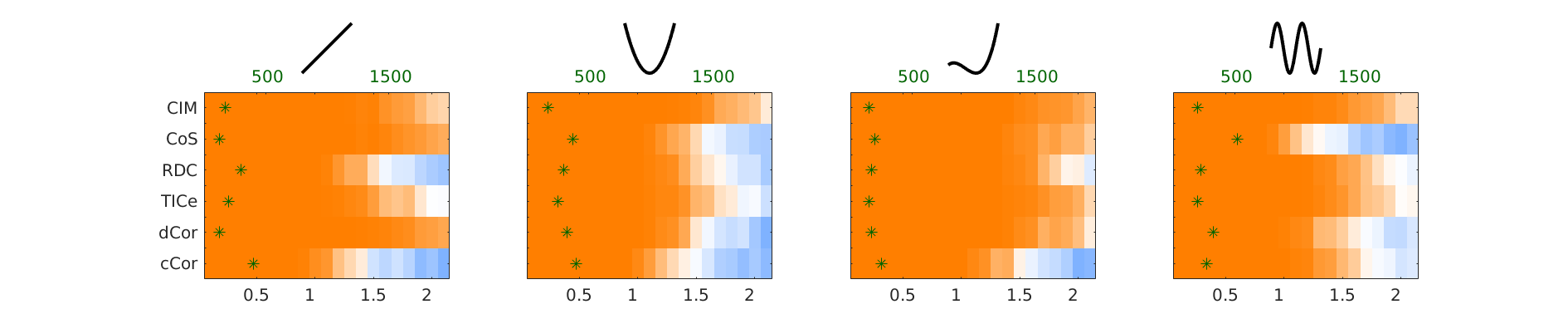}
	\includegraphics[width=1\textwidth]{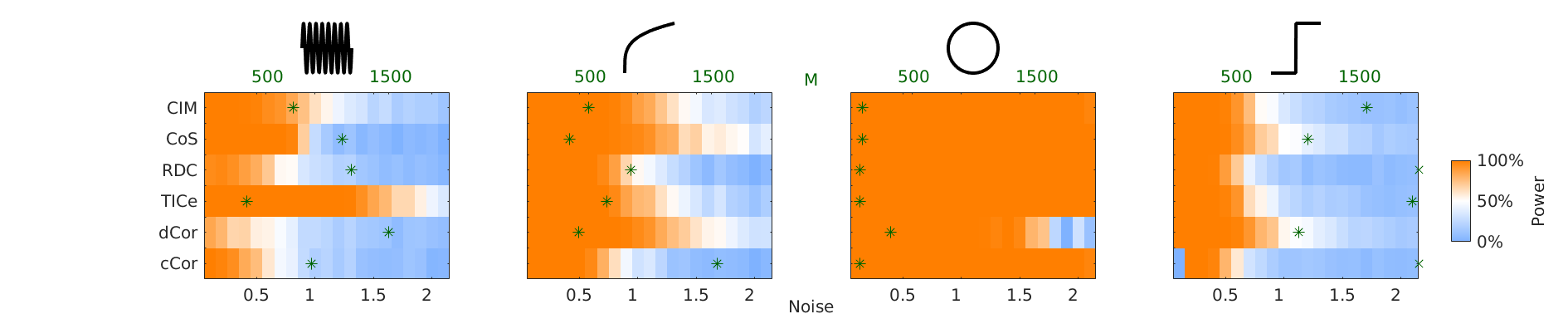}
	\caption{Statistical power of \textit{CIM} and various measures of dependence including CoS, the RDC, TICe, the dCor, and the cCor for sample size $M = 500$ and computed over 500 Monte-Carlo simulations. Noise-free form of each association pattern is shown above each corresponding power plot.  The green asterisk displays the minimum number of samples required to achieve a statistical power of $0.8$ for the different dependency metrics considered for a noise level of $1.0$.  A green plus symbol is shown if the number of samples required is beyond the scale of the plot.}
	\label{fig:cim_vs_dep}
\end{figure}

\subsection{Real Data Simulations} \label{sec:cim_real_data}
In this section, we apply \textit{CIM} metric to various data exploration and machine learning problems using real-world data, including discovering interesting dependencies in large datasets, Markov network modeling, stochastic modeling of random variables.  

\subsubsection{Data Exploration}
We begin by applying \textit{CIM} metric to real data with the primary goal of characterizing the monotonicity structure of data from different areas of science.  This is motivated by both the fields of joint probabilistic data modeling and data exploration.  More explicitly, for joint probabilistic modeling of high dimensional datasets, many copula-based techniques are beginning to be adopted in practice, including copula Bayesian networks \cite{cbn} and vine copula models \cite{vinecopula} due to their flexibility in modeling complex nonlinear relationships between random variables.  The authors of these methods advocate the use of parametric copula families for modeling local joint probability distributions. The main reason for this is that it is computationally efficient to estimate a parametric copula for a joint dataset using the relationship between the copula parameter, $\theta$, and a measure of concordance such as Kendall's $\tau$.  However, popular copula families such as the Archimedean and Gaussian families only capture monotonic dependencies.  Thus, if datasets being modeled are nonmonotonic, these copulas will fail to model all the dynamics of the underlying data.  Conversely, if the dependencies within these datasets are monotonic, these efficient procedures can be used and to fit the data to known copula families, and computationally expensive techniques such as estimating empirical copulas can be ignored.  Thus, to know whether a parametric copula family can be used, the monotonicity structure must be understood.  Therefore, from a copula modeling and analysis perspective, knowledge of the monotonicity structure provides more actionable information than Reshef's proposed nonlinearity coefficient, defined as
\begin{equation}\label{eq:reshefnonlinearity}
\theta_{Reshef} = MIC - \rho,
\end{equation}
where $\rho$ is the Pearson's correlation coefficient \cite{corrcoef}.  Interestingly, copulas can capture monotonic nonlinear relationships while the nonlinearity coefficient defined in (\ref{eq:reshefnonlinearity}).

In order to answer these questions, we process pairwise dependencies for multiple datasets related to gene expression data, financial returns data, and climate features data\footnote{Details of the datasets used and how they were processed are provided in \ref{appendix:datadetails}}.  For each pairwise dependency within a dataset, we count the number of monotonic regions by examining the number of regions detected by Algorithm \ref{alg:cim}.  Additionally, to prevent overfitting, we decide that a pairwise dependency only has one monotonic region if the value of $\hat{\tau}_{KL}$ is within 5 \% of the estimated value of \textit{CIM}.  When time-series data is compared, we only include results of dependencies where the data is considered stationary by the Dickey-Fuller test, at a significance level of $\alpha = 0.05$, and ensure time coherency between the series being compared.  Due to the CIM's reliance on copulas, the only requirement is that the data be identically distributed; independence between samples is not required because a copula can capture both inter-dependence and serial dependence within realizations of a random variable.   Additionally, we only count dependencies if the dependence metric is statistically significant at a level of $\alpha = 0.05$ and the dependence strength exceeds a value of $0.4$ as measured by \textit{CIM} estimation algorithm.  Dependencies are only calculated for all unique combinations of features \textit{within} each dataset.  With these procedures, after processing 7765 pairwise dependencies which meet the criterion above for various cancer datasets, we find that 96\% of gene expression indicators within a cancer dataset are in fact monotonic.  Similarly, we process 73 pairwise dependencies between closing price returns data for 30 major indices over a period of 30 years.  We find that 99\% of the dependencies are monotonic.  Finally, we process over 42185 pairwise dependencies of El-Nino indicators in the Pacific ocean, land temperatures of major world cities over the past 200 years, and air quality indicators in major US cities in the past 15 years.  In these datasets, termed climate related datasets, we find that 97\% of the dependencies within each dataset that meet the criterion above are monotonic.  The prevalence of monotonicity in these datasets suggests that techniques that use copula modeling with popular copula families such as the Gaussian or Archimedean families will tend to capture the underlying dynamics of the data properly.

Conversely, \textit{CIM}'s unique ability to identify regions of monotonicity can be used to identify ``interesting'' dependence structures that may warrant closer analysis by subject matter experts.  As an example, Fig.~\ref{fig:temp_nonmonotoinc} shows a nonmonotonic association pattern that was automatically discovered by \textit{CIM} between temperature patterns in Andorra and Burkina Faso, while mining over $27,000$ pairwise dependencies.  As shown in Fig.~\ref{fig:temp_nonmonotoinc}, the time-series patterns do not clearly reveal this nonmonotonic dependency structure.  This example serves to highlight the ability of \textit{CIM} to discover these kinds of dependence structures automatically.

\begin{figure}
	\centering
	
	\begin{subfigure}[t]{0.25\textwidth}
		\centering
		\includegraphics[width=1\linewidth]{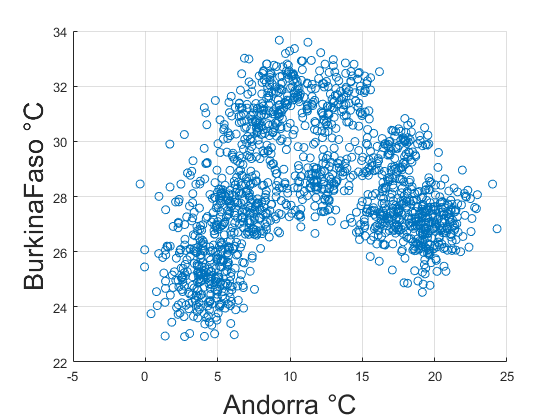}
		\caption{}
		\label{fig:ltdp1}
	\end{subfigure}
	\begin{subfigure}[t]{0.25\textwidth}
		\centering
		\includegraphics[width=1\linewidth]{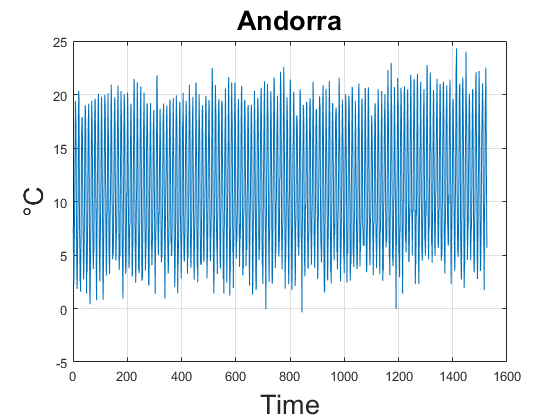}
		\caption{}
		\label{fig:ltdp2}
	\end{subfigure}
	\begin{subfigure}[t]{0.25\textwidth}
		\centering
		\includegraphics[width=1\linewidth]{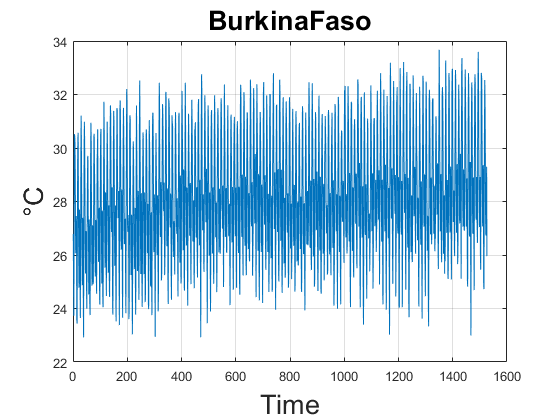}
		\caption{}
		\label{fig:ltdp3}
	\end{subfigure}
	\caption{(a) Scatter plot of time-aligned temperature data from Andorra and Burkina Faso, which reveals a nonmonotonic association pattern (b) Time-series of the temperature data from Andorra (c) Time-series of the temperature data from Burkina Faso.}
	\label{fig:temp_nonmonotoinc}
	
	\vspace*{\floatsep}
	
	\begin{subfigure}[t]{0.25\textwidth}
		\centering
		\includegraphics[width=1\linewidth]{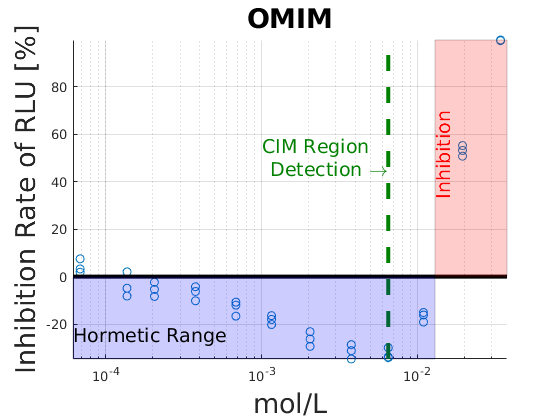}
		\caption{}
		\label{fig:nmdr1}
	\end{subfigure}
	\begin{subfigure}[t]{0.25\textwidth}
		\centering
		\includegraphics[width=1\linewidth]{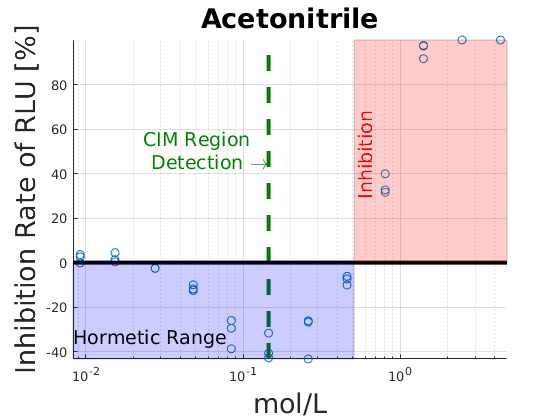}
		\caption{}
		\label{fig:nmdr2}
	\end{subfigure}
	\begin{subfigure}[t]{0.25\textwidth}
		\centering
		\includegraphics[width=1\linewidth]{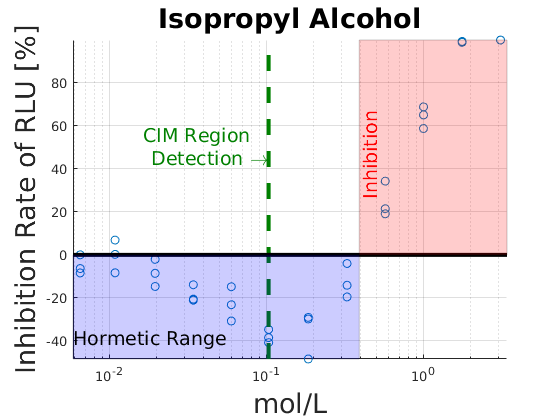}
		\caption{}
		\label{fig:nmdr3}
	\end{subfigure}
	\caption{(a) The hormetic effect of 1-octyl-3-methylimidazolium chloride ([OMIM]Cl, CAS RN. 64697-40-1) on firefly luciferase after 15 min exposure (b) the hormetic effect of acetonitrile (CAS RN. 75-05-8) on photobacteria Vibro-qinghaiensis sp. Q67 after 15 min exposure, and (c) the hormetic effect of NaBF4 (CAS RN.13755-29-8) on Vibro-qinghaiensis sp. Q67 after 12 h exposure.  The blue and red regions indicate the hormetic and inhibition regions of the dependence structure, respectively, as indicated by toxicological experts.  The green hashed line indicates the region boundary, detected by \textit{CIM} algorithm.}
	\label{fig:nmdr}
	
	\vspace*{\floatsep}
	
	\centering
	\begin{subfigure}[t]{0.25\textwidth}
		\centering
		\includegraphics[width=1\linewidth]{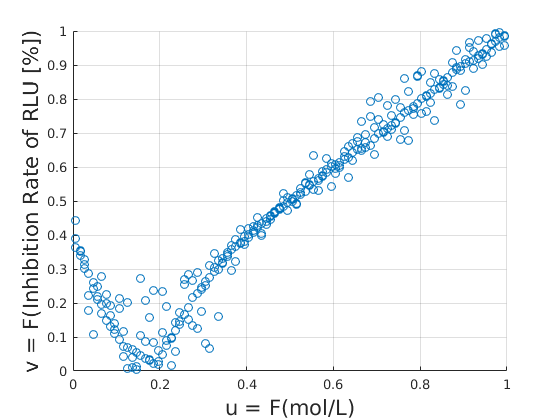}
		\caption{}
		\label{fig:nonmonotonic_cop_model1}
	\end{subfigure}
	\begin{subfigure}[t]{0.25\textwidth}
		\centering
		\includegraphics[width=1\linewidth]{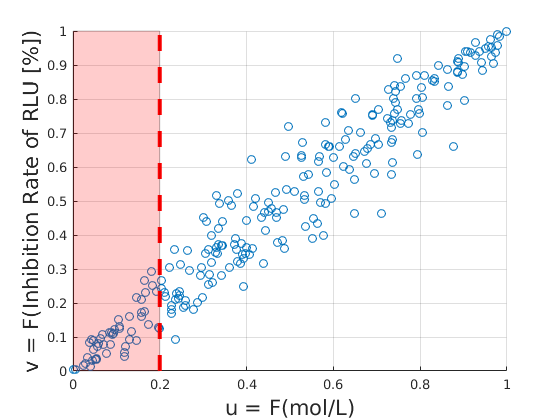}
		\caption{}
		\label{fig:nonmonotonic_cop_model2}
	\end{subfigure}
	\begin{subfigure}[t]{0.25\textwidth}
		\centering
		\includegraphics[width=1\linewidth]{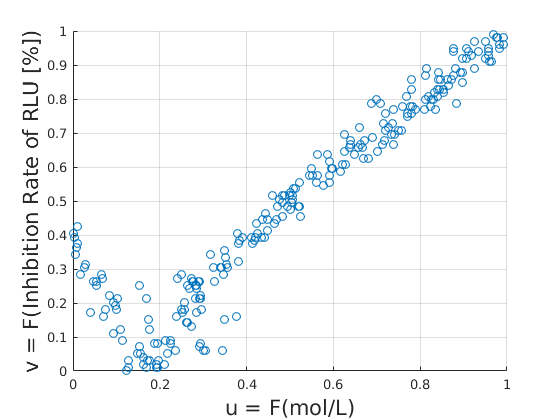}
		\caption{}
		\label{fig:nonmonotonic_cop_model3}
	\end{subfigure}
	\caption{(a) \textbf{OMIM} data from Fig.~\ref{fig:nmdr1}, interpolated with noise to provide more data-points for modeling purposes.  (b) Pseudo-Observations of a Gaussian copula model of data in (a).  The red highlighted region represents pseudo-observations which are incorrectly modeled by the Gaussian copula model. (c) Pseudo-Observations of an empirical copula model of data in (a).}
	\label{fig:nonmonotonic_cop_model}
	
\end{figure}


\subsubsection{Stochastic Modeling}
To highlight the importance of nonmonotonic dependence structures from a stochastic modeling perspective, we examine nonmonotonic dose response data from \citet{nmdr_reference}.  The data are displayed in Fig.~\ref{fig:nmdr}, and regions of importance of the relationship between the data as labeled by scientific experts in the field of toxicology is highlighted in the blue and pink regions.  Additionally, the unique ability of \textit{CIM} to automatically identify these regions is shown by the hashed green line.  The regions detected by \textit{CIM} correspond to where the monotonicity changes in the dependence structure.


To understand why regions of monotonicity are important from a data modeling perspective, we take the \textbf{OMIM} data from Fig.~\ref{fig:nmdr1} and show the difference between modeling it with a Gaussian copula and an empirical copula.  Fig.~\ref{fig:nonmonotonic_cop_model2} shows pseudo-observations drawn from a Gaussian copula model of the data displayed in Fig.~\ref{fig:nmdr1}, which are used to estimate the empirical copula model shown in Fig.~\ref{fig:nonmonotonic_cop_model3}.  The red highlighted region represents pseudo-observations that are incorrectly modeled by the Gaussian copula model.  This problem will occur with any copula model which captures only monotonic dependence structures, including for example, the popular parametric Archimedean family of models.  

This problem will in fact occur with any popular copula model, including copulas from the Archimedean family, due to the fact that the latter only capture monotonic dependence structures.  



\begin{figure}
	\centering
	\begin{subfigure}{0.3\textwidth}
		\centering
		\includegraphics[width=1\linewidth]{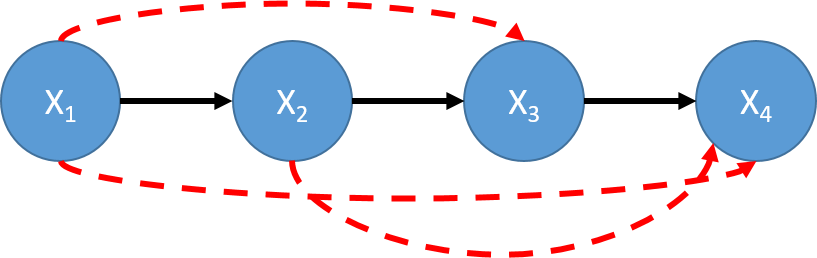}
		\caption{}
		\label{fig:network}
	\end{subfigure}\hspace{0.1\textwidth}
	\begin{subfigure}{0.3\textwidth}
		\centering
		\includegraphics[width=1\linewidth]{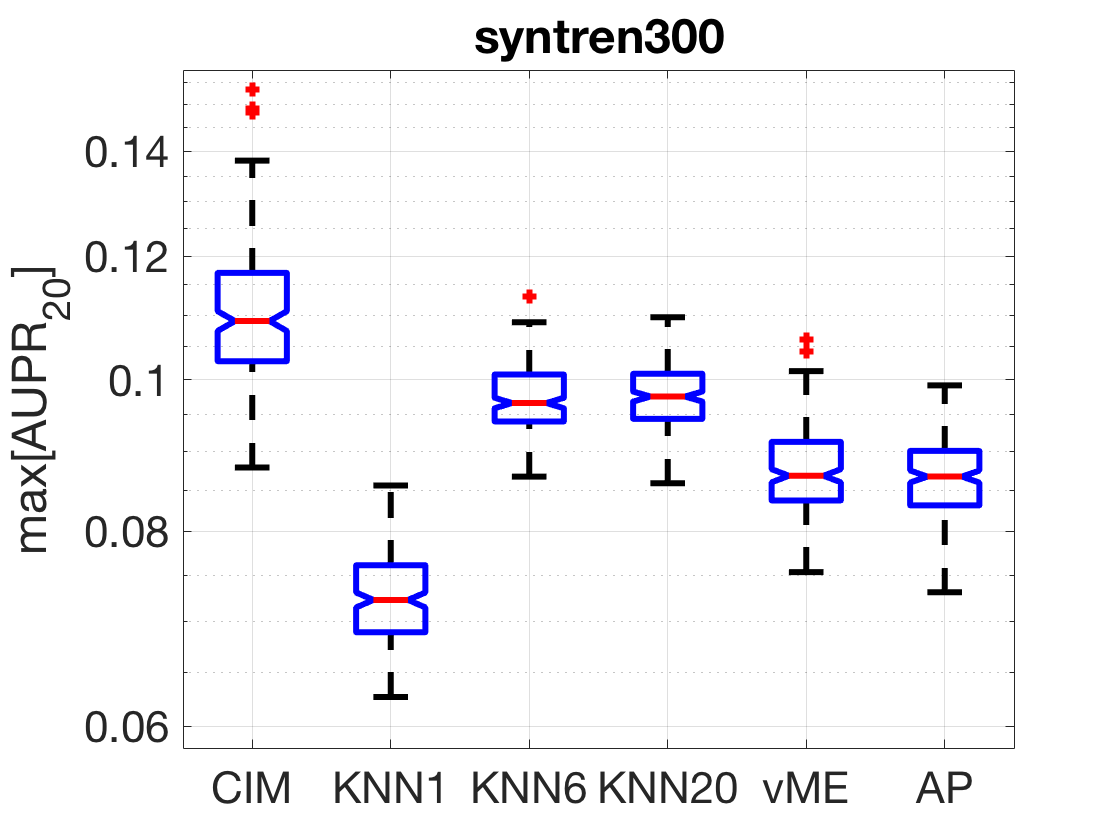}
		\caption{}
		\label{fig:mrnet_all_datasets}
	\end{subfigure}
	\caption{(a) The true Markov Chain $X_1 \rightarrow X_2 \rightarrow X_3 \rightarrow X_4$; because \textit{CIM} satisfies DPI, indirect interactions represented by the red arrows will be removed as edges in the network discovery algorithm for both ARACNe and MRNET.  (b) Results MRNET applied to SYNTREN300 dataset with a global noise level of $10$ and a local noise level of $5$, using \textit{CIM} and various estimators of mutual information (\textit{MI}), for 200 Monte-Carlo simulations.  The median performance of \textit{CIM} exceeds the next best estimator of \textit{MI}, the \textit{KNN20} by $11.77\%$, which corresponds to accurate detection of $55$ more edges from the true network.}
\end{figure}

We conclude by recognizing that although generalizations about all datasets cannot be drawn from these findings, it is prudent to understand the details of the dataset being analyzed.  More specifically, in the context of assessing dependency and modeling joint behavior probabilistically, the simulations conducted show the importance of understanding whether dependencies are monotonic or not.

\subsubsection{Markov Network Modeling}
An immediate implication of \textit{CIM} satisfying the DPI, from Section \ref{sec:dpi}, is that it can be used for network modeling and information flow of data through Markov chains.  This is done by performing repeated Max-Relevance Min-Redundancy (\textit{mRMR}) feature selection \cite{mrmr} for each variable in the dataset and construct a Maximum Relevancy Network (MRNET) \cite{mrnet}.  In principle, for a random variable $X_j \in X$, \textit{mRMR} works by ranking a set of predictor variables $X_{S_j} \subseteq \{X \setminus X_j\}$ according to the difference between the mutual information (\textit{MI}) of $X_i \in X_{S_j}$ with $X_j$ (the relevance) and the average \textit{MI} with the selected variables in $X_{S_j}$ (the redundancy).  By choosing the variable that maximizes this difference, a network can be constructed in which direct interactions between variables imply edges.  By virtue of Theorem \ref{thm:cim_dpi}, \textit{CIM} and average \textit{CIM} can be substituted for the \textit{MI} and the average \textit{MI} to apply \textit{CIM} to the MRNET reconstruction.  As for Fig.~\ref{fig:network}, using the MRNET algorithm and Theorem \ref{thm:cim_dpi}, we can readily say that

\begin{align*}
CIM(X_1,X_2) - & 0.5[CIM(X_1, X_3) + CIM(X_1, X_4)] \geq \\ 
& CIM(X_1,X_3) - 0.5[CIM(X_1, X_2) + CIM(X_1, X_4)],
\end{align*}
and 
\begin{align*}
CIM(X_1,X_2) - & 0.5[CIM(X_1, X_3) + CIM(X_1, X_4)] \geq \\ 
& CIM(X_1,X_4) - 0.5[CIM(X_1,X_2) + CIM(X_1,X_3)],
\end{align*}
yielding the connection between $X_1$ and $X_2$ in Fig.~\ref{fig:network}.  Similar reasoning can be applied to the other network connections.  Simulation results discussed in Section \ref{sec:synth_experiments} motivate the use of \textit{CIM} as a substitute for the \textit{MI}.  In that section, we compare the statistical power of \textit{CIM} to various estimators of the \textit{MI} including: 1) k-nearest neighbors (\textit{k-NN}) estimation \cite{knn_mi}, 2) adaptive partitioning (\textit{AP}) MI estimation \cite{shannonapMI}, and 3) MI estimation via von Mises expansion (\textit{vME}) \cite{vonMisesMI}, and show that \textit{CIM} is more powerful.  This suggests that \textit{CIM} is indeed a viable alternative for use in the estimation of Markov networks from datasets.  


We explore the utility of \textit{CIM} for Markov network modeling in the domain of computational biology by using the \textit{MRNET} algorithm with \textit{CIM} and the \textit{MI} estimators previously described using the gene regulatory network benchmarking tool \textbf{netbenchmark}. That tool uses over 50 datasets of known gene regulatory networks and compares the performance of a provided algorithm when different amounts of noise are added to the datasets in order to assess in a standardized way, the performance of \textit{MI} based network reconstruction algorithms \cite{netbenchmark_cite}.  The datasets used by \textbf{netbenchmark} are different than the gene expression datasets we previously analyzed for monotonicity.  The area under the precision-recall curve of the 20 most confident predictions (AUPR20) is shown for \textit{MRNET} in Fig.~\ref{fig:mrnet_all_datasets} using $CIM$ and the various estimators of the \textit{MI}, for a global noise level of $10$ and a local noise level of $5$.  The results reveal that for the 200 different variations of the \textbf{syntren300} dataset that were compared, the median performance of the \textit{MRNET} is greater when using \textit{CIM} by $11.77\%$, which corresponds to accurate detection of $55$ more edges from the true network.  Although not shown here, for a sweep of both global and local noise levels between $10$ and $50$, \textit{CIM} consistently showed greater performance.  On average, \textit{CIM} was able to discover $5.18\%$ more edges over these noise ranges, which corresponds to $24$ more edges in the \textbf{syntren300} network.  These results are not surprising, and are corroborated by the analysis and the curves displayed in Fig.~\ref{fig:cim_vs_ite}.  


\section{Conclusions and Future Work} \label{sec:conclusion}

In this paper, we have introduced a new statistic of dependence between discrete, hybrid, and continuous random variables and stochastic signals termed \textit{CIM}.  We showed that this index follows most of R\'enyi's properties for a metric of dependence, satisfies the DPI, and is self-equitable.  The implications of satisfying the DPI are discussed in the context of the Markov network construction using the DPI measures.  \textit{CIM} is then compared to other measures of mutual information and state-of-the-art nonparametric measures of dependence.  It is shown to compare favorably and similarly to these compared metrics, respectively, in various synthetic data experiments.  A unique output of \textit{CIM} estimation algorithm, the identification of the regions of monotonicity in the dependence structure, is used to analyze numerous real world datasets.  The results reveal that among all the datasets compared, at least 96\% of the statistically significant dependencies are indeed monotonic.  The simulations highlight the need to fully understand the dependence structure before applying statistical techniques.  

While \textit{CIM} is a powerful tool for bivariate data analysis, there are many directions to further this research.  A logical first step is to extend \textit{CIM} to a measure of multivariate dependence.  
Additional research can be conducted to improve the performance of \textit{CIM} algorithm for monotonic dependencies, as this is an important class of dependencies.  Another area of research is to extend \textit{CIM} to a measure of conditional dependence.  By the invariance property of copulas to strictly increasing transforms \cite{Embrechts01}, we can readily state that if $\{\mathbf{Y} \bigCI \mathbf{Z}\} | \mathbf{X}$, then $\{\mathbf{V} \bigCI \mathbf{W}\} | \mathbf{U}$, where $\mathbf{U}=(U_1, \dots, U_d) = (F_{X_1}(x_1), \dots, F_{X_d}(x_d))$, $\mathbf{V}=(V_1, \dots, V_k) = (F_{Y_1}(y_1), \dots, F_{Y_k}(y_k))$, and $\mathbf{W}=(W_1, \dots, W_n) = (F_{Z_1}(z_1), \dots, F_{Z_k}(z_n))$, and $\mathbf{X}, \mathbf{Y}$, and $\mathbf{Z}$ are random vectors of arbitrary dimensionality.  Due to the invariance property, conditional independence (and dependence) can be measured with the pseudo-observations by borrowing techniques from partial correlation.  Initial results have shown promising results for this application of \textit{CIM}.

\section*{Acknowledgments} \label{sec:ack}
We would like to acknowledge the Hume Center at Virginia Tech for its support.  We also graciously thank Vinodh N. Rajapakse, from the National Institutes of Health (NIH), and Scott Novotney and Stephen Rawls from USC's Information Sciences Institute for providing valuable feedback to improve the quality of the paper.  Finally, we are sincerely grateful to the anonymous referee's reviews, which have greatly improved the accuracy and quality of the manuscript.

\pagebreak
\appendix

\section{\textit{CIM} estimation Algorithm}\label{appendix_cim_algo}
\begin{breakablealgorithm}
	\caption{\textit{CIM}}\label{alg:cim}
	
	\begin{algorithmic}[1]
		\Function{compute-cim}{$msi$, $\alpha$}
		\State $\mathbf{si} \gets [1,\frac{1}{2},\frac{1}{4},\frac{1}{8},\dots,\frac{1}{msi}]$ \Comment{Scanning increments to be tested}
		\State $\mathbf{uv_{cfg}} \gets [\varA{u-v},\varA{v-u}]$ \Comment{Orientations of data to be tested}
		\State $m_{max} \gets 0, \mathbf{RR} \gets []$
		\For{$uv_{cfg}$ in $\mathbf{uv_{cfg}}$}
		\For{$si$ in $\mathbf{si}$}
		\State $\boldsymbol{\tau}, \mathbf{R} \gets \textproc{scan-unit-sq}(si,uv_{cfg},\alpha)$
		\State $m \gets 0$
		\ForAll{$\boldsymbol{\tau}, \mathbf{R}$} \Comment{Compute (\ref{cimeq}) for detected regions}
		\State $n_R \gets \textproc{getNumPoints}(\mathbf{R})$ \footnote{gets the number of points encompassed by the region $\mathbf{R}$}
		\State $m \gets m + \frac{n_R}{n} \tau_R$
		\EndFor
		\If{$m > m_{max}$} \Comment{Maximize (\ref{cimeq}) over all scanning increments}
		\State $m_{max} \gets m$
		\State $RR \gets R$
		\EndIf
		\EndFor
		\EndFor \\
		\Return $m_{max},\mathbf{RR}$
		\EndFunction
		
		\Function{scan-unit-sq}{$si,uv_{cfg},\alpha$}
		\State $\mathbf{R} \gets \textproc{createNewRegion}$ \footnote{creates a new region of monotonicity from the boundary where the previous region was determined to end}
		\State $\mathbf{RR} \gets []$
		\While{uniqSqNotCovered} \footnote{a variable which flags when the expansion of $\textbf{R}$ is covering the entire unit square.}
		\State $\mathbf{R} \gets \textproc{expandRegion}(si, uv_{cfg})$  \footnote{expands the region by the scanning increment amount, $si$, as depicted in Fig.~\ref{fig:cim_algo_explain} in the orientation specified by the $uu_{cfg}$}
		\State $m \gets |\hat{\tau}_{KL}(\mathbf{R})|$ \Comment{$|\hat{\tau}_{KL}|$ of the points encompassed by $\mathbf{R}$}
		\State $n_R \gets \textproc{getNumPoints}(\mathbf{R})$
		\State $\sigma_C \gets 4(1-\hat{\tau}_{KL}(\mathbf{R})^2)$ \Comment{Hypothesis test detection threshold}
		\If{$\neg \textproc{newRegion}(\mathbf{R})$} \footnote{determines if the region $\textbf{R}$ was created in the last loop iteration or not}
		\If{$m < (m_{prev}-\frac{\sigma_C}{\sqrt{n_R}} u_{1-\frac{\alpha}{2}})$}
		\State $\mathbf{RR} \gets \textproc{storeRegion}(\mathbf{R})$ \footnote{called when a boundary between regions is detected; stores the region $\textbf{R}$'s boundaries and the value of $|\tau_{KL}|$ for this region.}
		\State $\boldsymbol{m} \gets m $
		\State $\mathbf{R} \gets \textproc{createNewRegion}(uv_{cfg})$
		\EndIf
		\EndIf
		\State $m_{prev} \gets m$
		\EndWhile \\
		\Return $\boldsymbol{m}, \mathbf{RR}$
		\EndFunction
		
	\end{algorithmic}
\end{breakablealgorithm}

\section{Streaming $\tau$ Algorithm}\label{alg2_appendix}
\begin{breakablealgorithm}
	\caption{$\tau^S_{KL}$}\label{alg:taukl}
	\begin{algorithmic}[1]
		\Function{consume}{}
		\State $ii_{end} \gets ii_{end} + 1$
		\State $mm \gets mm + 1$ \Comment{Increment number of samples, m, we have processed}
		\State $mmc2 \gets mmc2 + mm - 1$  \Comment{Increment running value of ${m \choose 2}$ for denominator}
		\LeftComment{Get the subset of \textbf{u} and \textbf{v}}
		\State $\mathbf{u'} \gets \mathbf{u}(ii_{begin}:ii_{end})$, $\mathbf{v'} \gets \mathbf{v}(ii_{begin}:ii_{end})$
		\LeftComment{Compute ordering of new sample, in relation to processed samples}
		\State $\Delta \mathbf{u} \gets \mathbf{u'}(end)-\mathbf{u'}(end-1:-1:1)$ 
		\State $\Delta \mathbf{v} \gets \mathbf{v'}(end)-\mathbf{v'}(end-1:-1:1)$ 
		\State $u^+ \gets \Sigma \left[ \mathbbm{1}{\left( \Delta \mathbf{u} > 0 \cap \Delta \mathbf{v} \neq 0 \right) } \right], u^- \gets \Sigma \left[ \mathbbm{1}{\left( \Delta \mathbf{u} < 0 \cap \Delta \mathbf{v} \neq 0 \right) } \right]$
		\State $v^+ \gets \Sigma \left[ \mathbbm{1}{ \left( \Delta \mathbf{v} > 0 \cap \Delta \mathbf{u} \neq 0 \right) } \right], v^- \gets \Sigma \left[ \mathbbm{1}{\left( \Delta  \mathbf{v} < 0 \cap \Delta \mathbf{u} \neq 0 \right) } \right]$
		\LeftComment{Compute the running numerator, $K$, of $\tau_{KL}$}
		\If{$u^+ < u^- $}
		\State $kk \gets v^- - v^+$
		\Else
		\State $kk \gets v^+ - v^-$
		\EndIf
		\State $K \gets K + kk$
		\LeftComment{Count number of times values in $u$ and $v$ repeat}
		\State $\textbf{uMap}(\mathbf{u'}(end)) \gets \textbf{uMap}(\mathbf{u'}(end)) + 1, uu \gets uu + \textbf{uMap}(\mathbf{u'}(end)) - 1$
		\State $\textbf{vMap}(\mathbf{v'}(end)) \gets \textbf{vMap}(\mathbf{v'}(end)) + 1, vv \gets vv + \textbf{vMap}(\mathbf{v'}(end)) - 1$
		\LeftComment{Compute threshold for determining if data is hybrid via a threshold heuristic}
		\If{$\neg \bmod(mm, OOCTZT)$}
		\State $mmG \gets mmG + 1$, $ctzt \gets ctzt + mmG  - 1$
		\EndIf
		\State $uuCtz \gets (uu \leq ctzt)$, $vvCtz \gets (vv \leq ctzt)$
		\LeftComment{Compute the denominator of $\tau_{KL}$ depending on whether data was hybrid or not}
		\If{$(uuCtz \cap vv>0) \cup (vvCtz \cap uu>0)$}
		\State $tt \gets \textbf{max}(uu,vv)$
		\State $den \gets \sqrt{mmc2-tt} \sqrt{mmc2-tt}$
		\Else
		\State $den \gets \sqrt{mmc2-uu} \sqrt{mmc2-vv}$
		\EndIf
		\If{$K==0 \cap den==0$}
		\State $\tau_{KL} = 0$
		\Else
		\State $\tau_{KL} = \frac{K}{den}$
		\EndIf
		\Return $\tau_{KL}$
		\EndFunction
		
	\end{algorithmic}
\end{breakablealgorithm}

\section{Real-world Data Experiments Details}\label{appendix:datadetails}
Real-world data analyzed for the monotonicity results shown above in Section \ref{sec:cim_real_data} was derived from online sources.  

\subsection{Gene Expression Data}
The gene expression related data was downloaded from the Broad Institute at the URL: \url{http://portals.broadinstitute.org/cgi-bin/cancer/datasets.cgi}.  The enumeration below lists the specific files which were downloaded from the URL provided above (all with the .gct extension).

\begin{enumerate}
	\begin{multicols}{3}
		\item ALL
		\item beer\_lung\_for\_p53
		\item Breast\_A
		\item Breast\_B
		\item Children\_NE
		\item Common\_miRNA
		\item crash\_and\_burn
		\item DLBCL\_A
		\item DLBCL\_B
		\item DLBCL\_C
		\item DLBCL\_D
		\item Erythroid
		\item GCM\_All
		\item glioma\_classic\_hist
		\item glioma\_nutt\_combo
		\item hep\_japan
		\item HL60
		\item HSC\_FDR002
		\item Iressa\_Patient1\_ams
		\item leuGMP
		\item leukemia.top1000
		\item lung\_datasetB\_outcome
		\item LungA\_1000genes
		\item met
		\item miGCM\_218
		\item MLL\_AF9
		\item mLung
		\item Multi\_A
		\item Multi\_B
		\item Normals\_Leu
		\item Novartis\_BPLC.top1000
		\item PDT\_miRNA
		\item Rap3hour\_control
		\item Rap24hour\_control
		\item Res\_p0005
		\item Sens\_p001
		\item Sens\_p0005
	\end{multicols}
	
	\begin{multicols}{2}
		\item medullo\_datasetC\_outcome
		\item lung\_annarbor\_outcome\_only
		\item med\_macdonald\_from\_childrens
		\item megamiR\_data.normalized.log2.th6
	\end{multicols}
	\item Myeloid\_Screen1\_newData\_021203\_ams.AML\_poly\_mono
	\item Sanger\_Cell\_Line\_project\_Affymetrix\_QCed\_Data\_n798
	
\end{enumerate}
The files, natively in GCT format, were stripped of metadata and converted to CSV files, and scanned for significant dependencies ($\alpha < 0.05$).  The number of regions for the significant dependencies were then counted to determine the number of monotonic regions.  The script to perform the conversion from GCT to CSV is provided at: \url{https://github.com/stochasticresearch/depmeas/tree/master/test/python/gcttocsv.py}.  Additionally, the Matlab scripts to process the pairwise dependencies and produce the monotonicity results is provided at: \url{https://github.com/stochasticresearch/depmeas/tree/master/test/analyze_cancerdata.m}.  

\subsection{Financial Returns Data}
The financial returns related data was downloaded from both \url{finance.yahoo.com} and \url{Investing.com}.  We query the web API of these websites to download all available historical data (from Jan 1985 - Jan 2017) for the following indices:

\begin{enumerate}
	\begin{multicols}{4}
		
		\item A50
		\item AEX
		\item AXJO
		\item BFX
		\item BSESN
		\item BVSP
		\item CSE
		\item DJI
		\item FCHI
		\item FTSE
		\item GDAXI
		\item GSPC
		\item HSI
		\item IBEX
		\item ITMIB40
		\item IXIC
		\item JKSE
		\item KOSPI
		\item KSE
		\item MICEX
		\item MXX
		\item NK225
		\item NSEI
		\item OMXC20
		\item OMXS
		\item PSI20
		\item SETI
		\item SPTSX
		\item SSEC
		\item SSMI
		\item STOXX50E
		\item TA25
		\item TRC50
		\item TWII
		\item US2000
		\item XU100
		
	\end{multicols}
\end{enumerate}

Less than 1\% of the downloaded data was missing.  In order to ease processing, missing data fields were imputed with the last known index price.  The first difference of the stock prices was calculated in order to derive the returns data.  The returns data was first determined to be stationary by the Dickey-Fuller test.  After these procedures, pairwise dependencies between coherently aligned time series were computed.  Because different amounts of historical data were available for the various indices, only the subset of data which belonged to both time series was tested for a significant dependency.  

The script to perform the missing data imputation and raw data normalization is provided at: \url{https://github.com/stochasticresearch/depmeas/tree/master/test/python/normalizeStocksFiles.py}.  Additionally, the Matlab scripts to process the pairwise dependencies and produce the monotonicity results is provided at: \url{https://github.com/stochasticresearch/depmeas/tree/master/test/analyze_stocksdata.m}.  Finally, the raw stocks data is provided at \url{https://figshare.com/articles/Stocks_Data/4620325}.

\subsection{Climate Data}
The climate data was downloaded from the following links: 

\begin{enumerate}
	\item \url{https://www.kaggle.com/uciml/el-nino-dataset}
	\item \url{https://www.kaggle.com/sogun3/uspollution}
	\item \url{https://tinyurl.com/berkeleyearth}
\end{enumerate}

The El-Nino data was normalized by extracting the zonal winds, meridional winds, humidity, air temperature, and sea surface temperature data from the dataset.  The code to extract these features, and all to be described features from other climate related datasets is provided at: \url{https://github.com/stochasticresearch/depmeas/tree/master/test/python/normalizeClimateFiles.py}.  Because these datapoints were collected over multiple decades and large chunks of missing data existed, each chunk of contiguous data (with respect to time) was analyzed separately.  The script to identify these chunks and coherently compute pairwise dependencies after checking for stationarity is provided at: \url{https://github.com/stochasticresearch/depmeas/tree/master/test/analyze_elnino.m}.  

The global land temperatures data was normalized by extracting the land temperature for each country over the available date ranges.  Again, due to significant chunks of missing data, each chunk of contiguous data was analyzed separately.  The script to identify these chunks and coherently compute pairwise dependencies after checking for stationarity is provided at: \url{https://github.com/stochasticresearch/depmeas/tree/master/test/analyze_landtemperatures.m}.

The US pollution data was normalized by extracting NO\textsubscript{2} Air Quality Indicators (AQI), O\textsubscript{3} AQI, SO\textsubscript{2} AQI, and CO AQI for each location over the available date ranges.  Again, due to significant chunks of missing data, each chunk of contiguous data was analyzed separately.  The script to identify these chunks and coherently compute pairwise dependencies after checking for stationarity is provided at: \url{https://github.com/stochasticresearch/depmeas/tree/master/test/analyze_pollution.m}.  

\bibliographystyle{elsarticle-num-names}

\end{document}